

\documentclass[final,3p,times]{elsarticle}

\usepackage{amsmath, amsfonts, amssymb, amsthm, mathtools, bm}
\usepackage{thmtools}
\usepackage{thm-restate}

\theoremstyle{plain}
\newtheorem{theorem}{Theorem}[section]

\newtheorem{definition}[theorem]{Definition}

\newtheorem{assumption}[theorem]{Assumption}

\usepackage{graphicx}       
\usepackage{booktabs}       
\usepackage{siunitx}

\sisetup{
  detect-all,
  group-digits          = integer,
  group-separator       = {,},
  group-minimum-digits  = 4,
  table-number-alignment= center,
  table-figures-integer = 3,
  table-figures-decimal = 2,
  table-figures-exponent= 2
}

\usepackage{longtable}
\usepackage{multirow} 
\usepackage{caption}       
\usepackage{array}         
\usepackage{siunitx}       
\usepackage{subcaption}
\usepackage{pgfplots}
\usepackage{tikz}
\usetikzlibrary{arrows.meta,positioning,fit}
\usepgfplotslibrary{groupplots}
\pgfplotsset{compat=1.18}


\usepackage{enumitem}       

\usepackage[ruled,vlined]{algorithm2e} 

\usepackage{float}          

\usepackage{hyperref}       


\newcommand{\matr}[1]{\bm{#1}}        
\newcommand{\tr}{\operatorname{tr}}

\journal{Computers and Electrical Engineering}

\begin{document}

\begin{frontmatter}


\title{Data Collaboration Analysis with Orthonormal Basis Selection and Alignment}

\tnotetext[titlelabel]{Earlier versions of this article have been circulated under the titles "Data Collaboration Analysis Over Matrix Manifolds" and "Data Collaboration Analysis with Orthogonal Basis Alignment."}

\author[inst1]{Keiyu Nosaka} 
\ead{s2430118@u.tsukuba.ac.jp}

\author[inst1]{Yamato Suetake}
\ead{s2620455@u.tsukuba.ac.jp}

\author[inst2]{Yuichi Takano}
\ead{ytakano@sk.tsukuba.ac.jp}

\author[inst2]{Akiko Yoshise \corref{cor1}} 
\ead{yoshise@sk.tsukuba.ac.jp}

\cortext[cor1]{Corresponding author}

\affiliation[inst1]{organization={Graduate School of Science and Technology, University of Tsukuba},
            addressline={1-1-1}, 
            city={Tennodai, Tsukuba},
            postcode={305-8573}, 
            state={Ibaraki},
            country={Japan}}


\affiliation[inst2]{organization={Institute of Systems and Information Engineering, and the Center for Artificial Intelligence Research, Tsukuba Institute for Advanced Research (TIAR),  University of Tsukuba},
            addressline={1-1-1}, 
            city={Tennodai, Tsukuba},
            postcode={305-8573}, 
            state={Ibaraki},
            country={Japan}}

\begin{abstract}

Data Collaboration (DC) enables multiple parties to jointly train a model by sharing only linear projections of their private datasets. The core challenge in DC is to align the bases of these projections without revealing each party's secret basis. While existing theory suggests that any target basis spanning the common subspace should suffice, in practice, the choice of basis can substantially affect both accuracy and numerical stability. We introduce \textbf{Orthonormal Data Collaboration (ODC)}, which enforces orthonormal secret and target bases, thereby reducing alignment to the classical Orthogonal Procrustes problem, which admits a closed-form solution. We prove that the resulting change-of-basis matrices achieve \emph{orthogonal concordance}, aligning all parties' representations up to a shared orthogonal transform and rendering downstream performance invariant to the target basis. Computationally, ODC reduces the alignment complexity from
\(
O\!\left(\min\{a(c\ell)^2,\; a^2 c \ell\}\right)
\)
to
\(
O(a c \ell^2)
\),
and empirical evaluations show up to \(100\times\) speed-ups with equal or better accuracy across benchmarks. ODC preserves DC's one-round communication pattern and privacy assumptions, providing a simple and efficient drop-in improvement to existing DC pipelines.

\end{abstract}

\begin{keyword}
Data Collaboration Analysis \sep Orthogonal Procrustes Problem \sep Privacy-Preserving Machine Learning


\end{keyword}

\end{frontmatter}



\section{Introduction}
\label{section:introduction}

The effectiveness of machine learning (ML) algorithms depends strongly on the quality, diversity, and comprehensiveness of the training datasets. High-quality datasets enhance predictive performance and improve the generalization capabilities of ML models across diverse real-world applications. To overcome the biases and limitations inherent to datasets from a single source, data aggregation across multiple sources has become a common practice. Nonetheless, this practice introduces significant ethical and privacy concerns, particularly regarding unauthorized access and disclosure of sensitive user information. Recent literature highlights a growing awareness and reports an increasing number of privacy breaches linked to large-scale personal data collection and analysis~\cite{DataBreaching}.

To address these privacy concerns, Privacy-Preserving Machine Learning (PPML) has emerged as a crucial approach, enabling the secure and effective use of sensitive data—such as medical records, financial transactions, and geolocation data—without compromising individual privacy. Among various PPML methodologies, \textit{Federated Learning} (FL)~\cite{FL1} has gained prominence for enabling collaborative model training across decentralized data sources while safeguarding data privacy by limiting direct data exchange.

While FL offers promising solutions, it faces notable challenges in \textit{cross-silo} settings, primarily because it relies on iterative communication among participating entities during training. This communication overhead constitutes a significant obstacle, particularly in privacy-sensitive sectors such as healthcare and finance, where institutions often operate under stringent regulatory constraints and within isolated network infrastructures. Furthermore, FL inherently lacks formal privacy guarantees, necessitating the use of supplementary privacy-enhancing mechanisms to achieve explicit privacy assurances.

A widely adopted strategy to mitigate these shortcomings is \emph{Differential Privacy} (DP)~\cite{DP,NbaFL}, which protects individual records by adding carefully calibrated noise to the learning process. Although DP provides strong formal privacy guarantees, it necessarily trades off model utility for privacy~\cite{PPML}, and the resulting degradation in accuracy can substantially limit its practicality in real-world deployments, especially when data are already scarce or biased because they originate from a small number of institutions.

To overcome the limitations of conventional PPML methods, \textit{Data Collaboration} (DC) has emerged as a promising alternative~\cite{DCframework1, DCframework2}. Unlike FL, DC leverages the central aggregation of \textit{secure intermediate representations} computed locally from raw data. Specifically, each participating entity independently transforms its dataset using a basis matrix that it has privately selected. Subsequently, a central aggregator aligns these transformed datasets within a common representation space by constructing a shared target basis—\emph{without knowledge of the secret bases}. This approach facilitates collaborative model training while ensuring robust privacy preservation, thus eliminating the requirement for iterative communication among semi-honest parties~\cite{DCprivacy}.

Recent advancements in DC have expanded its applicability to scenarios involving more serious adversarial threats. Notably, methods have been proposed to counter re-identification attacks targeting intermediate representations~\cite{NRI-DC}, ensuring that transformed datasets remain unlinkable to their sources. Additionally, integrating differential privacy mechanisms has been explored to further mitigate the risks associated with malicious collusion among participants~\cite{DC-DP}. Moreover, hybrid approaches such as FedDCL~\cite{FedDCL} demonstrate how DC's advantages can be effectively combined with FL paradigms, resulting in scalable, privacy-preserving collaborative ML solutions.

Although DC has empirically demonstrated significant potential in balancing privacy and utility without iterative communication, its theoretical foundations remain underdeveloped. Existing theoretical analyses typically assume that \textit{any} target basis spanning the same subspace as the secret bases is sufficient. However, recent empirical studies indicate that selecting the target basis substantially influences the performance of downstream models~\cite{KawakamiDC, publication1}. Specifically, choosing target bases that disproportionately emphasize certain directions in the feature space can degrade model accuracy and utility. These findings highlight a clear discrepancy between current theoretical guarantees and observed empirical behavior, emphasizing the need for improved basis selection and alignment strategies within the DC framework.

\subsection{Our Contributions}

We propose a novel framework, termed \textit{Orthonormal Data Collaboration (ODC)}, to bridge the gap between DC's theoretical foundations and empirical performance. The central innovation of ODC is the explicit enforcement of \textit{orthonormality} constraints on both the secret and target bases during basis selection and alignment. This design leverages common practices in DC, as conventional dimensionality-reduction methods, such as Principal Component Analysis (PCA) and Singular Value Decomposition (SVD), naturally produce orthonormal bases, thereby imposing minimal additional overhead.

The orthonormality constraint leads to two significant theoretical advantages:

\begin{enumerate}

    \item \textbf{Alignment Efficiency:} The basis alignment simplifies precisely to the classical \textit{Orthogonal Procrustes Problem}~\cite{procrustessolution}, for which a closed-form analytical solution is available. This simplification significantly reduces the computational complexity relative to existing DC approaches.

    \item \textbf{Orthogonal Concordance:} All orthonormal target bases spanning the same subspace as the secret bases result in identical downstream model performance. Thus, the specific choice of an orthonormal target basis becomes inconsequential, effectively resolving previous instabilities in DC.

\end{enumerate}

Empirical evaluations confirm these theoretical results by: (a) demonstrating that ODC achieves substantially faster alignment compared to state-of-the-art DC methods, validating its theoretical complexity advantages; (b) highlighting the practical benefits of orthogonal concordance in stabilizing and enhancing model accuracy; and (c) assessing the robustness of ODC under realistic conditions where theoretical assumptions may not strictly hold.

Notably, the ODC framework extends traditional DC by adding only a single practical assumption—the orthonormality of secret bases. Empirical results demonstrate that relaxing this assumption notably degrades performance, underscoring its necessity. Because orthonormal bases already emerge naturally in existing DC workflows (e.g.,~\cite{NRI-DC}, DP integration~\cite{DC-DP}, and FL integration~\cite{FedDCL}), ODC can seamlessly integrate into current pipelines. Moreover, our empirical comparisons position ODC within the broader context of PPML, highlighting its advantages over mainstream techniques such as DP-based perturbations and FL.

\subsection{Notations and Organization}\label{sec:notation-organization}

\paragraph{Notations}
For a positive integer $c$, we write $[c] := \{1,2,\dots,c\}$.
We denote by $\mathbb{R}^{p\times q}$ the set of real $p\times q$ matrices.
For a matrix $\matr{M}$, $\matr{M}^\top$ denotes the transpose, $\matr{M}^\dagger$ the Moore--Penrose
pseudoinverse~\cite{pseudoinverse} (e.g., $M^\dagger = (M^\top M)^{-1}M^\top$ when $M$ has full column rank), $\|\matr{M}\|_F$ the Frobenius norm, and $\mathrm{tr}(\matr{M})$ the matrix trace.
We write
\begin{equation}
\mathcal{GL}(\ell):=\{\matr{G}\in\mathbb{R}^{\ell\times\ell}:\mathrm{rank}(\matr{G})=\ell\},
\qquad
\mathcal{O}(\ell):=\{\matr{O}\in\mathbb{R}^{\ell\times\ell}:\matr{O}^\top \matr{O}=\matr{O}\matr{O}^\top=\matr{I}_\ell\}
\end{equation}
for the general linear and orthogonal groups, respectively.
For a matrix $\matr{M}$, its (thin) SVD is written as $\matr{M}=\matr{U}\matr{\Sigma}\matr{V}^\top$.

In the DC setting, each user $i\in[c]$ holds a private dataset $\matr{X}_i\in\mathbb{R}^{n_i\times m}$ (with labels $\matr{L}_i$)
and a common anchor dataset $\matr{A}\in\mathbb{R}^{a\times m}$.
User $i$ selects a secret basis $\matr{F}_i\in\mathbb{R}^{m\times \ell}$ and releases only the intermediate representations
\begin{equation}
\tilde{\matr{X}}_i := \matr{X}_i \matr{F}_i\in\mathbb{R}^{n_i\times \ell},
\qquad
\matr{A}_i := \matr{A}\matr{F}_i\in\mathbb{R}^{a\times \ell}.
\end{equation}
The analyst constructs change-of-basis matrices $\matr{G}_i\in\mathbb{R}^{\ell\times\ell}$ and forms aligned data
\begin{equation}
\hat{\matr{X}}_i := \tilde{\matr{X}}_i \matr{G}_i.
\end{equation}
Contemporary DC methods typically allow $\matr{G}_i\in\mathcal{GL}(\ell)$, whereas ODC enforces $\matr{G}_i\in\mathcal{O}(\ell)$.

\paragraph{Organization}
\S~\ref{section: preliminaries} reviews the DC protocol as well as its privacy and communication properties.
\S~\ref{section: related works} discusses related work on Procrustes-based alignment and PPML.
\S~\ref{section: Basis Alignment} summarizes existing DC basis-alignment methods and their limitations.
\S~\ref{section: ODC} presents ODC, derives its closed-form Procrustes alignment, and proves orthogonal concordance.
\S~\ref{sec:efficiency-eval}--\S~\ref{sec:anchor-construction} report empirical studies on time efficiency, robustness under relaxed assumptions, concordance,
and anchor construction. \S~\ref{sec:conclusion} concludes the paper.

\paragraph{Illustrative Summary}
Fig.~\ref{fig: DCoverview} summarizes the ODC workflow. Each user privately selects an orthonormal basis $\matr{F}_i$ and releases only the intermediate representations $\tilde{\matr{X}}_i=\matr{X}_i\matr{F}_i$ and $\matr{A}_i=\matr{A}\matr{F}_i$. Using the anchor representations $\matr{A}_i (i\in[c])$, the analyst computes orthogonal change-of-basis matrices $\matr{G}_i$ that align all users in a common latent space without access to the secret bases. In ODC, these matrices are obtained from the SVD $\matr{A}_i^\top \matr{A}_1 \matr{O}=\matr{U}_i\matr{\Sigma}_i\matr{V}_i^\top$, which yields $\matr{G}_i=\matr{U}_i\matr{V}_i^\top$ for any $\matr{O}\in\mathcal{O}(\ell)$. The aligned representations $\tilde{\matr{X}}_i\matr{G}_i$ are then aggregated for downstream learning.

\begin{figure*}[t]
    \centering
    \includegraphics[width=\textwidth]{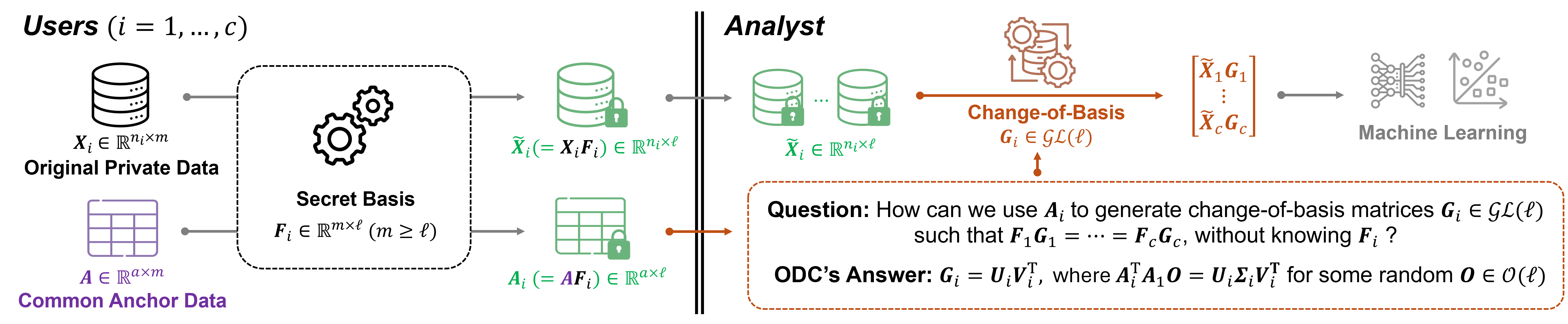}
    \caption{Schematic of the ODC workflow. Users release intermediate
    representations of private data and a common anchor, and the analyst
    computes orthogonal alignment matrices from the anchor representations
    to obtain a shared latent space.}
    \label{fig: DCoverview}
\end{figure*}

\section{Preliminaries}

\label{section: preliminaries}

In this section, we present the necessary preliminaries on DC analysis. Specifically, we begin with the DC protocol, then examine its privacy-preserving mechanisms and communication overhead.

\subsection{The Data Collaboration Algorithm}
\label{subsection: DCalgorithm}

We consider a general DC framework for supervised machine learning \cite{DCframework1, DC-DP}. Let \( \matr{X} \in \mathbb{R}^{n \times m} \) represent a dataset containing \( n \) training samples, each characterized by \( m \) features, and let \( \matr{L} \in \mathbb{R}^{n \times \ell} \) denote the corresponding label set with \( \ell \) labels. For privacy-preserving analysis across multiple entities, we assume the dataset is horizontally partitioned among \( c \) distinct entities, expressed as:
\begin{equation}
\matr{X} = \begin{bmatrix}
\matr{X}_1 \\
\vdots \\
\matr{X}_c
\end{bmatrix}, \quad
\matr{L} = \begin{bmatrix}
\matr{L}_1 \\
\vdots \\ 
\matr{L}_c
\end{bmatrix},
\end{equation}
where each entity \( i \) possesses a subset of the data \( \matr{X}_i \in \mathbb{R}^{n_i \times m} \) and corresponding labels \( \matr{L}_i \in \mathbb{R}^{n_i \times \ell} \). The total number of samples satisfies \( n = \sum_{i \in [c]} n_i \), where \([c] := \{1, 2, \ldots, c\}\). Additionally, each entity holds a test dataset \( \matr{Y}_i \in \mathbb{R}^{s_i \times m} \), for which the goal is to predict the corresponding labels \( \matr{L}_{\matr{Y}_i} \in \mathbb{R}^{s_i \times \ell} \). The DC framework can also be extended to handle more complex scenarios, such as partially shared features \cite{DCfeatures} or data partitioned both horizontally and vertically \cite{DCframework2}.

The framework defines two primary roles: the \textit{user} and the \textit{analyst}. Users possess their private datasets \( \matr{X}_i \) and corresponding labels \( \matr{L}_i \), and aim to improve local model performance by leveraging insights from other users' data without revealing their own. The analyst's role is to facilitate this collaborative process by providing the necessary resources and infrastructure for the machine learning workflow.

At the outset, users collaboratively construct a shared anchor dataset $\matr{A} \in \mathbb{R}^{a \times m}$, where $a > m$. Following the standard DC literature~\cite{DCframework1, DCnoniid}, we assume that $\matr{A}$ is either (i) synthetically generated dummy data or (ii) sampled from an external public database (for example, unlabeled compounds from PubChem for chemical fingerprints~\cite{DCnoniid}). Here, “public” refers only to the \emph{data source}: the concrete subset of records and feature vectors that form $\matr{A}$ is sampled locally by the users and is \emph{not} disclosed to the analyst. The analyst receives only the transformed anchor representations $\matr{A}_i = \matr{A} \matr{F}_i$, not the raw anchor $\matr{A}$ itself, so $\matr{A}$ remains unknown to the analyst unless a malicious colluding user explicitly reveals it.

Each user independently selects an \(m\)-dimensional basis with size \(\ell\), denoted by \(\matr{F}_i \in \mathbb{R}^{m \times \ell}\) (\(m \geq \ell\)), to linearly transform their private dataset \(\matr{X}_i\) and the anchor dataset \(\matr{A}\) into secure intermediate representations. A common basis selection method employs truncated SVD with a random orthogonal mapping \cite{FedDCL, DCnoniid}:

\begin{equation}
\label{eq:typicalbasisselection}
    \matr{F}_i = \matr{V}_i \matr{E}_i,
\end{equation}
where, \(\matr{E}_i \in \mathcal{O}(\ell) \coloneqq \{\matr{O} \in \mathbb{R}^{\ell \times \ell} : \matr{O}^\top \matr{O} = \matr{O}\matr{O}^\top = \matr{I}\}\) and \(\matr{V}_i \in \mathbb{R}^{m \times \ell}\) denotes the top \( \ell \) right singular vectors of \( \matr{X}_i \). Notably, this typical method inherently produces orthonormal bases, i.e., \( \matr{F}_i^\top \matr{F}_i = \matr{I} \).

Once the secret bases \(\matr{F}_i\) are chosen for each user, the secure intermediate representations of the private dataset \(\matr{X}_i\) and the anchor dataset \( \matr{A} \)  are computed as follows.

\begin{equation}
\label{eq: IR naive}
    \tilde{\matr{X}_i} = \matr{X}_i\matr{F}_i, \quad \matr{A}_i = \matr{A} \matr{F}_i.
\end{equation}

Each user shares \(\tilde{\matr{X}_i}\), \(\matr{L}_i\), and \(\matr{A}_{i}\) with the analyst, whose task is to construct a collaborative ML model based on all \(\tilde{\matr{X}_i}\) and \(\matr{L}_i\). However, directly concatenating \(\tilde{\matr{X}_i}\) and building a model from it is futile, as the bases were selected privately and are generally different. Within the DC framework, the analyst aims to align the secret bases using change-of-basis matrices \(\matr{G}_i \in \mathbb{R}^{\ell \times \ell}\), and constructs \(\hat{\matr{X}}\) as follows:
\begin{equation}
\label{eq: xhat}
\hat{\matr{X}} = 
\begin{bmatrix}
    \tilde{\matr{X}_1} \matr{G}_1 \\
    \vdots \\
    \tilde{\matr{X}_c} \matr{G}_c
\end{bmatrix}.
\end{equation}
After successfully creating the change-of-basis matrices from the aggregated \(\matr{A}_i\), as detailed in \S~\ref{section: related works} and ~\ref{section: ODC}, the analyst utilizes \( \hat{\matr{X}} \) and \( \matr{L} \) to construct a supervised classification model \( h \):
\begin{equation}
\label{eq: Lapprox}
\matr{L} \approx h(\hat{\matr{X}}).
\end{equation}

This model \( h \) can simply be distributed to the users along with \( \matr{G}_i \) to predict the labels \( \matr{L}_{\matr{Y}_i} \) of the test dataset \( \matr{Y}_i \):
\begin{equation}
\label{eq: inference}
\matr{L}_{\matr{Y}_i} = h(\matr{Y}_i \matr{F}_i \matr{G}_i),
\end{equation}
or employed in other DC-based applications \cite{NRI-DC, DCapp1, DCapp2, DCapp3, DCapp4, DCapp5, DCrecommender} for enhanced privacy or utility. 

Algorithm~\ref{alg: DCframework} summarizes the DC protocol. Cross-entity communication is limited to three phases: sharing the common anchor among users, sending local intermediate representations to the analyst, and returning the alignment matrices together with the trained model. The key algorithmic question is how to construct the change-of-basis matrices $\matr{G}_i$ from the anchor representations without access to the secret bases $\matr{F}_i$; this question is addressed in \S~\ref{section: Basis Alignment} and \S~\ref{section: ODC}.

\begin{algorithm}[t]
\caption{Data Collaboration Protocol \emph{(adapted from Algorithm 1 in \cite{NRI-DC})}}
\label{alg: DCframework}
\SetAlgoLined
\DontPrintSemicolon
\LinesNumbered
\SetNlSty{textbf}{}{:}
\SetNlSkip{0.5em}

\SetKwInOut{Input}{Input}
\SetKwInOut{Output}{Output}

\Input{$\matr{X}_i \in \mathbb{R}^{n_i \times m}$, $\matr{L}_i \in \mathbb{R}^{n_i \times \ell}$, and $\matr{Y}_i \in \mathbb{R}^{s_i \times m}$ for each user $i \in [c]$}
\Output{$\matr{L}_{\matr{Y}_i} \in \mathbb{R}^{s_i \times \ell}$ for each user $i \in [c]$}

\tcc{Phase 1 (user side): each user converts raw data into privacy-preserving intermediate representations using a private basis.}
Generate the shared anchor dataset $\matr{A} \in \mathbb{R}^{a \times m}$ and distribute it to all users\;
\ForEach{user $i \in [c]$}{
    Select a secret basis $\matr{F}_i \in \mathbb{R}^{m \times \ell}$ \tcp*{e.g., basis selection in Eq.~\eqref{eq:typicalbasisselection}}
    Compute $\tilde{\matr{X}}_i = \matr{X}_i \matr{F}_i$ and $\matr{A}_i = \matr{A}\matr{F}_i$ \tcp*{intermediate representations; cf. Eq.~\eqref{eq: IR naive}}
    Send $(\tilde{\matr{X}}_i,\matr{A}_i,\matr{L}_i)$ to the analyst\;
}

\tcc{Phase 2 (analyst side): the analyst uses only the released anchor representations to align the user-specific latent spaces, without accessing raw data or secret bases.}
Compute the change-of-basis matrices $\matr{G}_i$ from $\matr{A}_i$ for all $ i\in [c]$ \tcp*{see \S~\ref{section: Basis Alignment} and \S~\ref{section: ODC}}
\ForEach{user $i \in [c]$}{
    Form the aligned representation $\hat{\matr{X}}_i = \tilde{\matr{X}}_i \matr{G}_i$ \tcp*{aligned data; cf. Eq.~\eqref{eq: xhat}}
}
Aggregate $\hat{\matr{X}}_i,\matr{L}_i$ for all $i\in[c]$ to construct $\hat{\matr{X}}$ and $\matr{L}$ \tcp*{cf. Eq.~\eqref{eq: xhat}}
Train a collaborative model $h$ such that $\matr{L} \approx h(\hat{\matr{X}})$ \tcp*{cf. Eq.~\eqref{eq: Lapprox}}
Return $\matr{G}_i$ and $h$ to each user $i \in [c]$\;

\tcc{Phase 3 (user side): each user applies the returned alignment and trained model locally to its private test data.}
\ForEach{user $i \in [c]$}{
    Predict $\matr{L}_{\matr{Y}_i} = h(\matr{Y}_i \matr{F}_i \matr{G}_i)$ \tcp*{cf. Eq.~\eqref{eq: inference}}
}
\end{algorithm}

\subsection{Privacy Analysis}
\label{subsection: DCprivacy}

This section provides a brief review of the privacy guarantees and limitations inherent in the DC framework~\cite{DCprivacy}. Throughout this paper, we adopt the standard \emph{semi-honest} threat model for DC~\cite{DCprivacy, NRI-DC, DC-DP}: all parties follow the prescribed protocol but may attempt to infer additional information from the data they are authorized to observe.

\paragraph{Standard semi-honest model}
Each user \(i\) observes only its own raw dataset \((\matr{X}_i,\matr{L}_i)\), the shared anchor \(\matr{A}\), and the learned model \(h\). The analyst observes only intermediate representations \((\tilde{\matr{X}}_i,\matr{A}_i)\) and labels \(\matr{L}_i\); in particular, the analyst never sees the raw anchor \(\matr{A}\) or the secret bases \(\matr{F}_i\), and thus cannot directly invert user-side transformations. 

The shared anchor $\matr{A}\in\mathbb{R}^{a\times m}$ is \emph{common across users}, but its \emph{origin} (public vs.\ synthetic) and its \emph{visibility} (whether the analyst can access the exact $\matr{A}$ used by the users) are distinct design choices. In the default DC threat model, users share $\matr{A}$ among themselves (or share a PRNG seed that deterministically generates $\matr{A}$), while the analyst receives only the transformed anchors $\matr{A}_i := \matr{A}\matr{F}_i$ (Algorithm~\ref{alg: DCframework}, Step~5). When we say that an anchor is ``public'' we mean \emph{non-sensitive / not derived from any user's private records}; the analyst need not be assumed to possess the exact $\matr{A}$ instance unless stated otherwise. If the analyst \emph{does} know $\matr{A}$ (e.g., a truly public anchor dataset), the setting falls outside the semi-honest analyst model and is discussed under the collusion model.

Under the semi-honest model, the privacy guarantees and limitations of this work coincide with those of the underlying DC framework. In particular, as long as all participants remain semi-honest and non-colluding, the subsequent privacy results do not depend on the number of users $ c$; adding more semi-honest users does not, by itself, reveal additional information about any individual dataset.

The following two theorems are direct restatements, in our notation, of the privacy results established for DC in~\cite{DCprivacy}. They are included for completeness and are not claimed as new contributions.

\begin{theorem}
\label{theorem: privacy-semi-honest-users}
\textbf{Privacy Against Semi-Honest Users} \emph{(Adapted from Theorem~1 in \cite{DCprivacy})}.\\
Any semi-honest user \( i \) in the DC framework cannot infer the private dataset \( \matr{X}_j \) of any other user \( j \neq i \).
\end{theorem}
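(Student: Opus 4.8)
The plan is to prove the statement as an information-theoretic \emph{non-identifiability} result: I will show that $\matr{X}_j$ is not uniquely determined by everything a semi-honest user $i$ is entitled to observe, by exhibiting two protocol executions that differ in user $j$'s private dataset yet induce an identical view for user $i$. First I would enumerate that view precisely. By Algorithm~\ref{alg: DCframework} and the semi-honest model of \S~\ref{subsection: DCprivacy}, user $i$'s complete transcript is $\{\matr{X}_i,\matr{L}_i,\matr{A},\matr{F}_i,\matr{G}_i,h\}$. The decisive structural fact is that the intermediate representations $\tilde{\matr{X}}_j=\matr{X}_j\matr{F}_j$ and $\matr{A}_j=\matr{A}\matr{F}_j$ of any other user $j$ are transmitted in Step~5 \emph{only to the analyst} and never reach user $i$; hence the sole $\matr{X}_j$-dependent objects in user $i$'s view are the returned matrix $\matr{G}_i$ and the model $h$.

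Next I would construct the indistinguishable alternative. In the dimensionality-reduction regime $m>\ell$ the secret basis $\matr{F}_j$ has rank $\ell<m$, so $(\operatorname{col}\matr{F}_j)^\perp$ is nontrivial and I may pick a nonzero $\matr{\Delta}\in\mathbb{R}^{n_j\times m}$ whose rows all lie in $(\operatorname{col}\matr{F}_j)^\perp$, i.e.\ $\matr{\Delta}\matr{F}_j=\matr{0}$. Keeping user $j$'s basis fixed at $\matr{F}_j$ and replacing $\matr{X}_j$ by $\matr{X}_j':=\matr{X}_j+\matr{\Delta}\neq\matr{X}_j$ leaves $\tilde{\matr{X}}_j'=\matr{X}_j'\matr{F}_j=\tilde{\matr{X}}_j$ and $\matr{A}_j'=\matr{A}_j$ unchanged. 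Since every analyst-side computation --- the alignment matrices $\matr{G}_k$, the aggregate $\hat{\matr{X}}$, and the trained model $h$ --- is a function only of the unchanged quantities $\{\tilde{\matr{X}}_k,\matr{A}_k,\matr{L}_k\}$, the two executions produce identical $\matr{G}_i$ and $h$, hence an identical view for user $i$. As $\matr{X}_j'\neq\matr{X}_j$, user $i$ cannot distinguish the two worlds and thus cannot infer $\matr{X}_j$, which establishes the claim.

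I would then add two consolidating remarks. First, the very same perturbation defeats the \emph{analyst} as well: although the analyst additionally observes $\tilde{\matr{X}}_j$, this quantity is invariant under $\matr{X}_j\mapsto\matr{X}_j+\matr{\Delta}$, so the component of $\matr{X}_j$ lying in $(\operatorname{col}\matr{F}_j)^\perp$ is unrecoverable by \emph{any} party, and the theorem is not an artifact of the user's limited view. Second, user $i$ enjoys an additional structural layer of protection: it never receives $\tilde{\matr{X}}_j$ or $\matr{A}_j$, so even the in-subspace part $\matr{X}_j\matr{F}_j$ is hidden from it. In particular, although user $i$ knows $\matr{A}$ and could in principle form $\matr{F}_j=\matr{A}^\dagger\matr{A}_j$, it lacks $\matr{A}_j$; and $\matr{G}_i$ does not supply $\matr{A}_j$, since in the ODC instantiation $\matr{G}_i=\matr{U}_i\matr{V}_i^\top$ retains only the orthogonal factor of $\matr{A}_i^\top\matr{A}_1\matr{O}$ and discards the singular values $\matr{\Sigma}_i$.

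The main obstacle is conceptual rather than computational: pinning down the correct formal reading of ``cannot infer'' as non-identifiability, and verifying that the counterfactual $(\matr{X}_j',\matr{F}_j)$ remains admissible under all protocol constraints --- in particular the orthonormality of the (unchanged) secret basis and the consistency of the downstream analyst computations. A further edge case to handle is $m=\ell$, where the perturbation $\matr{\Delta}$ degenerates; there I would instead lean on the secrecy of $\matr{F}_j$, noting that user $i$ cannot reconstruct it without $\matr{A}_j$, so the access-control structure of the protocol --- not the rank deficiency of the projection --- carries the proof.
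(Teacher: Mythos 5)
Your proposal is correct, but it takes a genuinely different route from the paper. The paper's own proof is a short secrecy-based argument: it enumerates user $i$'s view ($\matr{X}_i$, $\matr{L}_i$, $\matr{A}$, $h$), observes that the only $\matr{X}_j$-dependent object reaching user $i$ is the model trained on $\hat{\matr{X}}_j=\matr{X}_j\matr{F}_j\matr{G}_j$, and concludes that because both $\matr{F}_j$ and $\matr{G}_j$ are unknown to user $i$, the transformation cannot be inverted. You instead formalize ``cannot infer'' as non-identifiability and exhibit two protocol executions --- $\matr{X}_j$ versus $\matr{X}_j+\matr{\Delta}$ with $\matr{\Delta}\matr{F}_j=\matr{0}$ --- that induce identical transcripts for every party, which is a strictly stronger statement: it does not rest on the secrecy of $\matr{F}_j$ at all in the regime $m>\ell$, and it recovers as a corollary the paper's later remark (in the collusion discussion) that the projection structure protects the out-of-subspace component of $\matr{X}_j$ even from an adversary who learns $\matr{F}_j$. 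What the paper's argument buys is brevity and coverage of the $m=\ell$ case without a separate branch; what yours buys is a precise information-theoretic meaning for the claim and a clean separation between the two protection layers (rank deficiency of the projection versus access control on $\tilde{\matr{X}}_j,\matr{A}_j$). Your caveats are the right ones: the counterfactual $(\matr{X}_j',\matr{F}_j)$ is admissible because the protocol only requires the basis to be privately selected and orthonormal, not deterministically derived from $\matr{X}_j'$ via truncated SVD, and in the degenerate case $m=\ell$ you correctly fall back on the access-control argument, which is essentially the paper's proof.
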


\begin{proof}
A semi-honest user \(i\) has access only to its own private dataset \((\matr{X}_i,\matr{L}_i)\), the shared anchor dataset \(\matr{A}\), and the collaboratively trained model \(h\). Information about another user \(j \neq i\) is available only through the learned model \(h\) and, implicitly, through the transformed representations \(\hat{\matr{X}}_j\) that contributed to training.

The anchor dataset \(\matr{A}\) is either publicly sourced or synthetically generated and does not contain any information derived from \(\matr{X}_j\). Moreover, the transformed data for user \(j\) is given by \(\hat{\matr{X}}_j = \matr{X}_j \matr{F}_j \matr{G}_j\), where both transformation matrices \(\matr{F}_j\) and \(\matr{G}_j\) are unknown to user \(i\). Without access to \(\matr{F}_j\) and \(\matr{G}_j\), user \(i\) cannot invert the transformation or recover meaningful information about \(\matr{X}_j\). Hence, a semi-honest user cannot infer the private dataset of any other user.
\end{proof}

\begin{theorem}
\label{theorem: privacy-semi-honest-analyst}
\textbf{Privacy Against a Semi-Honest Analyst} \emph{(Adapted from Theorem~2 in \cite{DCprivacy})}.\\
A semi-honest analyst in the DC framework cannot infer the private dataset \( \matr{X}_j \) of any user \( j \).
\end{theorem}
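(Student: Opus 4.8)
The plan is to mirror the structure of the proof of Theorem~\ref{theorem: privacy-semi-honest-users}, but now from the analyst's vantage point, and to strengthen the informal ``cannot invert'' argument into an explicit non-identifiability statement. First I would enumerate precisely what a semi-honest analyst observes about user $j$: the intermediate representations $\tilde{\matr{X}}_j = \matr{X}_j \matr{F}_j$ and $\matr{A}_j = \matr{A}\matr{F}_j$, the labels $\matr{L}_j$, and the change-of-basis matrix $\matr{G}_j$ that the analyst itself constructs from the collection of $\matr{A}_i$. Crucially, the analyst never receives the secret basis $\matr{F}_j$ nor the raw anchor $\matr{A}$. The goal is then to show that $\matr{X}_j$ is \emph{not identifiable} from this information.

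The core step is an explicit symmetry argument exhibiting a large family of datasets consistent with the analyst's observations. For any invertible $\matr{R}\in\mathcal{GL}(m)$, consider the alternative triple $(\matr{X}_j\matr{R}^{-1},\,\matr{A}\matr{R}^{-1},\,\matr{R}\matr{F}_j)$. A one-line check shows
\begin{equation}
(\matr{X}_j\matr{R}^{-1})(\matr{R}\matr{F}_j)=\matr{X}_j\matr{F}_j=\tilde{\matr{X}}_j
\quad\text{and}\quad
(\matr{A}\matr{R}^{-1})(\matr{R}\matr{F}_j)=\matr{A}\matr{F}_j=\matr{A}_j,
\end{equation}
so this triple reproduces exactly the observed $\tilde{\matr{X}}_j$ and $\matr{A}_j$. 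Hence every matrix in the orbit $\{\matr{X}_j\matr{R}^{-1}:\matr{R}\in\mathcal{GL}(m)\}$ is equally consistent with what the analyst sees, and the true $\matr{X}_j$ cannot be singled out. I would then note that the auxiliary data do not break this ambiguity: $\matr{G}_j$ is derived solely from the (already ambiguous) anchor representations and carries no independent information about $\matr{X}_j$, while $\matr{L}_j$ relates to $\matr{X}_j$ only through the unknown transformations and the learned model $h$, revealing no raw features.

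I expect the main obstacle to be the joint structure of the two observations, which share the same secret basis $\matr{F}_j$ on the right: one must rule out the possibility that combining $\tilde{\matr{X}}_j = \matr{X}_j\matr{F}_j$ with $\matr{A}_j = \matr{A}\matr{F}_j$ pins down $\matr{F}_j$ and thereby inverts the transformation. The resolution is that $\matr{A}$ is itself secret, so $\matr{A}_j = \matr{A}\matr{F}_j$ is a single equation in the two unknowns $(\matr{A},\matr{F}_j)$ and cannot isolate $\matr{F}_j$; the symmetry $\matr{R}$ above acts simultaneously on $\matr{X}_j$, $\matr{A}$, and $\matr{F}_j$, keeping both equations invariant. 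A secondary subtlety is whether enforcing orthonormality $\matr{F}_j^\top\matr{F}_j=\matr{I}$ shrinks the ambiguity; restricting $\matr{R}$ to the orthogonal group $\mathcal{O}(m)$ preserves orthonormality since $(\matr{R}\matr{F}_j)^\top(\matr{R}\matr{F}_j)=\matr{F}_j^\top\matr{F}_j=\matr{I}$, so a family of dimension $m(m-1)/2$ of consistent datasets survives and the non-identifiability conclusion is unchanged.
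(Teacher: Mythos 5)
Your proposal is correct and rests on the same underlying observation as the paper's proof---that the analyst sees $\tilde{\matr{X}}_j=\matr{X}_j\matr{F}_j$ and $\matr{A}_j=\matr{A}\matr{F}_j$ with both $\matr{F}_j$ and $\matr{A}$ unknown---but you push it one step further than the paper does. The paper's argument stops at the informal statement that the analyst ``lacks sufficient information to recover $\matr{X}_j$,'' whereas you convert this into an explicit non-identifiability claim by exhibiting the orbit $\{(\matr{X}_j\matr{R}^{-1},\matr{A}\matr{R}^{-1},\matr{R}\matr{F}_j):\matr{R}\in\mathcal{GL}(m)\}$ of triples reproducing the observations exactly, and by checking that restricting to $\matr{R}\in\mathcal{O}(m)$ preserves the orthonormality constraint $\matr{F}_j^\top\matr{F}_j=\matr{I}$ relevant to ODC. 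This buys a genuinely stronger conclusion (a positive-dimensional family of indistinguishable alternatives rather than a mere ``cannot invert'') at essentially no extra cost. One point worth making explicit: since the anchor $\matr{A}$ is \emph{common} to all users, the substitution $\matr{A}\mapsto\matr{A}\matr{R}^{-1}$ must be applied globally, i.e.\ the same $\matr{R}$ acts on every user's triple $(\matr{X}_i,\matr{A},\matr{F}_i)$ simultaneously; otherwise the alternative explanation for user $j$ would be inconsistent with the observations $\matr{A}_i=\matr{A}\matr{F}_i$ of the other users. This is immediate (replace each $\matr{F}_i$ by $\matr{R}\matr{F}_i$, which also preserves condition~2 of Assumption~\ref{assumptions: DC} since $\matr{R}\matr{F}_i=(\matr{R}\matr{F}_1)\matr{E}_i$), but your write-up phrases the symmetry per-user and should state the global action to close the argument.
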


\begin{proof}
The semi-honest analyst has access only to the outputs of the user-side linear transformations, namely
\(\tilde{\matr{X}}_j = \matr{X}_j \matr{F}_j\) and
\(\matr{A}_j = \matr{A} \matr{F}_j\) for each user \(j\). However, both the transformation matrices \(\matr{F}_j\) and the anchor \(\matr{A}\) are unknown to the analyst. Thus, the analyst observes only pairs \((\tilde{\matr{X}}_j,\matr{A}_j)\) with an unknown common factor \(\matr{F}_j\), and lacks sufficient information to recover \(\matr{X}_j\) from \(\tilde{\matr{X}}_j\). Therefore, a semi-honest analyst cannot infer any user’s private dataset.
\end{proof}

Contemporary DC techniques typically employ orthonormal bases to transform the private data. Intuitively, enforcing orthonormality on the secret bases preserves geometric properties (e.g., distances and angles) but does not aid reconstruction in the absence of the bases themselves. Consequently, orthonormality does not weaken the privacy guarantees under the semi-honest model.

To visually support this intuition, Fig.~\ref{fig:visual_privacy} presents a simple illustrative example on CelebA~\cite{celeba}. We demonstrate that orthonormal projections significantly degrade visual recognizability, and the degree of obfuscation is comparable to that achieved by general (non-orthogonal) random projections. A more detailed visual privacy analysis appears later in \S~\ref{sec: assumptions-evaluation}.

\begin{figure}[htbp]
    \centering
    \begin{subfigure}[b]{0.9\textwidth}
        \centering
        \includegraphics[width=\textwidth]{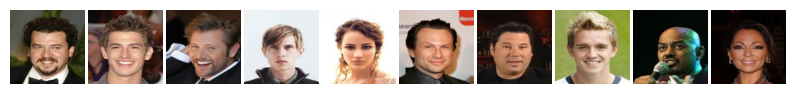}
        \caption{Original images}
        \label{fig:visual_privacy_a}
    \end{subfigure}
    
    \begin{subfigure}[b]{0.9\textwidth}
        \centering
        \includegraphics[width=\textwidth]{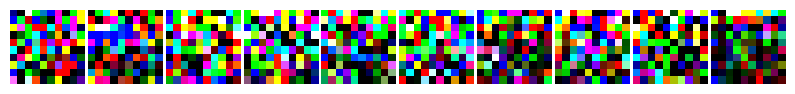}
        \caption{Orthogonally projected images ($\ell=300$)}
        \label{fig:visual_privacy_b}
    \end{subfigure}
    
    \begin{subfigure}[b]{0.9\textwidth}
        \centering
        \includegraphics[width=\textwidth]{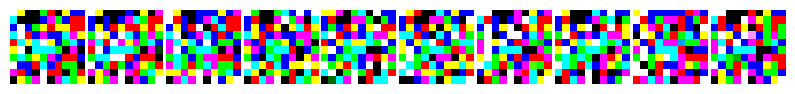}
        \caption{Randomly projected images ($\ell=300$)}
        \label{fig:visual_privacy_c}
    \end{subfigure}
    
    \caption{Visual privacy verification using CelebA~\cite{celeba}. Original images (panel (a)) compared to images after orthonormal projections (panel (b)) and random non-orthogonal projections (panel (c)). Both transformations strongly obfuscate the visual content, illustrating that the orthonormality assumption does not compromise visual privacy relative to general projections.}
    \label{fig:visual_privacy}
\end{figure}

\paragraph{Remarks on collusion model}
In practice, as the number of users increases, it becomes more likely that some participants deviate from the semi-honest assumption. A natural extension of the DC threat model therefore considers the possibility of \emph{malicious collusion} between participants. In particular, when a user \(i\) colludes with the analyst, the colluding parties jointly observe another user \(j\)’s transformed data, i.e., \(\matr{A}_j\) and \(\tilde{\matr{X}}_j\), as well as the raw anchor \(\matr{A}\). Since \(\matr{A}\) has full column rank, this enables reconstruction of the secret basis via the Moore–Penrose pseudoinverse~\cite{pseudoinverse}, \(\matr{F}_j = \matr{A}^\dagger \matr{A}_j\). As argued in~\cite{DCprivacy}, because \(\matr{F}_j \in \mathbb{R}^{m \times \ell}\) with \( m > \ell \), the matrix \(\matr{F}_j\) acts as a projection, and choosing a smaller latent dimension \(\ell\) makes it substantially harder to reconstruct the original data \(\matr{X}_j\) from \(\tilde{\matr{X}}_j = \matr{X}_j \matr{F}_j\), consistent with the guarantees of \(\varepsilon\)-DR privacy~\cite{DRprivacy}.

To mitigate such collusion-based risks and align with stronger privacy standards, differential-privacy-based mechanisms and procedures to eliminate identifiability have been integrated into DC, as demonstrated in~\cite{NRI-DC,DC-DP}. These techniques provide additional protection under threat models that go beyond semi-honest behavior; we refer interested readers to these works for a detailed treatment of stronger adversaries.

We emphasize that this study neither introduces new privacy vulnerabilities nor attempts to strengthen the existing privacy guarantees of DC. Our focus is instead on the \emph{concordance} properties of DC and on improving its computational efficiency, as discussed in \S~\ref{section: ODC}. The ODC methodology is fully compatible with enhanced DC variants that offer stronger privacy protections~\cite{NRI-DC,DC-DP}, but a comprehensive formal analysis and empirical evaluation of such combinations are left for future work.

\subsection{Communication Overhead}
\label{subsec:communication}

In this subsection, we quantify the communication overhead incurred by DC and compare it with standard FL. As illustrated in \textbf{Algorithm}~\ref{alg: DCframework}, DC requires at most three communication phases:
\begin{enumerate}
    \item A one-off distribution of a common anchor dataset to all users (Step~2).
    \item A single uplink from each user to the analyst, containing transformed representations (Step~5).
    \item A final downlink from the analyst to each user, comprising the trained model and the change-of-basis matrix (Step~15).
\end{enumerate}

Let \(q\) denote the number of bits per scalar element (e.g., \(q{=}32\) for FP32 and \(q{=}16\) for FP16), and let \(N\) denote the number of parameters in the downstream model \(h\). We write \(c\) for the number of users, \(n_i\) for the private sample count at user \(i\), \(\bar n := \frac{1}{c}\sum_{i=1}^c n_i\) for the average sample count, \(a\) for the anchor size, \(m\) for the input dimension, and \(\ell\) for the latent dimension.

For each user \(i\in[c]\), the \emph{uplink} payload consists of the transformed representations of both its private
dataset and the anchor dataset:
\begin{equation}
    B^{\mathrm{DC}\uparrow}_{i} = \frac{(n_i+a)\,\ell\,q}{8}
    \quad\text{[bytes]}.
    \label{eq:dc-uplink}
\end{equation}
The corresponding \emph{downlink} from the analyst to user \(i\) contains the trained model \(h\) and the (change-of-basis) alignment matrix \(\matr{G}_i \in \mathbb{R}^{\ell\times \ell}\):
\begin{equation}
    B^{\mathrm{DC}\downarrow}_{i} = \frac{(\ell^{2}+N)\,q}{8}
    \quad\text{[bytes]}.
    \label{eq:dc-downlink}
\end{equation}

Let
\begin{equation}
    B_A := \frac{a\,m\,q}{8}\quad\text{[bytes]}
    \label{eq:anchor-onecopy}
\end{equation}
denote the size of the raw anchor matrix \(\matr{A}\in\mathbb{R}^{a\times m}\) for one copy. The \emph{aggregate} cross-institution traffic required to make \(\matr{A}\) available can be written as
\begin{equation}
    B^{\mathrm{anchor}} = \gamma\,B_A = \gamma\,\frac{a\,m\,q}{8}
    \quad\text{[bytes]},
    \label{eq:anchor}
\end{equation}
where \(\gamma\ge 0\) is a topology-dependent \emph{replication factor} (how many cross-silo copies traverse links).

In cross-silo settings, anchor distribution is typically unicast (so \(\gamma\approx c\)), unless a public/seeded anchor is
used (\(\gamma=0\)). Importantly, this anchor cost is \emph{one-off}.

Summing \eqref{eq:dc-uplink}, \eqref{eq:dc-downlink}, and \eqref{eq:anchor}, the aggregate DC traffic becomes
\begin{align}
    B^{\mathrm{DC}}
    &= \sum_{i=1}^{c}\Bigl(B^{\mathrm{DC}\uparrow}_{i}+B^{\mathrm{DC}\downarrow}_{i}\Bigr)+B^{\mathrm{anchor}}\notag\\[3pt]
    &= \frac{q}{8}\Bigl[c\bigl((\bar{n}+a)\,\ell+(\ell^{2}+N)\bigr)+\gamma\,a\,m\Bigr]
    \quad\text{[bytes]}.
    \label{eq:dc-total}
\end{align}
DC is a \emph{single-round} protocol: the cost \eqref{eq:dc-total} is incurred once (plus the one-off anchor
distribution if \(\gamma>0\)).

Let \(R\) denote the number of FL rounds and \(p\in(0,1]\) the per-round participation fraction (constant across rounds). Each selected participant uploads and downloads the full model once per round, thus transmitting \(2N\) scalars per round. The cumulative FL traffic is
\begin{equation}
    B^{\mathrm{FL}} = 2\,R\,p\,c \;\frac{N\,q}{8}
    \quad\text{[bytes]}.
    \label{eq:fl-total}
\end{equation}

Setting \(B^{\mathrm{DC}}=B^{\mathrm{FL}}\) and solving for \(R\) yields
\begin{equation}
    R^{*}
    = \frac{(\bar n+a)\,\ell}{2pN}
    + \frac{\ell^{2}+N}{2pN}
    + \frac{\gamma\,a\,m}{2p\,c\,N}.
    \label{eq:Rstar}
\end{equation}
Thus, DC is more communication-efficient whenever \(R \ge \lceil R^{*}\rceil\). Note that \(q\) cancels exactly in \eqref{eq:Rstar}; the dependence on \(c\) appears only through the anchor term and disappears in the common unicast case \(\gamma\approx c\) (e.g., cold start or coordinator distribution).

Equation~\eqref{eq:Rstar} implies that \(R^* \propto 1/p\) and decreases with model size \(N\). While \(R^*\) is independent of the bit-width \(q\), the \emph{absolute} traffic scales linearly with \(q\): 
\begin{equation}
B^{\mathrm{DC}}(q)=\frac{q}{q_0}B^{\mathrm{DC}}(q_0),
\qquad
B^{\mathrm{FL}}(q)=\frac{q}{q_0}B^{\mathrm{FL}}(q_0),
\end{equation}
where \(q_0\) is just the baseline quantization bit-width. 

For the numerical setup below, Fig.~\ref{fig:q_sweep} plots \(B^{\mathrm{DC}}\) and \(B^{\mathrm{FL}}\) as functions of \(q\). The heatmaps in Fig.~\ref{fig:Rstar_heatmaps} illustrate how \(R^*\) shifts with respect to \(\bar n\) and \(N\) (for several values of \(p\)), while Fig.~\ref{fig:Rstar_curves} presents one-dimensional slices, showing \(R^*\) versus \(N\) (for multiple \(p\)) and \(R^*\) versus \(p\) (for multiple \(N\)).

\begin{figure}[t]
    \centering
    \includegraphics[width=0.70\textwidth]{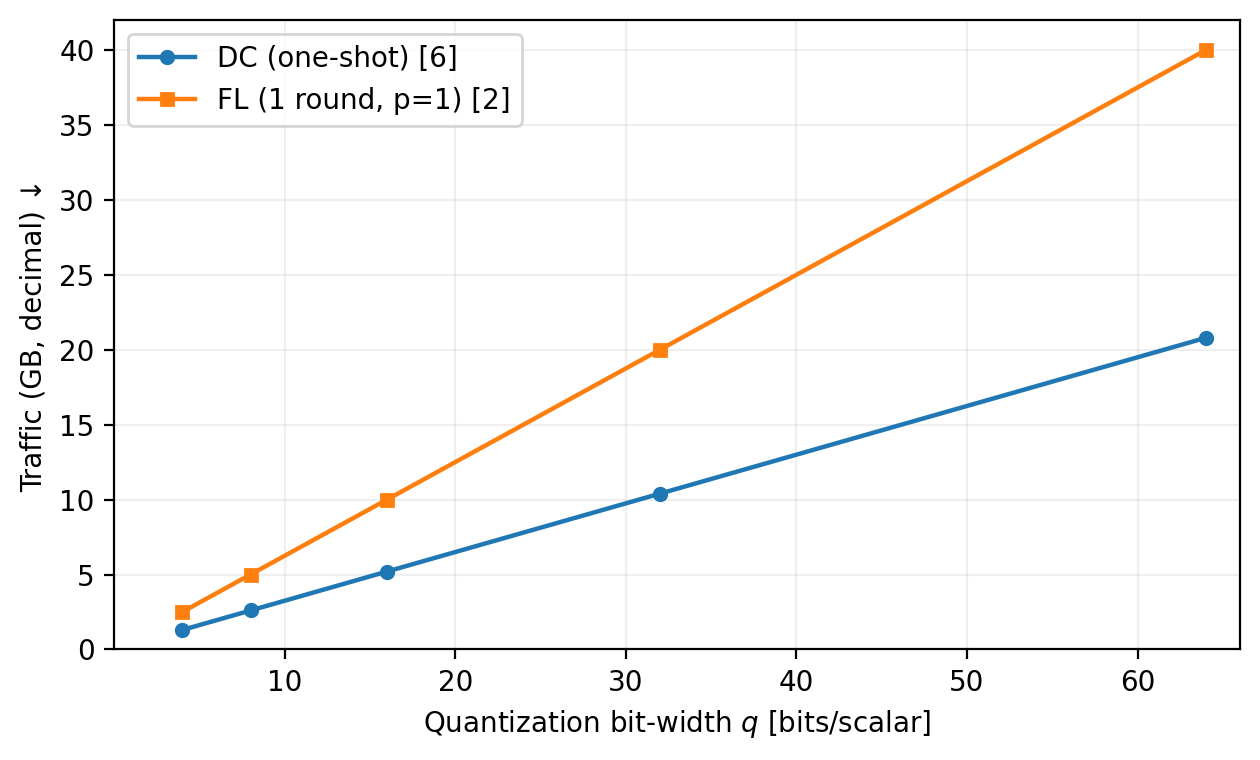}
    \caption{Absolute communication volume versus quantization bit-width \(q\) for the healthcare example (\(\gamma=c\)).}
    \label{fig:q_sweep}
\end{figure}

\begin{figure}[htbp]
    \centering

    \begin{subfigure}[b]{0.48\textwidth}
        \centering
        \includegraphics[width=\textwidth]{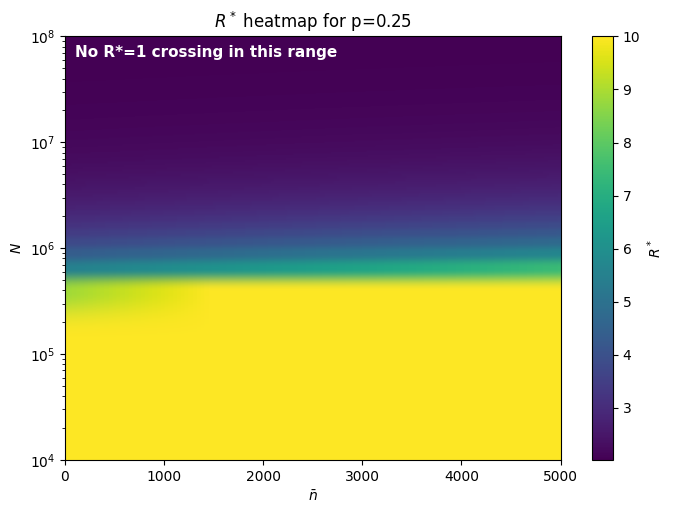}
        \caption{$p = 0.25$}
        \label{fig:p025}
    \end{subfigure}
    \hfill
    \begin{subfigure}[b]{0.48\textwidth}
        \centering
        \includegraphics[width=\textwidth]{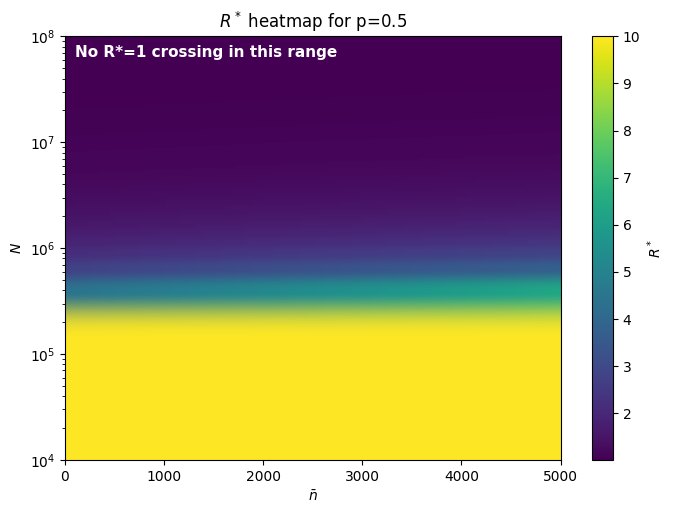}
        \caption{$p = 0.5$}
        \label{fig:p05}
    \end{subfigure}

    \begin{subfigure}[b]{0.48\textwidth}
        \centering
        \includegraphics[width=\textwidth]{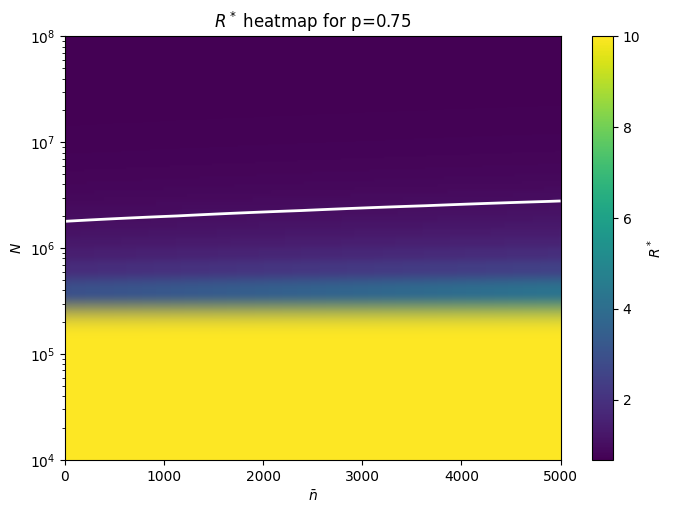}
        \caption{$p = 0.75$}
        \label{fig:p075}
    \end{subfigure}
    \hfill
    \begin{subfigure}[b]{0.48\textwidth}
        \centering
        \includegraphics[width=\textwidth]{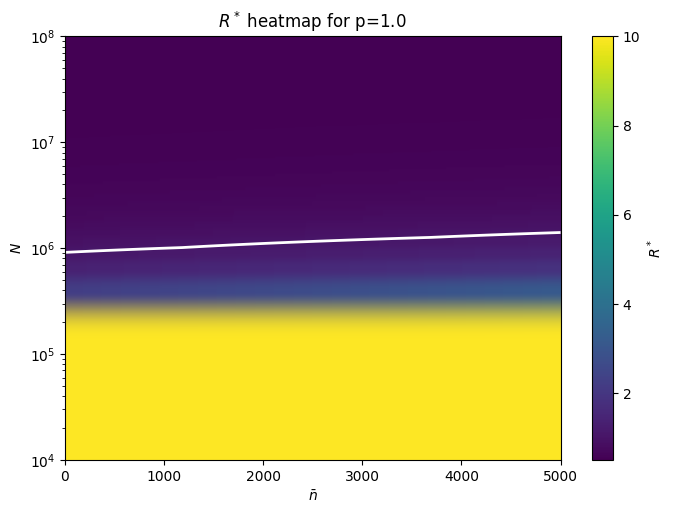}
        \caption{$p = 1.0$}
        \label{fig:p1}
    \end{subfigure}

    \caption{Heatmaps of the threshold number of FL rounds \(R^*\) for different participation rates \(p\). Brighter regions correspond to larger \(R^*\), i.e., more FL rounds are required for DC to be more communication-efficient. The white line indicates the break-point for \(R^* = 1\), i.e., regions above this line mean a single FL round costs more than DC.}
    \label{fig:Rstar_heatmaps}
\end{figure}

\begin{figure}[t]
    \centering
    \begin{subfigure}[b]{0.48\textwidth}
        \centering
        \includegraphics[width=\textwidth]{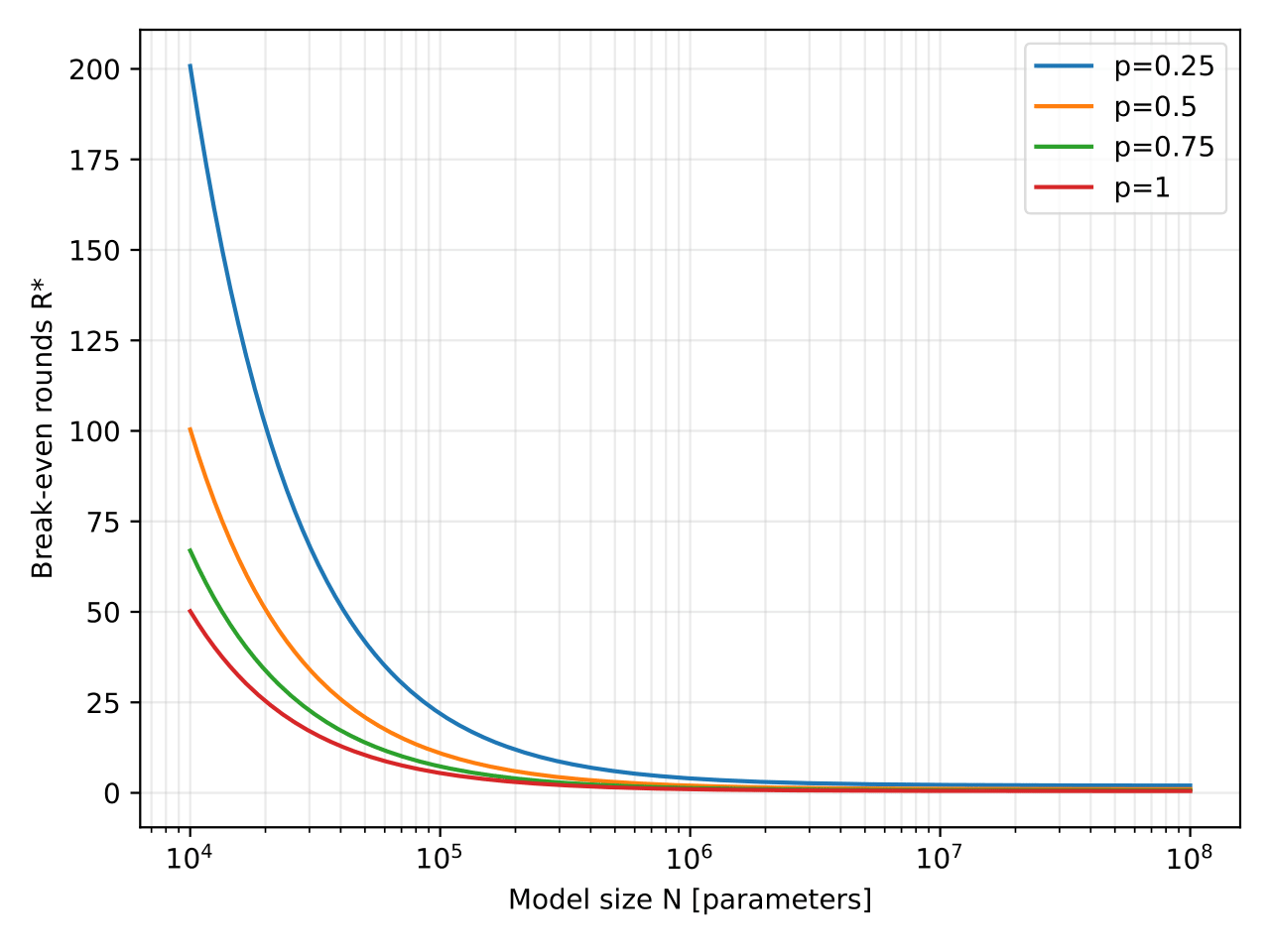}
        \caption{\(R^*\) as a function of model size \(N\) for \(p\in\{0.25,0.5,0.75,1.0\}\) (fixed \(\bar n\)).}
        \label{fig:Rstar_vs_N}
    \end{subfigure}
    \hfill
    \begin{subfigure}[b]{0.48\textwidth}
        \centering
        \includegraphics[width=\textwidth]{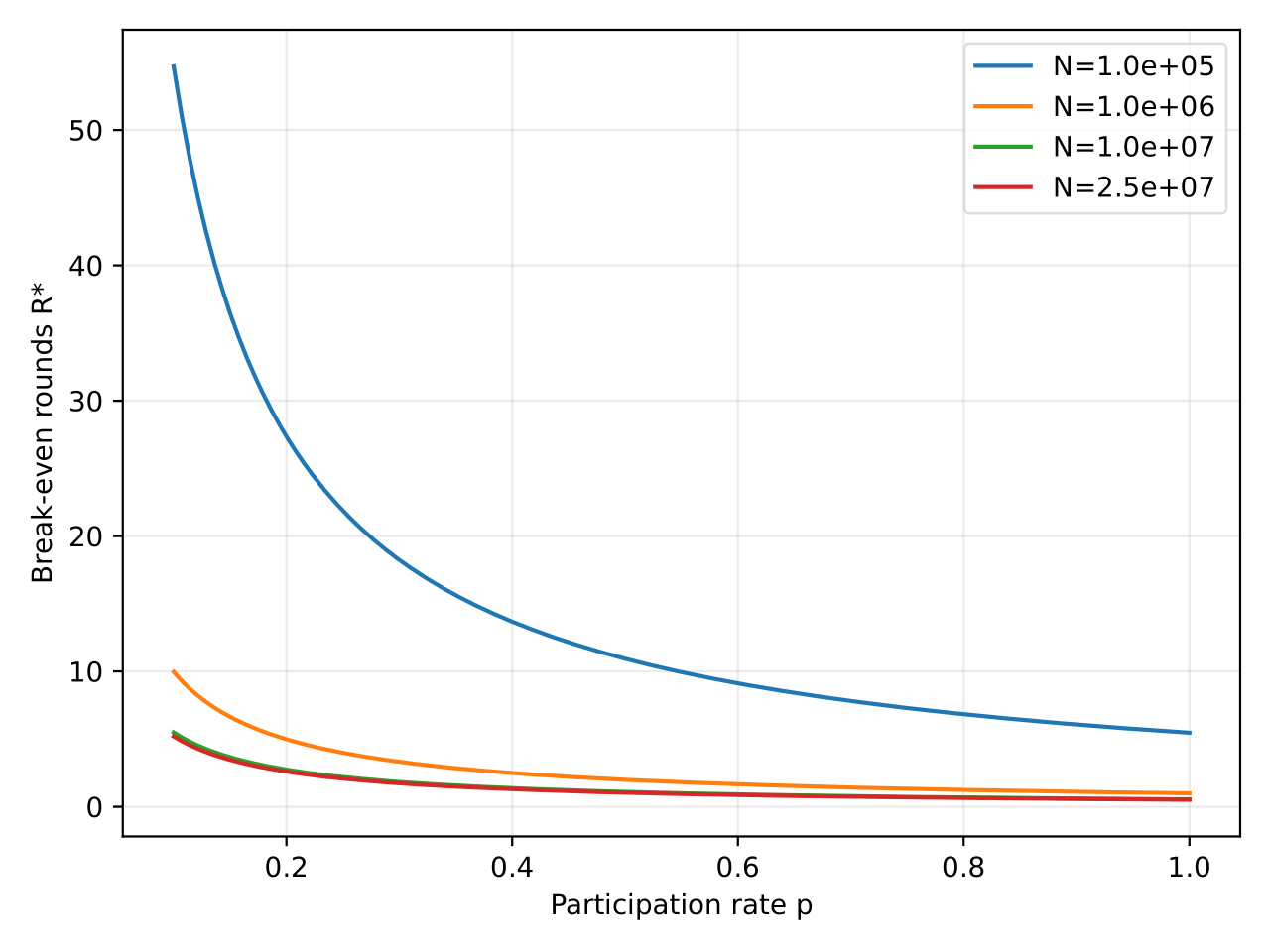}
        \caption{\(R^*\) as a function of participation \(p\) for several \(N\) (fixed \(\bar n\)).}
        \label{fig:Rstar_vs_p}
    \end{subfigure}
    \caption{Sensitivity curves for the break-even FL rounds \(R^*\) (example settings: \(a=10^{3}\), \(m=784\), \(\ell=100\), \(\gamma=c\)).}
    \label{fig:Rstar_curves}
\end{figure}

Let \(\beta\) denote the effective \emph{bottleneck} goodput (bits/s) and \(\tau\) the RTT (seconds). A simple engineering estimate for end-to-end transfer time (ignoring compute and assuming payload dominates) is
\begin{equation}
    T^{\mathrm{DC}} \approx \frac{8\,B^{\mathrm{DC}}}{\beta} + 3\tau,
    \qquad
    T^{\mathrm{FL}} \approx \frac{8\,B^{\mathrm{FL}}}{\beta} + 2R\tau,
    \label{eq:time-model}
\end{equation}
where the additive RTT terms reflect the three DC phases and the two-message-per-round structure of synchronous FL. If clients are the bottleneck rather than the coordinator, one can replace \(B^{\mathrm{DC}}\) (resp.\ \(B^{\mathrm{FL}}\)) by the maximum per-client transmitted bytes in the corresponding phase.

To avoid ambiguity, Table~\ref{tab:units} lists the decimal (network) and binary (storage) unit conventions. We report GB in decimal units (\(10^9\) bytes) when translating communication volume to network time.

\begin{table}[t]
\centering
\caption{Unit conversions. We report GB in decimal units (\(10^9\) bytes) when translating to network time.}
\label{tab:units}
\begin{tabular}{lcc}
\toprule
Unit & Decimal (SI) & Binary (IEC) \\
\midrule
KB / KiB & \(10^{3}\) B & \(2^{10}\) B \\
MB / MiB & \(10^{6}\) B & \(2^{20}\) B \\
GB / GiB & \(10^{9}\) B & \(2^{30}\) B \\
\bottomrule
\end{tabular}
\end{table}

Consider \(c=100\) hospitals, each with \(n_i=10^{3}\) samples; anchor size \(a=10^{3}\); input dimension \(m=784\); latent dimension \(\ell=100\); and a ResNet-50 with \(N\simeq 2.5\times 10^{7}\) parameters. Assuming 32-bit quantization (\(q=32\)) and a cold-start/unicast anchor distribution (\(\gamma=c\)),
\eqref{eq:dc-total} gives
\begin{equation}
    B^{\mathrm{DC}} \approx 1.04\times 10^{10}\ \text{bytes} \approx 10.4\ \text{GB},
\end{equation}
while one round of FL with full participation (\(p=1\)) incurs
\begin{equation}
    B^{\mathrm{FL}} \approx 2.0\times 10^{10}\ \text{bytes} \approx 20\ \text{GB}.
\end{equation}
Thus, DC achieves nearly a \(50\%\) reduction in communication even compared to a \emph{single} full-participation FL round. (If the anchor is public/seeded, \(\gamma=0\) and \(B^{\mathrm{DC}}\) decreases slightly by removing the one-off anchor term.)

Under partial participation, say \(p=0.1\), \eqref{eq:Rstar} yields \(R^{*}\approx 5.2\) (for \(\gamma=c\)): FL must run for more than about five rounds before its cumulative traffic exceeds that of DC. Since practical FL deployments often require tens to hundreds of rounds to converge, DC is expected to yield substantial communication savings in realistic cross-silo settings.

\section{Related Works}
\label{section: related works}

\subsection{Procrustes methods in multi-view alignment}

The orthogonal Procrustes problem (OPP)~\cite{procrustessolution} is a classical tool for aligning two matrices by an orthogonal transformation. In multi-view alignment, OPP and its multi-set generalizations underpin a large family of methods that seek a common latent representation shared across views. In functional neuroimaging, \emph{hyperalignment} aligns subject-specific response matrices to a common template by solving a multi-set orthogonal Procrustes problem, under the assumption that subject-specific representational spaces are related by orthogonal transformations~\cite{Haxby2011CommonModel}. Kernel Hyperalignment extends this framework by solving a regularized multi-set OPP in a reproducing kernel Hilbert space, thereby accommodating nonlinear similarities while retaining the orthogonality constraint on each subject-specific map~\cite{Lorbert2012KernelHyperalignment}. Closely related generalized orthogonal Procrustes problems (GOPP) seek a collection of orthogonal matrices that best align multiple point clouds, and recent work has analyzed semidefinite relaxations and generalized power methods with sharp recovery guarantees under signal-plus-noise models~\cite{GOPP}. 

In multi-view representation learning and clustering, Procrustes-based alignment is also widely used. Many methods learn view-specific embeddings $\matr{X}_i$ together with orthogonal transforms $\matr{R}_i$ and a consensus representation $\matr{Z}$ by solving objectives of the form
\begin{equation}
  \min_{\matr{Z},\matr{R}_i \in \mathcal{O}(\ell)} \sum_{i} \bigl\| \matr{X}_i \matr{R}_i - \matr{Z} \bigr\|_\mathrm{F}^2,
  \label{eq:mv-procrustes-generic}
\end{equation}
with variants that introduce adaptive view weights, Grassmannian constraints, or anchor-based parameterizations. For example, Multiview Clustering via Adaptively Weighted Procrustes (AWP) explicitly formulates clustering as a weighted Procrustes averaging problem over spectral embeddings from different views~\cite{Nie2018AWP}, while Multi-view Clustering with Adaptive Procrustes on Grassmann Manifold (MC-APGM) learns an indicator matrix that approximates multiple orthogonal spectral embeddings on the Grassmann manifold, again through an orthogonal Procrustes-type objective~\cite{Dong2022GrassmannProcrustes}. Similar multi-set Procrustes formulations appear in manifold alignment, where independently constructed embeddings of multiple graphs or feature views are brought into correspondence by an orthogonal map~\cite{Wang2008ManifoldProcrustes}.  

Beyond "classical" multi-view settings, OPP has become a standard primitive for aligning independently trained representation spaces. In natural language processing, Wasserstein--Procrustes objectives are used to align word or sentence embeddings across languages by jointly estimating an optimal transport plan and an orthogonal mapping between embedding spaces~\cite{Grave2019WassersteinProcrustes}. In knowledge-graph representation learning, highly efficient frameworks repeatedly apply closed-form orthogonal Procrustes updates to keep entity and relation embeddings in a shared space while reducing training time and memory footprint by orders of magnitude~\cite{Peng2021ProcrustesKGE}. In these works, orthogonal transformations are primarily valued for preserving inner products and norms, thereby keeping the geometric structure of each representation space intact while making different embeddings comparable.  

Conceptually, our ODC formulation shares the same mathematical backbone as these multi-set Procrustes methods: under our orthonormal-basis assumption on the secret bases in \S~\ref{section: ODC}, basis alignment in ODC can be written as
\begin{equation}
\min_{\matr{Z} \in \mathbb{R}^{a \times \ell},\,\matr{G}_i \in \mathcal{O}(\ell)} \quad \sum_{i=1}^{c} \|\matr{A}_i \matr{G}_i - \matr{Z}\|_\mathrm{F}^2.
\end{equation}
which is a multi-view OPP over the anchor representations $\matr{A}_i$. What fundamentally distinguishes ODC is the \emph{role} it plays and the way we exploit its non-uniqueness. Existing multi-view Procrustes approaches typically fix one view as a template, or implicitly choose a particular $Z$ (e.g., via the SVD of concatenated data) and then interpret the resulting orthogonal transforms as \emph{the} alignment; the fact that the solution set of~\eqref{eq:mv-procrustes-generic} is closed under right-multiplication by a common orthogonal matrix is treated as a benign identifiability issue and is not analyzed in connection with downstream learning objectives~\cite{Haxby2011CommonModel, Lorbert2012KernelHyperalignment, Nie2018AWP, Dong2022GrassmannProcrustes, Grave2019WassersteinProcrustes, Peng2021ProcrustesKGE}.  

In contrast, our notion of \emph{orthogonal concordance}~\ref{def:orth_concordance} elevates this indeterminacy to the main object of study. Assuming orthonormal secret bases $\matr{F}_i$, we show that there exists a common matrix $\matr{F}$ and orthogonal matrices $\matr{O}_i$ such that $\matr{F}_i = \matr{F} \matr{O}_i$, and that any Procrustes solution can be written in the form $\matr{G}_i^\star = \matr{O}_i^\top \matr{O}$ for an \emph{arbitrary} global orthogonal matrix $\matr{O}$. Consequently, all aligned representations $\matr{F}_i \matr{G}_i^\star$ collapse to the same product $\matr{F} \matr{O}$, and every choice of $\matr{O}$ in the Procrustes equivalence class yields \emph{identical} downstream predictions for distance-based models (and, empirically, near-identical performance more broadly). This invariance property — that the entire multi-set Procrustes solution set induces the same collaborative model — is what we call orthogonal concordance, and it is specific to the DC setting with orthonormal secret bases and anchor-only access to the data.  

Finally, most Procrustes-based multi-view methods assume a centralized optimizer that directly observes all raw views $\matr{X}_i$ or their feature embeddings and do not model privacy or communication constraints~\cite{Nie2018AWP, Dong2022GrassmannProcrustes, Grave2019WassersteinProcrustes, Peng2021ProcrustesKGE}. ODC instead operates in a privacy-preserving DC pipeline, where the analyst only sees intermediate anchor representations $\matr{A}_i$ and user-transformed data. The goal is not merely to reduce alignment error, but to guarantee that the entire equivalence class of Procrustes solutions yields the same collaborative model. This theoretical shift---from \emph{finding a good orthogonal alignment} to \emph{characterizing all alignments that are provably equivalent for learning}---is the key difference between classical Procrustes-based multi-view alignment and our concept of orthogonal concordance.

\subsection{Privacy-Preserving Machine Learning}
\label{subsection: PPML}

Privacy-preserving machine learning (PPML) broadly studies how to learn from sensitive data without directly centralizing raw records. Among the most widely deployed paradigms is \emph{federated learning} (FL), where clients repeatedly run local training and a server aggregates model updates over many communication rounds (e.g., FedAvg) \cite{FL1}. Because FL communicates gradients/weights, it does not, by itself, provide a formal privacy guarantee; consequently, FL is often paired with \emph{differential privacy} (DP) mechanisms that add calibrated noise to client updates to obtain an $(\varepsilon,\delta)$-DP guarantee \cite{DP, NbaFL}. DP provides worst-case, distribution-free protection—output distributions remain stable to the inclusion/exclusion of any single record—but can impose a non-trivial privacy--utility tradeoff, especially at strong privacy levels \cite{DP}.

\emph{Data collaboration} (DC) offers a complementary PPML design point: instead of iteratively sharing model updates, each party shares \emph{one-shot intermediate representations} obtained by applying a private (secret) transform to its data (and to a shared anchor), after which an analyst aligns representations into a common space for downstream learning \cite{FL1, DC-DP}. This yields a different theoretical separation from FL/DP:
\begin{itemize}
  \item \textbf{Optimization object:} FL performs distributed optimization of model parameters via iterative aggregation \cite{FL1}, whereas DC centralizes \emph{aligned representations} and can then train arbitrary downstream models in that shared space \cite{DCframework1}.
  \item \textbf{Communication pattern:} FL typically requires many bidirectional rounds until convergence; DC uses a single uplink of representations (plus a one-time anchor broadcast), which can be advantageous in cross-silo settings with large models \cite{FL1, DCframework1}.
  \item \textbf{Privacy semantics:} DP provides a formal indistinguishability guarantee via noise \cite{DP}, while DC primarily relies on secrecy of the local transform (often analyzed under semi-honest assumptions) rather than an $(\varepsilon,\delta)$ guarantee \cite{DCprivacy}.
\end{itemize}

Our proposed ODC remains within the DC family, retaining DC's one-shot protocol, while constraining alignment to orthogonal transformations, thereby preserving geometric structure and simplifying the analytical alignment process. Finally, DP and DC are not mutually exclusive: DP can be layered on top of released representations to strengthen protection under collusion, and DP-enhanced DC has been explicitly explored \cite{DC-DP}.

\section{Existing Basis Alignment}
\label{section: Basis Alignment}

Focusing on Step~10 of \textbf{Algorithm}~\ref{alg: DCframework}, we now address the central question of DC:
\begin{equation}
\textit{How can we construct the change-of-basis matrices } \matr{G}_i \textit{ without access to the secret bases } \matr{F}_i \textit{?}
\end{equation}
In this section, we review the two prevailing basis-alignment strategies employed in existing DC frameworks—Imakura-DC and Kawakami-DC—and analyze their concordance properties and computational efficiency, measured by FLOP count and peak memory usage.

\subsection{Imakura's Basis Alignment (Imakura-DC)}
\label{subsection: Imakura}

Here, we review the basis-alignment method proposed by Imakura \textit{et al.}~\cite{DCprivacy}. To begin with, we introduce \textbf{Assumption}~\ref{assumptions: DC}.

\begin{assumption}
    \label{assumptions: DC}
    We impose the following assumptions on the anchor dataset \( \matr{A} \in \mathbb{R}^{a \times m} \) and the secret bases \( \matr{F}_i \in \mathbb{R}^{m \times \ell} \; (m \geq \ell) \), for all \(i \in [c]\):
    \begin{enumerate}[label=\arabic*)]
        \item \( \mathrm{rank}(\matr{A}) = m \);
        \item There exists \( \matr{E}_i \in \mathcal{GL}(\ell) \coloneqq \{ \matr{R} \in \mathbb{R}^{\ell \times \ell} : \mathrm{rank}(\matr{R}) = \ell \}  \) such that \( \matr{F}_i = \matr{F}_1\matr{E}_i \).
    \end{enumerate}
\end{assumption}

Condition 1) is a common requirement in the DC literature. For instance, selecting \( \matr{A} \)  to be a uniformly random matrix (with appropriate dimensions) ensures \(\mathrm{rank}(\matr{A}) = m\), which is standard practice. Condition 2) requires all intermediate representations to span an identical \(\ell\)-dimensional subspace. 

The central task for the analyst is to construct suitable change-of-basis matrices \( \matr{G}_i \) that \emph{align} the secret bases \( \matr{F}_i \), even though these bases are never directly observed. Formally, the objective is to identify matrices \( \matr{G}_i \in \mathcal{GL}(\ell) \) for each \( i \in [c] \) such that
\begin{equation}
    \label{eq: basis equation}
    \matr{F}_1 \matr{G}_1 \;=\; \matr{F}_2 \matr{G}_2 \;=\; \dots \;=\; \matr{F}_c \matr{G}_c.
\end{equation}
Note that such $\matr{G}_i$ exist if and only if condition 2) of \textbf{Assumption}~\ref{assumptions: DC} is met.

Since the analyst has access to the intermediate representations of a common anchor dataset, defined as \(\matr{A}_i = \matr{A}\matr{F}_i \in \mathbb{R}^{a\times \ell}\), where \(\matr{A}\) remains consistent across all users, it follows that any set of matrices \(\matr{G}_i\) satisfying~\eqref{eq: basis equation} necessarily satisfy the following condition:

\begin{equation}
    \label{eq:anchor_equation}
    \matr{A}_1\matr{G}_1 = \matr{A}_2\matr{G}_2 = \dots = \matr{A}_c\matr{G}_c.
\end{equation}

Consequently, the matrices \(\matr{G}_i\) that satisfy equation~\eqref{eq:anchor_equation} are necessarily those that minimize the following optimization problem:

\begin{equation}
    \label{problem:non-compact}
    \min_{\matr{G}_i, \matr{G}_j \in \mathcal{GL}(\ell)}\quad
    \sum_{i=1}^{c}\sum_{j=1}^{c}
    \bigl\|
        \matr{A}_{i}\matr{G}_{i}-\matr{A}_{j}\matr{G}_{j}
    \bigr\|_{\mathrm{F}}^{2}.
\end{equation}

A crucial observation is that \( \mathcal{GL}(\ell) \) is inherently non-compact. This non-compactness poses significant difficulties, specifically that one can construct a sequence of invertible matrices whose singular values decrease, causing the sequence to converge to the zero matrix. Such convergence undermines the stability of the optimization and complicates the identification of meaningful solutions. This ill-posedness can, however, be alleviated by a judicious, \textit{a priori} selection of some target matrix \( \matr{Z} \in  \mathbb{R}^{a \times \ell} \). Specifically, define \( \matr{Z} = \matr{U}\matr{R} \), where \( \matr{U} \) comprises the top \(\ell\) left singular vectors of the concatenated matrix \(\begin{bmatrix}\matr{A}_1 & \cdots & \matr{A}_c\end{bmatrix}\), and set an arbitrary invertible matrix \(\matr{R} \in \mathcal{GL}(\ell)\). Problem~\eqref{problem:non-compact} reduces to:
\begin{equation}
    \label{eq:DC_Gi}
    \min_{\matr{G}_i \in \mathcal{GL}(\ell)} \|\matr{A}_i\matr{G}_i - \matr{Z}\|_\mathrm{F}^2.
\end{equation}
The optimization problem in~\eqref{eq:DC_Gi} admits closed-form analytical solutions given by \(\matr{G}_i^* = \matr{A}_i^\dagger \matr{Z}\) for each \(i \in [c]\). Imakura-DC's basis alignment procedure is summarized in \textbf{Algorithm}~\ref{alg: Imakura-DC}. 

\begin{algorithm}[t]
\caption{Imakura-DC's Basis Alignment Procedure~\cite{DCprivacy}}
\label{alg: Imakura-DC}
\SetAlgoLined
\DontPrintSemicolon
\LinesNumbered
\SetNlSty{textbf}{}{:}    
\SetNlSkip{0.5em}        

\SetKwInOut{Input}{Input}
\SetKwInOut{Output}{Output}

\Input{\( \matr{A}_i \in \mathbb{R}^{a \times \ell} \) for each user \( i \in [c] \)}
\Output{\(\matr{G}_i^* \in \mathcal{GL}(\ell) \) for each user \( i \in [c] \)}
\BlankLine
\Begin{
    Form the concatenated anchor $\big[\,\matr{A}_1~\cdots~\matr{A}_c\,\big]$ and compute its top $\ell$ left singular vectors $\matr{U}$\;
    Set the target $\matr{Z} = \matr{U}\matr{R}$ with any $\matr{R}\in\mathcal{GL}(\ell)$ (often $\matr{R} = \matr{I}$)\;
    Set $\matr{G}_i^* = \matr{A}_i^\dagger\,\matr{Z}$ for all $i$\;
}

\end{algorithm}

\subsubsection{Weak Concordance of Imakura-DC}
\label{subsubsection: Imakura concordance}

To theoretically assess the downstream performance of the resulting change-of-basis matrices $\matr{G}_i^*$, we analyze their sufficiency with respect to Eq.~\eqref{eq: basis equation}. We formalize \emph{weak concordance} as follows:

\begin{definition} \label{def:weak_concordance} (Weak Concordance)\\ 
The change-of-basis matrices \(\matr{G}_i \in \mathcal{GL}(\ell)\), for \( i \in [c] \), satisfy weak concordance if 
\begin{equation} 
\matr{F}_1 \matr{G}_1 = \matr{F}_2 \matr{G}_2 = \dots = \matr{F}_c \matr{G}_c. 
\end{equation} 
\end{definition} 

\begin{theorem}

    \label{theorem: Imakura concordance}
    (Weak Concordance of Imakura-DC (cf.~\cite{DCprivacy}))\\
    Suppose that we observe matrices \(\matr{A}_i = \matr{A}\matr{F}_i\), \( i \in [c] \), with \(\matr{A}\in \mathbb{R}^{a\times m}\) and \(\matr{F}_i\in\mathbb{R}^{m\times \ell}\). Under \textbf{Assumption}~\ref{assumptions: DC}, let \(\matr{U}\) denote the matrix formed by the top \(\ell\) left singular vectors of the concatenation $\big[\,\matr{A}_1~\cdots~\matr{A}_c\,\big]$.  
    Then, for each \(i \in [c]\), the solution \(\matr{G}_i^* = \matr{A}_i^\dagger \matr{Z}\) to the optimization problem:
    \begin{equation}
        \label{eq:DC_Gi_repeated}
        \min_{\matr{G}_i \in \mathbb{R}^{\ell\times \ell}} \|\matr{A}_i\matr{G}_i - \matr{Z}\|_\mathrm{F}^2,
    \end{equation}
    where \(\matr{Z}=\matr{U}\matr{R}\) for an arbitrary invertible matrix \(\matr{R}\in\mathcal{GL}({\ell})\), is weakly concordant~(\textbf{Definition}~\ref{def:weak_concordance}).
    
\end{theorem}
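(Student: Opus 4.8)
The plan is to peel the statement apart in three layers: first collapse all anchor representations onto one common subspace, then establish the \emph{anchor-level} equality $\matr{A}_i\matr{G}_i^* = \matr{Z}$ for every $i$, and finally lift this back to the \emph{basis-level} equality of Definition~\ref{def:weak_concordance} using the full-rank property of $\matr{A}$. To begin, I would invoke condition 2) of Assumption~\ref{assumptions: DC} to write $\matr{A}_i = \matr{A}\matr{F}_i = \matr{A}\matr{F}_1\matr{E}_i = \matr{A}_1\matr{E}_i$ with $\matr{E}_i\in\mathcal{GL}(\ell)$. Since $\matr{E}_i$ is invertible, every $\matr{A}_i$ shares the same column space $\mathcal{S} := \mathrm{col}(\matr{A}_1)$; by condition 1), $\mathrm{rank}(\matr{A})=m$, so $\matr{A}$ is injective, and together with the full column rank of $\matr{F}_1$ implicit in condition 2) (the intermediate representations span an $\ell$-dimensional subspace), this subspace is exactly $\ell$-dimensional. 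Because the concatenation $[\,\matr{A}_1~\cdots~\matr{A}_c\,]$ then has column space $\mathcal{S}$, its top $\ell$ left singular vectors $\matr{U}$ form an orthonormal basis of $\mathcal{S}$, and right-multiplication by the invertible $\matr{R}$ preserves the span, so $\mathrm{col}(\matr{Z}) = \mathcal{S}$.

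Next, I would establish the anchor equation~\eqref{eq:anchor_equation}. The matrix $\matr{A}_i\matr{A}_i^\dagger$ is the orthogonal projector onto $\mathrm{col}(\matr{A}_i)=\mathcal{S}$, and every column of $\matr{Z}$ lies in $\mathcal{S}$; hence substituting the closed-form solution $\matr{G}_i^* = \matr{A}_i^\dagger\matr{Z}$ yields $\matr{A}_i\matr{G}_i^* = \matr{A}_i\matr{A}_i^\dagger\matr{Z} = \matr{Z}$ for each $i$, with the right-hand side independent of $i$.

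Finally, I would lift this to the basis equation. Rewriting the previous identity as $\matr{A}\,(\matr{F}_i\matr{G}_i^*) = \matr{Z}$ and using $\mathrm{rank}(\matr{A})=m$, so that $\matr{A}^\dagger\matr{A}=\matr{I}_m$, I left-multiply by $\matr{A}^\dagger$ to obtain $\matr{F}_i\matr{G}_i^* = \matr{A}^\dagger\matr{Z}$ for all $i$. Since the right-hand side does not depend on $i$, this gives $\matr{F}_1\matr{G}_1^*=\cdots=\matr{F}_c\matr{G}_c^*$, which is exactly weak concordance.

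The main obstacle is this last step: the anchor-level equality $\matr{A}_i\matr{G}_i^*=\matr{Z}$ does not by itself force the products $\matr{F}_i\matr{G}_i^*$ to coincide, because $\matr{A}$ is a tall matrix and cannot be cancelled in general. The crucial ingredient is the injectivity of $\matr{A}$ guaranteed by condition 1) of Assumption~\ref{assumptions: DC}, which makes $\matr{A}^\dagger\matr{A}=\matr{I}_m$ and legitimizes the cancellation; this is precisely where the full-rank anchor assumption is indispensable. A secondary, more technical point is the rank bookkeeping needed to guarantee that the top $\ell$ left singular vectors span $\mathcal{S}$ \emph{exactly} rather than picking up spurious directions from the null space, which relies on $\dim\mathcal{S}=\ell$ and hence on the full column ranks of $\matr{A}$ and $\matr{F}_1$.
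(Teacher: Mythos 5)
Your argument is correct and follows essentially the same skeleton as the paper's proof: both hinge on the observation that $\mathrm{col}(\matr{Z})$ coincides with the common column space of the $\matr{A}_i$, so that the least-squares fit is exact. The one place you diverge is the final step. The paper writes $\matr{Z}=\matr{A}_1\matr{Q}=\matr{A}_i(\matr{E}_i^{-1}\matr{Q})$ and uses $\matr{A}_i^\dagger\matr{A}_i=\matr{I}_\ell$ to compute $\matr{G}_i^*=\matr{E}_i^{-1}\matr{Q}$ in closed form, after which $\matr{F}_i\matr{G}_i^*=\matr{F}_1\matr{Q}$ is immediate; you instead establish the anchor-level identity $\matr{A}_i\matr{G}_i^*=\matr{A}_i\matr{A}_i^\dagger\matr{Z}=\matr{Z}$ via the projector property and then cancel the injective $\matr{A}$ on the left. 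Both routes are valid, and yours correctly isolates where $\mathrm{rank}(\matr{A})=m$ is indispensable. One small omission: Definition~\ref{def:weak_concordance} requires $\matr{G}_i\in\mathcal{GL}(\ell)$, and your argument never verifies that $\matr{G}_i^*$ is invertible. The paper gets this for free from the explicit form $\matr{G}_i^*=\matr{E}_i^{-1}\matr{Q}$; in your version you would need one extra line, e.g.\ noting that $\matr{A}_i\matr{G}_i^*=\matr{Z}$ has rank $\ell$ (since $\matr{U}$ has orthonormal columns and $\matr{R}$ is invertible), which forces $\mathrm{rank}(\matr{G}_i^*)=\ell$.
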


\begin{proof}

Since all \(\matr{A}_i\) share the same \(\ell\)-dimensional column space, the top \( \ell \) left singular vectors \( \matr{U} \) of $\big[\,\matr{A}_1~\cdots~\matr{A}_c\,\big]$ also lie in the same column space. Therefore, there exists some invertible matrix \( \matr{Q} \in \mathcal{GL}(\ell)\) such that:
\begin{equation}
\matr{Z} = \matr{A}_1 \matr{Q}.
\end{equation}
Then we can write:
\begin{equation}
\matr{Z} = \matr{A}_1 \matr{Q} = \matr{A} \matr{F}_1 \matr{E}_i (\matr{E}_i^{-1} \matr{Q}) = \matr{A}_i (\matr{E}_i^{-1} \matr{Q}).
\end{equation}
Hence, 
\begin{equation}
\matr{G}_i^* = \matr{A}_i^\dagger \matr{Z} = \matr{A}_i^\dagger \matr{A}_i (\matr{E}_i^{-1} \matr{Q}) = \matr{E}_i^{-1} \matr{Q},
\end{equation}
and therefore, we have
\begin{equation}
\matr{F}_1 \matr{G}_1^* = \cdots = \matr{F}_c \matr{G}_c^*,
\end{equation}
which completes the proof.

\end{proof}

\textbf{Theorem~\ref{theorem: Imakura concordance}} establishes that the invertible right factor \(\matr{R}\in\mathbb{R}^{\ell\times \ell}\) of the target matrix \(\matr{Z}=\matr{U}\matr{R}\) can be chosen arbitrarily while preserving weak concordance. This flexibility naturally prompts the question of whether such arbitrary choices could negatively impact downstream model performance. Unfortunately, the answer is affirmative. Empirical evidence examining this issue is presented in \S\ref{sec: concordance evaluation}. Intuitively, selecting a target matrix that disproportionately emphasizes certain directions within the feature space may adversely affect model accuracy and utility. Although recent studies empirically indicate improved performance when choosing \(\matr{R} = \matr{I}\)~\cite{DCframework1}, this particular choice remains heuristic and lacks rigorous theoretical justification, suggesting potential suboptimality.

Consequently, the practical utility of \textbf{Definition}~\ref{def:weak_concordance} and \textbf{Theorem}~\ref{theorem: Imakura concordance} is inherently limited. Indeed, Imakura's basis alignment method exhibits a notable discrepancy between its theoretical guarantees and empirical performance.

\subsubsection{FLOPs of Imakura-DC}
\label{subsubsec: Imakura FLOPs}

We adopt standard BLAS/LAPACK floating-point operation (FLOP) counts~\cite{GEMM, QR, SVD, trinv}, summarized in Table~\ref{tab:flopdef}.

\begin{table}[h]
\centering
\caption{Standard matrix operations and their approximate floating-point operation (flop) counts.}
\begin{tabular}{lll}
\hline
Operation & Matrix & Approximate FLOPs \\ \hline
Matrix product ($\matr{C} = \matr{A}\matr{B}$)
  & $\matr{A} \in \mathbb{R}^{m \times k},\ \matr{B} \in \mathbb{R}^{k \times n}$
  & $\approx 2 m k n$ \\[4pt]

QR factorization (Householder, $n = \min\{m, n\}$)
  & $\matr{A} \in \mathbb{R}^{m \times n}$
  & $\approx 2 m n^{2} - \tfrac{2}{3} n^{3}$ \\[4pt]

SVD ($n = \min\{m, n\}$)
  & $\matr{A} \in \mathbb{R}^{m \times n}$
  & $\approx 4 m n^{2} + 8 n^{3}$ \\[4pt]

Triangular inverse
  & $\matr{R} \in \mathbb{R}^{n \times n}$
  & $\approx \tfrac{1}{3} n^{3}$ \\ \hline
\end{tabular}
\label{tab:flopdef}
\end{table}

In \textbf{Algorithm}~\ref{alg: Imakura-DC}, the concatenated anchor matrix
\begin{equation}
\bigl[\,\matr{A}_1~\cdots~\matr{A}_c\,\bigr] \in \mathbb{R}^{a \times c\ell}
\end{equation}
has size $a \times c\ell$. Let
\begin{equation}
p = \min\{a, c\ell\},\qquad q = \max\{a, c\ell\}.
\end{equation}
Using the FLOP counts in Table~\ref{tab:flopdef}, we estimate the cost of
Imakura-DC as follows:
\begin{itemize}
  \item Computing the SVD of the $a \times c\ell$ matrix
        $\bigl[\,\matr{A}_1~\cdots~\matr{A}_c\,\bigr]$ costs
        \begin{equation}
        4 q p^{2} + 8 p^{3} \ \text{FLOPs}.
        \end{equation}
  \item The matrix $\matr{U} \in \mathbb{R}^{a \times \ell}$ contains the top
        $\ell$ left singular vectors. Forming
        $\matr{Z} = \matr{U}\matr{R}$ multiplies an $a \times \ell$ matrix by
        an $\ell \times \ell$ matrix and costs
        \begin{equation}
        2 a \ell^{2} \ \text{FLOPs}.
        \end{equation}
  \item For each user $i \in [c]$, forming the Moore--Penrose pseudoinverse
        of $\matr{A}_i \in \mathbb{R}^{a \times \ell}$ via SVD and multiplying
        it by $\matr{Z}$ costs
        \begin{equation}
        4 a \ell^{2} + 8 \ell^{3} \;+\; 2 a \ell^{2}
        \;=\; 6 a \ell^{2} + 8 \ell^{3} \ \text{FLOPs}.
        \end{equation}
\end{itemize}
Thus, the total approximate FLOP count for Imakura-DC is
\begin{equation}
\label{eq:imakura-flops}
4 q p^{2} + 8 p^{3}
   + 2 a \ell^{2}
   + c\bigl(6 a \ell^{2} + 8 \ell^{3}\bigr) 
= 4 q p^{2} + 8 p^{3}
   + (6c + 2)\,a \ell^{2}
   + 8 c \ell^{3} \ \text{FLOPs}.
\end{equation}
Under the assumption $a > \ell$, this can be summarized in big-$O$ notation as
\begin{equation}
O\bigl(\min\{a(c\ell)^{2},\, a^{2}c\ell\}\bigr).
\end{equation}

\subsubsection{Peak memory of Imakura-DC}
\label{subsubsec: Imakura memory}
We measure memory in units of real scalars. The $c$ matrices $\matr{A}_i \in \mathbb{R}^{a \times \ell}$ can be viewed as a single concatenated anchor
\begin{equation}
  \bar{\matr{A}}
  = \bigl[\,\matr{A}_1~\cdots~\matr{A}_c\,\bigr]
  \in \mathbb{R}^{a \times c\ell},
\end{equation}
which already requires $a c \ell$ scalars to store.  Let
\begin{equation}
  p = \min\{a, c\ell\}.
\end{equation}
We now bound the peak memory footprint of \textbf{Algorithm}~\ref{alg: Imakura-DC} by inspecting each step. We assume $a > \ell$ throughout.

\begin{itemize}
  \item \textbf{SVD of the concatenated anchor.}
        During the computation of the SVD of $\bar{\matr{A}}$ we store $\bar{\matr{A}}$ itself ($a c \ell$ scalars), the top $\ell$ left singular vectors $\matr{U} \in \mathbb{R}^{a \times \ell}$ ($a\ell$ scalars), and an SVD workspace of size $O(p^{2})$ scalars. Since $p = \min\{a, c\ell\}$, we have $p^{2} \le a c \ell$, and clearly $a\ell \le a c \ell$, so this step uses
        \begin{equation}
          a c \ell + O(a\ell + p^{2})
          = O(a c \ell).
        \end{equation}

  \item \textbf{Forming the target $\matr{Z} = \matr{U}\matr{R}$.} Next, we form the target
        $\matr{Z} = \matr{U}\matr{R}$ with
        $\matr{Z} \in \mathbb{R}^{a \times \ell}$ and
        $\matr{R} \in \mathcal{GL}(\ell)$, which require $a\ell$ and
        $\ell^{2}$ scalars, respectively.  At this point, we still store
        $\bar{\matr{A}}$ and $\matr{U}$, so the memory usage is
        \begin{equation}
          a c \ell + 2 a \ell + \ell^{2}
          = O(a c \ell).
        \end{equation}
        (The SVD workspace from the previous step can be released before
        forming $\matr{Z}$.)

  \item \textbf{Per-user pseudoinverse and alignment.}
        For each $i \in [c]$, we form $\matr{A}_i^\dagger$ via the SVD
        of $\matr{A}_i \in \mathbb{R}^{a \times \ell}$ and multiply by
        $\matr{Z}$.  This per-user computation requires
        $O(a\ell + \ell^{2})$ scalars of workspace, in
        addition to storing $\bar{\matr{A}}$ and $\matr{Z}$.  Storing
        all outputs $\matr{G}_i^*$, where
        $\matr{G}_i^* \in \mathbb{R}^{\ell \times \ell}$, costs
        $c \ell^{2}$ scalars.  Thus, near the end of the loop, we have
        \begin{equation}
          a c \ell
            + O(a\ell + c \ell^{2}).
        \end{equation}
\end{itemize}

Since $a c \ell \ge a\ell$ for all $c \ge 1$ and, under the natural
regime $a \ge \ell$, also $a c \ell \ge c \ell^{2}$, the peak memory
requirement is dominated by storing the concatenated anchor:
\begin{equation}
  a c \ell + O(a\ell + c \ell^{2})
  = \Theta(a c \ell).
\end{equation}

\subsection{Kawakami's Basis Alignment (Kawakami-DC)}
\label{subsection: Kamakami}

Imakura--DC formulates basis alignment by introducing an explicit target matrix that each user must match. Kawakami \textit{et al.}~\cite{KawakamiDC} instead propose a target-free formulation that directly aligns the user-specific anchor representations. In the setting of Problem~\eqref{problem:non-compact}, the main difficulty is that the feasible set is non-compact: without additional constraints, one can rescale the change-of-basis matrices so that their singular values decay and the objective is minimized by sequences converging to the zero matrix.

To avoid this degeneracy, Kawakami \textit{et al.}~\cite{KawakamiDC} introduce column-wise normalization constraints. Kawakami--DC seeks matrices $\matr{G}_i$ that solve
\begin{equation}
    \label{prob: Kawakami}
    \begin{aligned}
        \min_{\matr{G}_{i}\in \mathbb{R}^{\ell \times \ell}}\quad
        & \sum_{i=1}^{c}\sum_{j=1}^{c}
        \bigl\|
            \matr{A}_{i}\matr{G}_{i}-\matr{A}_{j}\matr{G}_{j}
        \bigr\|_{\mathrm{F}}^{2} \\[4pt]
        \text{s.t.}\quad
        & \sum_{i=1}^{c}\bigl\|\matr{A}_{i} g_{i,k}\bigr\|_{2}^{2}=1,
        \qquad k\in\{1, 2, \ldots, \ell\},
    \end{aligned}
\end{equation}
where $g_{i,k}$ denotes the $k$-th column of $\matr{G}_i$.

The constraint fixes the global scale of the $k$-th aligned feature across all users, as measured on the anchor, while leaving its direction free. Under the standard DC assumption that each $\matr{A}_i$ has full column rank, the constraint does not restrict which directions $g_{i,k}$ are admissible: for any nonzero collection of $g_{i,k}$, we can always rescale them to satisfy:
\begin{equation}
    \sum_{i=1}^{c}\bigl\|\matr{A}_{i} g_{i,k}\bigr\|_{2}^{2}=1.
\end{equation}
For Problem~\eqref{prob: Kawakami}, Kawakami \textit{et al.}~\cite{KawakamiDC} show that the matrices $\matr{G}_i$ can be computed efficiently via a QR-SVD-based algorithm (\textbf{Algorithm}~\ref{alg: Kawakami-DC}).

Importantly, the primary role of this formulation is to exclude the trivial all-zero solution; it does \emph{not} guarantee that the resulting matrices \(\matr{G}_i\) are invertible (and thus they need not constitute genuine change-of-basis matrices). Moreover, the concordance properties of Kawakami-DC remain unknown and constitute an open problem for future work.

\begin{algorithm}[t]
    \caption{Kawakami-DC's Basis Alignment Procedure~\cite{KawakamiDC}}
    \label{alg: Kawakami-DC}
    \SetAlgoLined \DontPrintSemicolon \LinesNumbered
    \SetNlSty{textbf}{}{:}
    \SetNlSkip{0.5em}
    \SetKwInOut{Input}{Input}
    \SetKwInOut{Output}{Output}
    \Input{$\matr{A}_{i}\in \mathbb{R}^{a \times \ell}$ for each user $i \in [c]$}
    \Output{$\matr{G}_{i}^{*}\in \mathbb{R}^{\ell \times \ell}$ for each user $i \in [c]$}
    \BlankLine
    \Begin{
        \For{$i \in [c]$}{
            Compute a thin QR factorization $\matr{A}_{i} = \matr{Q}_{i}\matr{R}_{i}$\;
        }
        Form the concatenated matrix
        $\matr{W}_{Q}\coloneqq [\,\matr{Q}_{1}~\cdots~\matr{Q}_{c}\,]\in\mathbb{R}^{a\times c\ell}$\;
        Compute the SVD $\matr{W}_{Q}= \matr{U}\matr{\Sigma}\matr{V}^{\top}$\;
        Let $\matr{V}_{\ell}$ contain the $\ell$ right singular vectors of $\matr{W}_{Q}$ associated with the largest singular values\;
        \For{$k \in \{1,2,\ldots,\ell\}$}{
            Set $v'_{k}$ to the $k$-th column of $\matr{V}_{\ell}$ and partition
            $v'_{k}$ into blocks $\hat{g}_{i,k}\in\mathbb{R}^{\ell}$ for all $i\in[c]$\;
        }
        \For{$i \in [c]$}{
            Set
            $\matr{G}_{i}^{*}\coloneqq [\,\matr{R}_{i}^{-1}\hat{g}_{i,1}~\cdots~\matr{R}_{i}^{-1}\hat{g}_{i,\ell}\,]$\;
        }
    }
\end{algorithm}

\subsubsection{FLOPs of Kawakami-DC}
\label{subsubsec: Kawakami FLOPs}

In Kawakami-DC, each anchor matrix $\matr{A}_{i}\in\mathbb{R}^{a\times \ell}$ is factorized as $\matr{A}_{i}=\matr{Q}_{i}\matr{R}_{i}$, and the orthonormal factors are concatenated to form
\begin{equation}
    \matr{W}_{Q}
    = \bigl[\,\matr{Q}_{1}~\cdots~\matr{Q}_{c}\,\bigr]
    \in \mathbb{R}^{a\times c\ell}.
\end{equation}
Let
\begin{equation}
    p = \min\{a, c\ell\},\qquad q = \max\{a, c\ell\}.
\end{equation}
Using the FLOP counts in Table~\ref{tab:flopdef}, we estimate the cost of Kawakami-DC (\textbf{Algorithm}~\ref{alg: Kawakami-DC}) as follows:

\begin{itemize}
    \item For each $\matr{A}_{i}\in\mathbb{R}^{a\times \ell}$, the thin QR
        factorization $\matr{A}_{i}=\matr{Q}_{i}\matr{R}_{i}$ costs
        \begin{equation}
            2 a\ell^{2} - \frac{2}{3}\ell^{3}\ \text{FLOPs}.
        \end{equation}
        Summed over all $c$ users, the total becomes
        \begin{equation}
            c\Bigl(2 a\ell^{2} - \frac{2}{3}\ell^{3}\Bigr).
        \end{equation}

    \item Computing the SVD of the concatenated matrix
        $\matr{W}_{Q}\in\mathbb{R}^{a\times c\ell}$ costs
        \begin{equation}
            4 q p^{2} + 8 p^{3}\ \text{FLOPs}.
        \end{equation}

    \item For each user $i\in[c]$, we explicitly compute the inverse
        $\matr{R}_{i}^{-1}$ of the upper triangular matrix $\matr{R}_{i}$. This costs
        \begin{equation}
            \frac{1}{3}\ell^{3}\ \text{FLOPs}.
        \end{equation}
        Then, for each $k\in[\ell]$, we recover $g_{i,k}$ via the matrix--vector multiplication $\matr{R}_{i}^{-1}\hat g_{i,k}$, which costs $\ell^{2}$ FLOPs per column. Since there are $\ell$ columns, the total per user is
        \begin{equation}
            \frac{1}{3}\ell^{3} + \ell^{3}
            = \frac{4}{3}\ell^{3}.
        \end{equation}
        Summed over all users, the recovery step costs
        \begin{equation}
            \frac{4}{3} c\,\ell^{3}.
        \end{equation}
\end{itemize}

Thus, the total approximate FLOP count for Kawakami-DC is
\begin{equation}
    \label{eq:kawakami-flops-inverse}
    4 q p^{2} + 8 p^{3}
    + c\Bigl(2 a\ell^{2} - \frac{2}{3}\ell^{3}\Bigr)
    + \frac{4}{3}c\,\ell^{3}
    =
    4 q p^{2} + 8 p^{3}
    + c\!\left(2 a\ell^{2}
        + \frac{2}{3}\ell^{3}
      \right)
    \ \text{FLOPs}.
\end{equation}

Under the assumption $a > \ell$, this can be summarized in big-$O$ notation as
\begin{equation}
    O\bigl(\min\{a(c\ell)^{2},\, a^{2}c\ell\}\bigr).
\end{equation}

\subsubsection{Peak memory of Kawakami-DC}
\label{subsubsec: Kawakami memory}

We measure memory in units of real scalars. The $c$ matrices $\matr{A}_{i}\in \mathbb{R}^{a\times \ell}$ are again stored explicitly, requiring
\begin{equation}
  a c \ell
\end{equation}
scalars in total. Let
\begin{equation}
  p = \min\{a,\, c\ell\}.
\end{equation}
We now inspect each step of the Kawakami-DC procedure and bound its peak memory footprint.

\begin{itemize}

  \item \textbf{QR factorizations of $\matr{A}_{i}$.}
        Each $\matr{A}_{i}$ is factorized as $\matr{A}_{i} = \matr{Q}_{i}\matr{R}_{i}$, where $\matr{Q}_{i}\in\mathbb{R}^{a\times \ell}$ and $\matr{R}_{i}\in\mathbb{R}^{\ell\times \ell}$. While computing the QR factorization, we store $\matr{A}_{i}$ itself ($a\ell$ scalars), along with the Householder vectors and a workspace of size $O(a\ell)$ scalars. Since $a c \ell$ dominates $a\ell$, this step fits within
        \begin{equation}
          a c \ell + O(a\ell) = O(a c \ell).
        \end{equation}

  \item \textbf{Storing all $\matr{Q}_{i}$ and $\matr{R}_{i}$.}
        After QR has been completed for all users, we overwrite each $\matr{A}_{i}$ with $\matr{Q}_{i}$ and store
        \begin{equation}
          \matr{Q}_{1},\dots,\matr{Q}_{c}\in\mathbb{R}^{a\times \ell},
          \qquad
          \matr{R}_{1},\dots,\matr{R}_{c}\in\mathbb{R}^{\ell\times \ell}.
        \end{equation}
        This costs
        \begin{equation}
          c(a\ell) + c\ell^{2} = a c \ell + c\ell^{2}.
        \end{equation}
        Since $a > \ell$, we have $a c \ell > c\ell^{2}$, hence this stage also requires $O(a c \ell)$ scalars.

  \item \textbf{SVD of the concatenated matrix $\matr{W}_{Q} = [\,\matr{Q}_{1}~\cdots~\matr{Q}_{c}\,]$.} The concatenated matrix $\matr{W}_{Q}\in\mathbb{R}^{a\times c\ell}$ requires $a c \ell$ scalars to store. The SVD workspace uses $O(p^{2})$ scalars with $p = \min\{a,\, c\ell\} \le a c \ell$. Thus, during SVD, we store
        \begin{equation}
          a c \ell + O(p^{2}) = O(a c \ell).
        \end{equation}

  \item \textbf{Recovery of the vectors $g_{i,k}$.}
        From the SVD, we obtain the blocks $\hat g_{i,k}\in\mathbb{R}^{\ell}$. To recover the original vectors $g_{i,k}$ we solve $\matr{R}_{i} g_{i,k} = \hat g_{i,k}$ by back substitution. Each solve requires $O(\ell^{2})$ scalars of temporary workspace. Since we already store $\matr{W}_{Q}$ (or the $\matr{Q}_{i}$) and $\matr{R}_{i}$, the memory during this step is
        \begin{equation}
          a c \ell + O(\ell^{2}) = O(a c \ell).
        \end{equation}
        Storing all $\matr{G}_{i}^{*} \in \mathbb{R}^{\ell\times \ell}$ costs $c\ell^{2}$ scalars, which is dominated by $a c \ell$.
\end{itemize}

Combining all contributions, the peak memory usage is dominated by storing the concatenated $\matr{Q}$ matrices (equivalently, the anchor data), giving:
\begin{equation}
  a c \ell + O(a\ell + c\ell^{2}) = \Theta(a c \ell).
\end{equation}
The term $a c \ell$ dominates all others, so the peak memory of Kawakami-DC (QR--SVD) matches that of Imakura-DC.

\section{Proposed Basis Alignment}
\label{section: ODC}

Here, we present our proposed basis-alignment procedure, \emph{Orthonormal Data Collaboration (ODC)}. Our theoretical findings, along with a comparison to existing DC alignment methods, are summarized in Table~\ref{tab: DCcomparison}. In this section, we proceed as follows:
\begin{enumerate}
    \item We show that, under \textbf{Assumption}~\ref{assumptions: ODC}, the basis-alignment problem naturally reduces to the classical Orthogonal Procrustes Problem, which admits a closed-form solution.
    \item We address the instability inherent in the notion of weak concordance and introduce a refined notion, \emph{orthogonal concordance}.
    \item We prove that the ODC basis-alignment procedure satisfies orthogonal concordance.
    \item We analyze the computational efficiency of ODC and show that it reduces the alignment time complexity from \(O\!\left(\min\{a(c\ell)^2,\; a^2 c \ell\}\right)\) for contemporary DC methods to \(O(a c \ell^2)\).
\end{enumerate}

\begin{table*}[t]
    \centering
    \caption{Theoretical comparison between contemporary DC methods and the proposed ODC framework.}
    \resizebox{\textwidth}{!}{
    \begin{tabular}{@{}llll@{}}
        \toprule
        & \textbf{Imakura-DC} & \textbf{Kawakami-DC} & \textbf{ODC} (this work)\\
        \midrule
        \textbf{Objective}
        &
        $\displaystyle \min_{\matr{G}_i \in \mathbb{R}^{\ell \times \ell}} \|\matr{A}_i\matr{G}_i - \matr{Z}\|_\mathrm{F}^2$
        &
        $\displaystyle \min_{\matr{G}_{i}\in \mathbb{R}^{\ell \times \ell}} \sum_{i, j \in [c]}
        \bigl\|
            \matr{A}_{i}\matr{G}_{i}-\matr{A}_{j}\matr{G}_{j}
        \bigr\|_{\mathrm{F}}^{2}$
        &
        $\displaystyle \min_{\matr{G}_{i}\in \mathbb{R}^{\ell \times \ell}} \sum_{i, j \in [c]}
        \bigl\|
            \matr{A}_{i}\matr{G}_{i}-\matr{A}_{j}\matr{G}_{j}
        \bigr\|_{\mathrm{F}}^{2}$
        \\
        \textbf{Constraint}
        &
        $\matr{G}_i \in \mathcal{GL}(\ell)$
        &
        $\displaystyle \sum_{i=1}^{c}\bigl\|\matr{A}_{i} g_{i,k}\bigr\|_{2}^{2}=1$
        &
        $\matr{G}_i \in \mathcal{O}(\ell)$
        \\
        \textbf{Solution Property}
        &
        Ambiguous up to a common right invertible transform
        &
        Not necessarily invertible
        &
        Ambiguous up to a common right orthogonal transform
        \\
        \textbf{Concordance}
        &
        Weak concordance
        &
        Unknown
        &
        Orthogonal concordance
        \\
        \textbf{FLOPs}
        &
        \( O\bigl(\min\{a(c\ell)^2,\, a^2c\ell\}\bigr) \)
        &
        \( O\bigl(\min\{a(c\ell)^2,\, a^2c\ell\}\bigr) \)
        &
        \( O(a c \ell^2) \)
        \\
        \bottomrule
    \end{tabular}
    }
    \label{tab: DCcomparison}
\end{table*}

We begin by introducing \textbf{Assumption}~\ref{assumptions: ODC}.

\begin{restatable}{assumption}{orthogonalassumptions}
    \label{assumptions: ODC}
    We impose the following conditions on the anchor dataset \( \matr{A} \in \mathbb{R}^{a \times m} \) and the secret bases \( \matr{F}_i \in \mathbb{R}^{m \times \ell} \; (m \geq \ell) \), for all \(i \in [c]\):
    \begin{enumerate}[label=\arabic*)]
        \item \( \mathrm{rank}(\matr{A}) = m \);
        \item There exists \( \matr{E}_i \in \mathcal{GL}(\ell) \) such that \( \matr{F}_i = \matr{F}_1\matr{E}_i \);
        \item \(\matr{F}_i^\top \matr{F}_i = \matr{I}\).
    \end{enumerate}
\end{restatable}

Conditions 1) and 2) coincide with those in \textbf{Assumption}~\ref{assumptions: DC}. We additionally impose condition 3), which requires the secret bases to be orthonormal within the ODC framework. It is noteworthy that standard basis-selection methods such as PCA and SVD naturally yield orthonormal bases. Furthermore, to the best of our knowledge, all existing DC applications can readily accommodate this additional orthonormality constraint.

\subsection{Basis Alignment as the Orthogonal Procrustes Problem}

Similarly to Imakura-DC, our goal is to identify \( \matr{G}_i \in \mathcal{GL}(\ell) \) for each \( i \in [c] \) that satisfy Eq.~\eqref{eq: basis equation} without knowledge of the secret bases \(\matr{F}_i\).

Now, from conditions 2) and 3), it immediately follows that \( \matr{E}_i \in \mathcal{GL}(\ell) \) are orthogonal (i.e., \(\matr{E}_i \in \mathcal{O}(\ell)\)), because:
\begin{equation}
\matr{F}_i^\top \matr{F}_i = (\matr{F}_1\matr{E}_i)^\top (\matr{F}_1\matr{E}_i) 
= \matr{E}_i^\top \matr{F}_1^\top \matr{F}_1 \matr{E}_i 
= \matr{E}_i^\top \matr{E}_i 
= \matr{I}_\ell.
\end{equation}

Therefore, we can equivalently write our goal as to identify \( \matr{G}_i \in \mathcal{O}(\ell) \) for each \( i \in [c] \) that satisfy Eq.~\eqref{eq: basis equation} without knowledge of the secret bases \(\matr{F}_i\).

Given the analyst has access to \(\matr{A}_i = \matr{A}\matr{F}_i \in \mathbb{R}^{a\times \ell}\), \(\matr{G}_i \in \mathcal{O(\ell)}\) necessarily satisfy:

\begin{equation}
    \matr{A}_1\matr{G}_1 = \matr{A}_2\matr{G}_2 = \dots = \matr{A}_c\matr{G}_c.
\end{equation}

Consequently, \(\matr{G}_i\) necessarily minimize the following optimization problem:

\begin{equation}
\label{prob: GOPP}
\min_{\matr{Z} \in \mathbb{R}^{a \times \ell},\,\matr{G}_i \in \mathcal{O}(\ell)} \quad \sum_{i=1}^{c} \|\matr{A}_i \matr{G}_i - \matr{Z}\|_\mathrm{F}^2.
\end{equation}

Importantly, Problem~\eqref{prob: GOPP} involves a matrix norm minimization objective, which inherently exhibits invariance under arbitrary common orthogonal transformations from the right. Specifically, let \(\matr{O} \in \mathcal{O}(\ell)\) be an arbitrary orthogonal matrix. Then, choosing \(\matr{Z}^* = \matr{A}_1\matr{O}\) and \(\matr{G}_i^* = \matr{E}_i^\top \matr{O}\) yields global minimizers for Problem~\eqref{prob: GOPP}, because:
\begin{align*}
    \sum_{i=1}^{c} \|\matr{A}_i \matr{E}_{i}^\top \matr{O} - \matr{A}_1 \matr{O}\|_\mathrm{F}^2
    &= \sum_{i=1}^{c}\|\matr{A}\matr{F}_i \matr{E}_i^\top \matr{O} - \matr{A}\matr{F}_1 \matr{O}\|_\mathrm{F}^2 \\
    &= \sum_{i=1}^{c}\|\matr{A}\matr{F}_1 \matr{O} - \matr{A}\matr{F}_1 \matr{O}\|_\mathrm{F}^2 \\
    &= 0.
\end{align*}

Given that the analyst has access to \(\matr{A}_i\) for all \(i \in [c]\), we may fix \(\matr{Z} = \matr{A}_1 \matr{O}\) in Problem~\eqref{prob: GOPP}, leading directly to the classical \textit{Orthogonal Procrustes Problem (OPP)}:

\begin{equation}
\label{OPP}
\min_{\matr{G}_i \in \mathcal{O}(\ell)} \quad \sum_{i=1}^{c} \| \matr{A}_i \matr{G}_i - \matr{A}_1 \matr{O} \|_\mathrm{F}^2.
\tag{OPP}
\end{equation}

The closed-form solutions are given by:

\begin{equation}
    \label{eq: ODC_Gi}
    \matr{G}_i^* = \matr{U}_i \matr{V}_i^\top,
\end{equation}
where:
\begin{equation}
    \matr{A}_i^\top \matr{A}_1 \matr{O} = \matr{U}_i \matr{\Sigma}_i \matr{V}_i^\top,
\end{equation}
as established in~\cite{procrustessolution}.

\begin{algorithm}[t]
\caption{ODC's Basis Alignment Procedure}
\label{alg: ODC}
\SetAlgoLined
\DontPrintSemicolon
\LinesNumbered
\SetNlSty{textbf}{}{:}   
\SetNlSkip{0.5em}         

\SetKwInOut{Input}{Input}
\SetKwInOut{Output}{Output}

\Input{\( \matr{A}_i \in \mathbb{R}^{a \times \ell} \) for each user \( i \in [c] \)}
\Output{\(\matr{G}_i^* \in \mathcal{O}(\ell) \) for each user \( i \in [c] \)}
\BlankLine
\Begin{
    Randomly sample $\matr{O}\in\mathcal{O}(\ell)$ and compute \(\matr{A}_i^\top \matr{A}_1 \matr{O} \;=\; \matr{U}_i \matr{\Sigma}_i \matr{V}_i^\top\)\;
    Set $\matr{G}_i^* \coloneqq \matr{U}_i \matr{V}_i^\top$\;
}

\end{algorithm}

ODC's basis alignment procedure is summarized in \textbf{Algorithm}~\ref{alg: ODC}. 

\subsection{Orthogonal Concordance of ODC}
\label{subsec: ODC concordance}

A fundamental limitation of weak concordance is its invariance under arbitrary right-multiplication by an invertible matrix. Specifically, if  $\matr{G}_i^*$ satisfy weak concordance, then so do the matrices $\matr{G}_i^* \matr{R}$ for any $\matr{R} \in \mathcal{GL}(\ell)$. In practical settings, however, an ill-conditioned choice of $\matr{R}$ can severely impair downstream model performance, even though the resulting matrices remain weakly concordant. To eliminate this degree of freedom, we introduce a stricter requirement---\emph{orthogonal concordance}.

\begin{definition} \label{def:orth_concordance} (Orthogonal Concordance)\\ 
The orthogonal change-of-basis matrices \(\matr{G}_i \in \mathcal{O}(\ell)\), for \( i \in [c] \), satisfy orthogonal concordance if 
\begin{equation} 
\matr{F}_1 \matr{G}_1 = \matr{F}_2 \matr{G}_2 = \dots = \matr{F}_c \matr{G}_c. 
\end{equation} 
\end{definition}

Orthogonal concordance enforces the same alignment condition as weak concordance but additionally constrains each $\matr{G}_i$ to be orthogonal. Consequently, concordance is preserved only under right-multiplication by orthogonal matrices, not by arbitrary invertible transformations. Because orthogonal transformations are Euclidean isometries, distance-based models such as SVMs are theoretically invariant to such transformations~\cite{GeometricPerturbation}. Moreover, even models that are not strictly distance-based (e.g., MLPs) are empirically observed to exhibit only limited sensitivity, as demonstrated in our experiments (see \S~\ref{sec: concordance evaluation}).

\begin{theorem}
    \label{theorem: orthogonal concordance}
    (Orthogonal Concordance of ODC)\\
    Suppose that we observe matrices \(\matr{A}_i = \matr{A}\matr{F}_i\), \( i \in [c] \), with \(\matr{A}\in \mathbb{R}^{a\times m}\) and \(\matr{F}_i\in\mathbb{R}^{m\times \ell}\). For any orthogonal matrix \(\matr{O} \in \mathcal{O}(\ell)\), let \(\matr{A}_i^\top \matr{A}_1 \matr{O} = \matr{U}_i \matr{\Sigma}_i \matr{V}_i^\top \) denote the SVD. Under \textbf{Assumption}~\ref{assumptions: ODC}, for each \( i \in [c] \), the solution \( \matr{G}_i^* = \matr{U}_i \matr{V}_i^\top\) to the Orthogonal Procrustes Problem:
    \begin{equation}
    \min_{\matr{G}_i \in \mathcal{O}(\ell)} \quad \sum_{i=1}^{c} \| \matr{A}_i \matr{G}_i - \matr{A}_1 \matr{O} \|_\mathrm{F}^2,
    \end{equation}
    satisfies
    \begin{equation}
    \matr{G}_i^* = \matr{E}_i^\top \matr{O},
    \end{equation}
    thereby guaranteeing orthogonal concordance~(\textbf{Definition}~\ref{def:orth_concordance}).
\end{theorem}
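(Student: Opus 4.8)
The plan is to reduce the Procrustes input $\matr{A}_i^\top \matr{A}_1 \matr{O}$ to a transparent form and then read off its SVD directly. First I would use condition 2) of \textbf{Assumption}~\ref{assumptions: ODC} together with $\matr{A}_i = \matr{A}\matr{F}_i$ to write $\matr{A}_i = \matr{A}\matr{F}_1\matr{E}_i = \matr{A}_1\matr{E}_i$, and I would recall the fact already established just before the theorem that conditions 2) and 3) force $\matr{E}_i \in \mathcal{O}(\ell)$. Substituting, the cross-product becomes
\begin{equation}
\matr{A}_i^\top \matr{A}_1\matr{O} = \matr{E}_i^\top (\matr{A}_1^\top \matr{A}_1)\matr{O} = \matr{E}_i^\top \matr{S}\matr{O},
\end{equation}
where $\matr{S} := \matr{A}_1^\top \matr{A}_1 \in \mathbb{R}^{\ell\times\ell}$. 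Since $\mathrm{rank}(\matr{A}) = m$ and $\matr{F}_1$ has orthonormal columns, $\matr{A}_1 = \matr{A}\matr{F}_1$ has full column rank $\ell$, so $\matr{S}$ is symmetric positive definite.

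Next I would construct an SVD of $\matr{E}_i^\top \matr{S}\matr{O}$ explicitly. Writing the spectral decomposition $\matr{S} = \matr{W}\matr{\Lambda}\matr{W}^\top$ with $\matr{W}\in\mathcal{O}(\ell)$ and $\matr{\Lambda}$ diagonal positive, I can factor
\begin{equation}
\matr{E}_i^\top \matr{S}\matr{O} = (\matr{E}_i^\top \matr{W})\,\matr{\Lambda}\,(\matr{W}^\top \matr{O}).
\end{equation}
Because $\matr{E}_i^\top \matr{W}$ and $\matr{W}^\top \matr{O}$ are both orthogonal and $\matr{\Lambda}$ is nonnegative diagonal, this is a valid SVD with $\matr{U}_i = \matr{E}_i^\top \matr{W}$, $\matr{\Sigma}_i = \matr{\Lambda}$, and $\matr{V}_i = \matr{O}^\top \matr{W}$. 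The Procrustes solution then telescopes:
\begin{equation}
\matr{G}_i^* = \matr{U}_i\matr{V}_i^\top = (\matr{E}_i^\top \matr{W})(\matr{W}^\top \matr{O}) = \matr{E}_i^\top \matr{O},
\end{equation}
which is the claimed identity. The concordance conclusion then follows by direct substitution: using $\matr{F}_i = \matr{F}_1\matr{E}_i$ and $\matr{E}_i\matr{E}_i^\top = \matr{I}$, one has $\matr{F}_i\matr{G}_i^* = \matr{F}_1\matr{E}_i\matr{E}_i^\top \matr{O} = \matr{F}_1\matr{O}$, which is independent of $i$, so $\matr{F}_1\matr{G}_1^* = \cdots = \matr{F}_c\matr{G}_c^*$ and orthogonal concordance holds.

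The main subtlety I expect to address is that the SVD is not unique when $\matr{S}$ has repeated eigenvalues, so a priori the product $\matr{U}_i\matr{V}_i^\top$ could depend on the chosen factorization; I want the stated $\matr{G}_i^*$ to be well-defined. I would resolve this by observing that the Procrustes minimizer is the orthogonal polar factor $\matr{Q}$ of $\matr{M} := \matr{E}_i^\top \matr{S}\matr{O}$, which is unique whenever $\matr{M}$ is nonsingular. Since positive definiteness of $\matr{S}$ makes $\matr{M}$ nonsingular, I can compute $\matr{Q} = \matr{M}(\matr{M}^\top \matr{M})^{-1/2}$ directly: from $\matr{M}^\top \matr{M} = \matr{O}^\top \matr{S}^2\matr{O}$ we get $(\matr{M}^\top \matr{M})^{-1/2} = \matr{O}^\top \matr{S}^{-1}\matr{O}$, whence
\begin{equation}
\matr{Q} = \matr{E}_i^\top \matr{S}\matr{O}\,\matr{O}^\top \matr{S}^{-1}\matr{O} = \matr{E}_i^\top \matr{O}.
\end{equation}
This confirms that $\matr{G}_i^* = \matr{E}_i^\top \matr{O}$ independently of SVD conventions, completing the argument.
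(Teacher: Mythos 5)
Your proof is correct, and it takes a route that differs from the paper's in its mechanics. The paper derives the Procrustes minimizer from scratch: it expands the Frobenius objective into a trace maximization $\max_{\matr{G}_i\in\mathcal{O}(\ell)}\tr(\matr{G}_i^\top \matr{A}_i^\top\matr{A}_1\matr{O})$, substitutes $\matr{A}_1\matr{O}=\matr{A}_i\matr{E}_i^\top\matr{O}$, eigendecomposes $\matr{A}_i^\top\matr{A}_i=\matr{Q}_i\matr{\Lambda}_i\matr{Q}_i^\top$, and argues that the trace $\tr(\matr{W}'_i\matr{\Lambda}_i)$ is maximized at $\matr{W}'_i=\matr{I}$, yielding $\matr{G}_i^*=\matr{E}_i^\top\matr{O}$. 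You instead take the classical closed-form Procrustes solution $\matr{U}_i\matr{V}_i^\top$ as given, rewrite the cross-product as $\matr{E}_i^\top\matr{S}\matr{O}$ with $\matr{S}=\matr{A}_1^\top\matr{A}_1$ symmetric positive definite, and exhibit an explicit SVD $(\matr{E}_i^\top\matr{W})\matr{\Lambda}(\matr{W}^\top\matr{O})$ from the spectral decomposition of $\matr{S}$, so that $\matr{U}_i\matr{V}_i^\top$ telescopes to $\matr{E}_i^\top\matr{O}$. The two arguments rest on the same essential facts ($\matr{E}_i\in\mathcal{O}(\ell)$ and $\matr{A}_i=\matr{A}_1\matr{E}_i$), but yours buys something the paper's does not: by identifying the minimizer with the orthogonal polar factor $\matr{M}(\matr{M}^\top\matr{M})^{-1/2}$ and verifying nonsingularity of $\matr{M}=\matr{E}_i^\top\matr{S}\matr{O}$, you establish that $\matr{G}_i^*$ is well defined independently of which SVD the analyst's software returns (an issue when $\matr{S}$ has repeated eigenvalues), whereas the paper implicitly assumes a particular factorization. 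The paper's trace argument, in turn, is self-contained in that it does not need to invoke the Procrustes closed form as a black box. Both proofs are sound; yours is arguably the more careful of the two.
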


\begin{proof}
We prove for all \(i \in [c]\). Given \eqref{OPP}, we can write:
\begin{align*}
    \| \matr{A}_i \matr{G}_i - \matr{A}_1 \matr{O} \|_\mathrm{F}^2 
    &= \tr\left((\matr{A}_i \matr{G}_i - \matr{A}_1 \matr{O})^\top (\matr{A}_i \matr{G}_i - \matr{A}_1 \matr{O})\right) \\
    &= \| \matr{A}_i \|_\mathrm{F}^2 + \| \matr{A}_1 \|_\mathrm{F}^2 - 2\tr(\matr{G}_i^\top \matr{A}_i^\top \matr{A}_1 \matr{O}),
\end{align*}
where \(\tr(\cdot)\) denotes the matrix trace. Minimizing \(\| \matr{A}_i \matr{G}_i - \matr{A}_1 \matr{O} \|_\mathrm{F}^2\) for each \(\matr{G}_i\) individually is equivalent to minimizing \(\sum_{i=1}^c \| \matr{A}_i \matr{G}_i - \matr{A}_1 \matr{O} \|_\mathrm{F}^2\) for all \(\matr{G}_i\). Therefore, solving \eqref{OPP} is equivalent to solving:
\begin{equation}
\label{eq: GOPP_trace}
\begin{aligned}
\max_{\matr{G}_i \in \mathcal{O}(\ell)} \quad & \tr(\matr{G}_i^\top \matr{A}_i^\top \matr{A}_1 \matr{O}),
\end{aligned}
\end{equation}
for each \(i \in [c]\). 

From condition 3) of \textbf{Assumption}~\ref{assumptions: ODC}, we have:
\begin{equation}
\matr{A}_1 \matr{O} = \matr{A} \matr{F}_1 \matr{O} = \matr{A} \matr{F}_i \matr{E}_{i}^\top \matr{O} = \matr{A}_i \matr{E}_{i}^\top \matr{O}.
\end{equation}
Substitute this into Problem \eqref{eq: GOPP_trace}, and let \(\matr{A}_i^\top \matr{A}_i = \matr{Q}_i \matr{\Lambda}_i \matr{Q}_i^\top\) be the eigenvalue decomposition. We have:
\begin{align*}
    \tr(\matr{G}_i^\top \matr{A}_i^\top \matr{A}_1 \matr{O}) &= \tr(\matr{G}_i^\top \matr{A}_i^\top \matr{A}_i \matr{E}_{i}^\top \matr{O} ) \\
                                             &= \tr(\matr{G}_i^\top \matr{Q}_i \matr{\Lambda}_i \matr{Q}_i^\top \matr{E}_{i}^\top \matr{O} ) \\
                                             &= \tr(\matr{Q}_i^\top \matr{E}_{i}^\top \matr{O} \matr{G}_i^\top \matr{Q}_i \matr{\Lambda}_i ) \\
                                             &= \tr(\matr{W}'_i \matr{\Lambda}_i) \\
                                             &= \sum_{s=1}^{\ell} w'_{i,(s,s)} \lambda_{i,(s,s)},
                                                 \stepcounter{equation}\tag{\theequation}\label{eq: procrustes_ED}
\end{align*}
where \(\matr{W}'_i = \matr{Q}_i^\top \matr{E}_{i}^\top \matr{O} \matr{G}_i^\top \matr{Q}_i\), and \(w'_{i,(s,t)}, \lambda_{i,(s,t)}\) denote the \((s,t)\)-th elements of matrices \(\matr{W}'_i\) and \(\matr{\Lambda}_i\), respectively. Since \(\matr{W}'_i \in \mathcal{O}(\ell)\), \(w'_{i,(s,t)} \leq 1\) for all \(s, t\). Thus, the sum in \eqref{eq: procrustes_ED} is maximized when \(\matr{W}'_i = \matr{I}\), which gives:
\begin{align*}
    \matr{G}_i^* &= \matr{Q}_i \matr{Q}_i^\top \matr{E}_{i}^\top \matr{O} \\
    \matr{G}_i^* &= \matr{E}_{i}^\top \matr{O},
\end{align*}
and therefore, we have
\begin{equation}
\matr{F}_1 \matr{G}_1^* = \cdots = \matr{F}_c \matr{G}_c^*,
\end{equation}
which completes the proof.
\end{proof}

\textbf{Theorem~\ref{theorem: orthogonal concordance}} shows that the orthogonal right factor $\matr{O} \in \mathcal{O}(\ell)$ can be chosen arbitrarily while preserving orthogonal concordance. This flexibility naturally raises the question of whether such arbitrary choices might adversely affect downstream model performance. We argue that when assumptions are satisfied, any resulting performance variation is negligible, even for non-distance-based models, and we empirically validate this claim in \S\ref{sec: concordance evaluation}.

\subsection{FLOPs of ODC}
\label{subsec: ODC FLOPs}

In \textbf{Algorithm~\ref{alg: ODC}}, ODC aligns the user-specific bases by first sampling a random orthogonal matrix $\matr{O} \in \mathcal{O}(\ell)$ and then, for each user $i \in [c]$, forming the product
\(
\matr{A}_i^\top \matr{A}_1 \matr{O}
\in \mathbb{R}^{\ell \times \ell}
\)
and computing its singular value decomposition. Using the FLOP counts from Table~\ref{tab:flopdef}, we can estimate the cost as follows.

For each user $i$:
\begin{itemize}
  \item Forming $\matr{A}_i^\top \matr{A}_1$ multiplies an $\ell \times a$ matrix by an $a \times \ell$ matrix and thus costs $2 a \ell^{2}$ FLOPs.
  \item Multiplying by $\matr{O}$, i.e.\ computing $\matr{A}_i^\top \matr{A}_1 \matr{O}$, multiplies two $\ell \times \ell$ matrices and costs $2 \ell^{3}$ FLOPs.
  \item Computing the SVD
        $\matr{A}_i^\top \matr{A}_1 \matr{O} = \matr{U}_i \matr{\Sigma}_i \matr{V}_i^\top$ of an $\ell \times \ell$ matrix costs approximately $4 \ell^{3} + 8 \ell^{3} = 12 \ell^{3}$ FLOPs.
  \item Forming the alignment matrix $\matr{G}_i^* = \matr{U}_i \matr{V}_i^\top$ is an $\ell \times \ell$ matrix product and costs $2 \ell^{3}$ FLOPs.
\end{itemize}
Thus, the per-user cost is approximately.
\begin{equation}
2 a \ell^{2} + 2 \ell^{3} + 12 \ell^{3} + 2 \ell^{3} \;=\; 2 a \ell^{2} + 16 \ell^{3} \quad \text{FLOPs}.
\end{equation}
Summing over all $c$ users and ignoring the one-time cost of sampling $\matr{O}$, the total FLOP count for ODC is
\begin{equation}
\label{eq:odc-flops}
c \bigl(2 a \ell^{2} + 16 \ell^{3}\bigr) \;=\; 2 c a \ell^{2} + 16 c \ell^{3} \ \text{FLOPs}.
\end{equation}
In big-$O$ notation, this can be summarized as
\begin{equation}
O\bigl(c a \ell^{2} + c \ell^{3}\bigr) = O\bigl(a c \ell^{2}\bigr).
\end{equation}

\subsection{Peak memory of ODC}
\label{subsec: ODC memory}
We again measure memory in units of real scalars. The $c$ matrices $\matr{A}_i$ are stored explicitly, requiring
\begin{equation}
\sum_{i=1}^c a \ell = a c \ell
\end{equation}
scalars in total. During ODC, we also store the random orthogonal matrix $\matr{O} \in \mathcal{O}(\ell)$, which costs $\ell^{2}$ scalars, and the alignment matrices $\matr{G}_i^* \subset \mathbb{R}^{\ell \times \ell}$, which cost $c \ell^{2}$ scalars once all users have been processed.

For a fixed user $i \in [c]$, the computation of
\(
\matr{A}_i^\top \matr{A}_1 \matr{O}
\)
and its SVD can be organized so that only $O(\ell^{2})$ additional working storage is needed:
\begin{itemize}
  \item We form an intermediate product $\matr{B}_i = \matr{A}_i^\top \matr{A}_1 \in \mathbb{R}^{\ell \times \ell}$ (or equivalently $\matr{B}_i = \matr{A}_i^\top (\matr{A}_1 \matr{O})$) and overwrite it with $\matr{B}_i \matr{O}$. This requires a single $\ell \times \ell$ buffer, i.e.\ $\ell^{2}$ scalars.
  \item Computing the SVD $\matr{B}_i = \matr{U}_i \matr{\Sigma}_i \matr{V}_i^\top$ for an $\ell \times \ell$ matrix requires storing $\matr{U}_i, \matr{V}_i \in \mathbb{R}^{\ell \times \ell}$, the diagonal $\matr{\Sigma}_i$, and SVD workspace, all together costing $O(\ell^{2})$ scalars.
  \item Forming the alignment matrix $\matr{G}_i^* = \matr{U}_i \matr{V}_i^\top \in \mathbb{R}^{\ell \times \ell}$ requires another $\ell^{2}$ scalars, after which $\matr{U}_i$ and $\matr{V}_i$ can be discarded. Thus, at any given time, the per-user working storage is $O(\ell^{2})$.
\end{itemize}
Near the end of the loop over users, we therefore store all inputs $\matr{A}_i$ ($a c \ell$ scalars), all outputs $\matr{G}_i^*$ ($c \ell^{2}$ scalars), the random orthogonal matrix $\matr{O}$ ($\ell^{2}$ scalars), and $O(\ell^{2})$ scalars of temporary workspace. Hence, the peak memory requirement of ODC is
\begin{equation}
a c \ell + c \ell^{2} + O(\ell^{2}).
\end{equation}
Under the natural regime $a > \ell$, this is dominated by the storage of the transformed anchor data, and we also have
\begin{equation}
\Theta(a c \ell).
\end{equation}

\section{Empirics on Time Efficiency}
\label{sec:efficiency-eval}

\begin{table*}[t]
    \centering
    \caption{Computational Time Complexity Comparison of Basis-Alignment}
    \begin{tabular}{@{}llll@{}}
        \toprule
        & \textbf{Imakura-DC} & \textbf{Kawakami-DC} & \textbf{ODC}\\
        \midrule
        \textbf{Computational Time Complexity}  & \( O\left(\min\{a(c\ell)^2,\, a^2c\ell\}\right) \) & \( O\left(\min\{a(c\ell)^2,\, a^2c\ell\}\right) \) & \( O(ac\ell^2) \) \\
        \bottomrule
    \end{tabular}
    \label{tab: bigo}
\end{table*}

The computational time complexities (in big-\(O\) notation) of all basis-alignment methods are summarized in Table~\ref{tab: bigo}. As shown in Table~\ref{tab: bigo}, under the natural dimensional ordering \(\ell \le m < a\), the computational time complexity of ODC satisfies
\begin{equation}
    a c \ell^{2} \;<\; \min\{a(c\ell)^2,\, a^2 c \ell\}.
\end{equation}
This implies that ODC inherently incurs a lower computational cost than both Imakura-DC and Kawakami-DC. To empirically corroborate this theoretical advantage, we measured wall-clock execution times under controlled conditions.

All three methods admit closed-form expressions for the change-of-basis matrices $\matr{G}_i$. Consequently, runtime depends only on the dimensions of the local anchor representations $\matr{A}_i\in\mathbb{R}^{a\times \ell}$ for $i\in[c]$, rather than on $\matr{X}_i$ or $n_i$. Because the global anchor $\matr{A}\in\mathbb{R}^{a\times m}$ is sampled i.i.d.\ from the uniform distribution over $[0,1)$ independently of the private datasets $\matr{X}_i\in\mathbb{R}^{n_i\times m}$, the alignment cost is isolated from user-specific data characteristics.

For the empirical evaluation, we generated uniformly random matrices \(\matr{A}_i \in \mathbb{R}^{a \times \ell}\), \( i \in [c]\), and varied each of the three primary parameters (anchor size~\(a\), latent dimension~\(\ell\), and the number of users~\(c\)), while keeping the other two fixed. The specific experimental settings are summarized in Table~\ref{tab: exp-design-eff}. As prior literature~\cite{KawakamiDC} suggests, randomized SVD improves computational efficiency for contemporary DC methods; thus, we employed it for both Imakura-DC and Kawakami-DC. Conversely, ODC remained unchanged, as it inherently requires only a single full SVD of an \(\ell\times \ell\) matrix. Each experiment was repeated 100 times, and we report median (instead of mean) runtimes to mitigate transient system effects.

All CPU experiments were run on Google Colab Pro+ (CPU, high-memory runtime). The virtual machine reported Linux 6.6.105+ (x86\_64, glibc 2.35) as the operating system, an Intel(R) Xeon(R) CPU @ 2.20 GHz with 8 logical cores (2 threads per core), and approximately 51 GiB of RAM. We used Python~3.12.12, NumPy~2.0.2, and SciPy~1.16.3. Linear algebra kernels were provided by the \texttt{scipy-openblas} build of OpenBLAS~0.3.27 (64-bit integer interface, \texttt{DYNAMIC\_ARCH}, pthreads threading layer, Haswell configuration). We pinned threads via environment variables \texttt{OPENBLAS\_NUM\_THREADS=4}, \texttt{OMP\_NUM\_THREADS=4}, and \texttt{MKL\_NUM\_THREADS=4} (where applicable), and used double-precision (float64) C-contiguous arrays throughout.

Figure~\ref{fig:time_scaling} plots the measured median running times. Across all parameter variations, ODC consistently outperforms contemporary DC approaches, with empirical speed-ups ranging from roughly \(6\times\) to over \(100\times\), validating the theoretical complexity hierarchy detailed in Table~\ref{tab: bigo}.

\begin{figure}[t]
    \centering

    \begin{subfigure}[b]{0.45\textwidth}
        \centering
        \includegraphics[width=\textwidth]{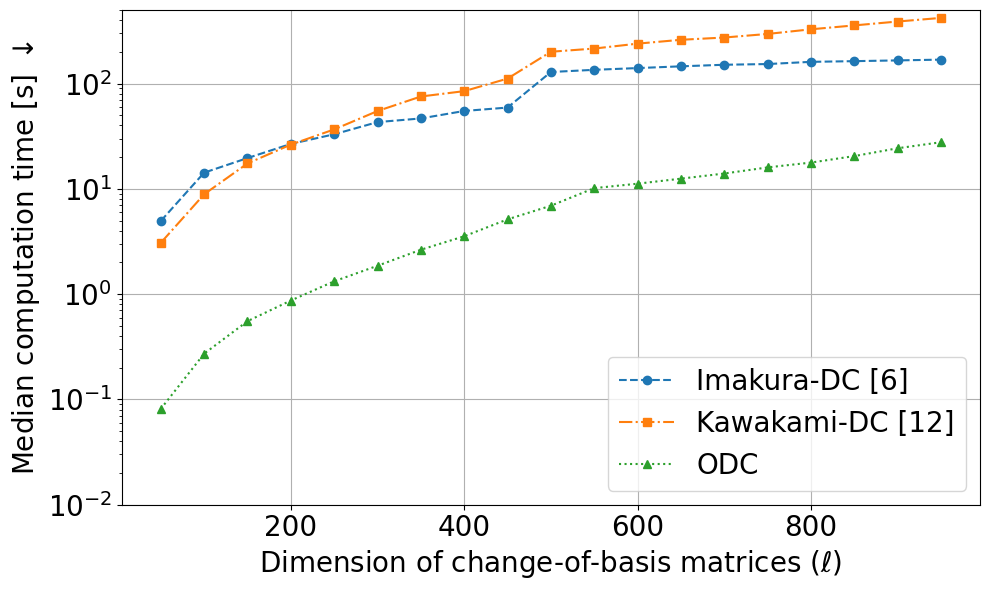}
        \caption{Wall-clock time versus latent dimension~\(\ell\)\,\((a,c) = (1000,50)\).}
        \label{fig:vary_l_time}
    \end{subfigure}
    \hfill
    \begin{subfigure}[b]{0.45\textwidth}
        \centering
        \includegraphics[width=\textwidth]{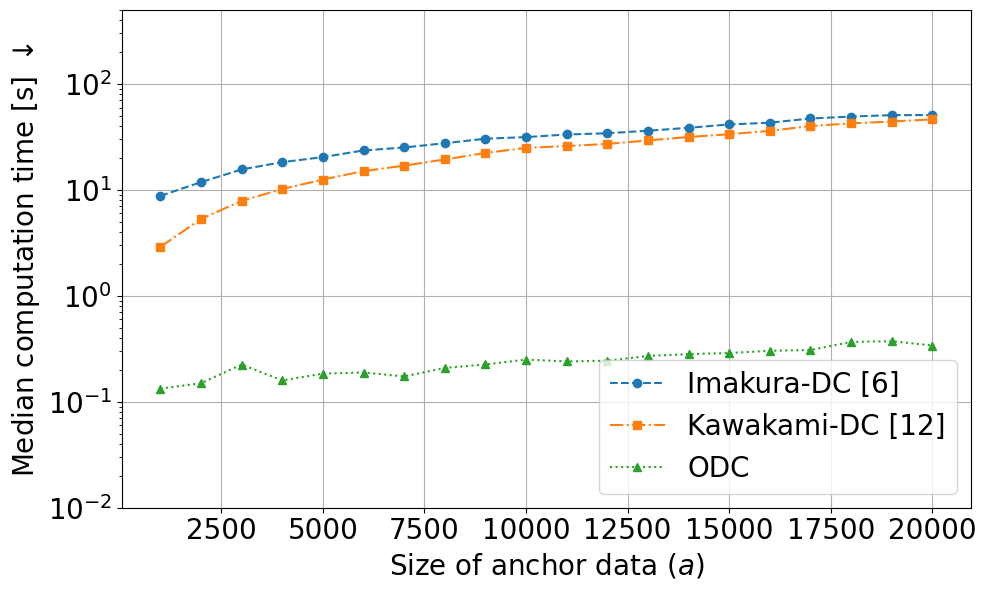}
        \caption{Wall-clock time versus anchor size~\(a\)\,\((\ell,c) = (50,50)\).}
        \label{fig:vary_a_time}
    \end{subfigure}

    \begin{subfigure}[b]{0.45\textwidth}
        \centering
        \includegraphics[width=\textwidth]{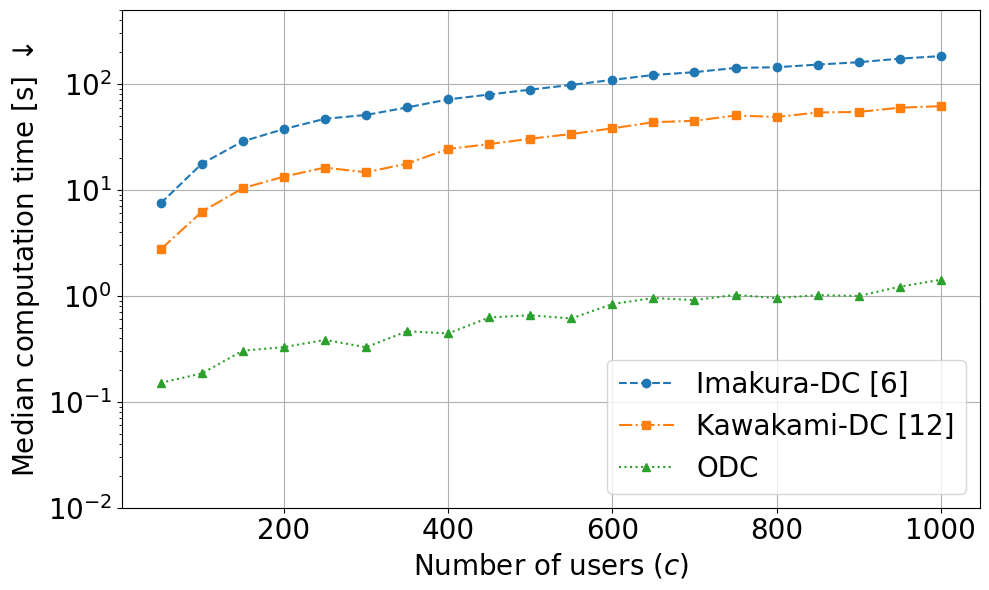}
        \caption{Wall-clock time versus number of users~\(c\)\,\((a,\ell) = (1000,50)\).}
        \label{fig:vary_c_time}
    \end{subfigure}

    \caption{Wall-clock time with varying parameters \((\ell, a, c)\).}
    \label{fig:time_scaling}
\end{figure}

\begin{table}[h!]
    \centering
    \caption{Design of the efficiency experiment. Two parameters are held fixed while the third is swept over the indicated range.}
    \label{tab: exp-design-eff}
    \begin{tabular}{@{}lcc@{}}
        \toprule
        \textbf{Figure} &
        \textbf{Free parameter (fixed pair)} &
        \textbf{Range}\\
        \midrule
        Fig.~\ref{fig:time_scaling}(a) &
        dimension \(\ell\;\;(a,c)=(1000,50)\) &
        \(50\!:\!50\!:\!950\) \\[2pt]
        Fig.~\ref{fig:time_scaling}(b) &
        anchor size \(a\;\;(\ell,c)=(50,50)\) &
        \(1000\!:\!1000\!:\!20000\) \\[2pt]
        Fig.~\ref{fig:time_scaling}(c) &
        users \(c\;\;(a,\ell)=(1000,50)\) &
        \(50\!:\!50\!:\!1000\) \\
        \bottomrule
    \end{tabular}
\end{table}

\subsection{Scaling with Latent Dimension \texorpdfstring{\(\ell\)}{l}}
\label{subsec:scaling-l}

Fig.~\ref{fig:time_scaling}(a) indicates an approximate power-law scaling of computation time with respect to the dimension \(\ell\). We applied ordinary least squares (OLS) regression to the linear model:
\begin{equation}
\log_{10}(\mathrm{time}) = \kappa + \alpha\,\log_{10}(\ell),
\end{equation}
with the constant \(\kappa\) to empirically estimate the exponent \(\alpha\). The resulting empirical estimates \(\widehat{\alpha}_{\ell}\) are:
\begin{equation}
\widehat{\alpha}_{\ell} =
\begin{cases}
1.26, & \text{Imakura--DC}\\[2pt]
1.75, & \text{Kawakami--DC}\\[2pt]
2.05, & \text{ODC}.
\end{cases}
\end{equation}

For these log--log fits, the coefficients of determination are \(R^2 \approx 0.97\), \(0.99\), and \(0.99\) for Imakura--DC, Kawakami--DC, and ODC, respectively. The residuals in \(\log_{10}(\mathrm{time})\) are small, show no discernible trend with \(\log_{10}(\ell)\), and are approximately symmetric about zero, indicating that a simple power-law model in \(\ell\) is adequate over the explored range.

The observed exponent \(\alpha \approx 2\) for ODC aligns closely with its theoretically derived complexity \(O(a c \ell^{2})\). Conversely, contemporary DC methods exhibit somewhat smaller empirical exponents, reflecting complexity consistent with their theoretical predictions of \(O\bigl(\min\{a(c \ell)^{2},\,a^{2} c \ell\}\bigr)\). Although ODC has a slightly higher exponent with respect to \(\ell\), its absolute computational cost remains substantially lower across the tested range of dimensions. This practical efficiency arises primarily from ODC avoiding explicit construction and manipulation of the large, dense \(a\times c \ell\) concatenated matrix, thereby circumventing costly large-scale singular value decompositions. Moreover, typical practical scenarios involve \(a \gg \ell\); thus, the substantial scaling advantage in \(a\) (discussed further in \S~\ref{subsec:scaling-a}) renders the modestly larger exponent in \(\ell\) negligible in realistic deployment.

\subsection{Scaling with Anchor Size \texorpdfstring{\(a\)}{a}}
\label{subsec:scaling-a}

In Fig.~\ref{fig:time_scaling}(b), the relationship between computational wall-clock time and anchor size \(a\) again exhibits an approximate power-law scaling. Applying OLS regression to the log-log relationship \(\log_{10}(\text{time}) = \kappa + \alpha \log_{10}(a)\) yields the following empirical estimates \(\widehat{\alpha}_{a}\):
\begin{equation}
\widehat{\alpha}_{a} =
\begin{cases}
0.79 & \text{Imakura--DC},\\[2pt]
0.92 & \text{Kawakami--DC},\\[2pt]
0.56 & \text{ODC}.
\end{cases}
\end{equation}

The power-law model in \(a\) again provides a good description of the data, with \(R^2 \approx 0.99\), \(0.998\), and \(0.87\) for Imakura--DC, Kawakami--DC, and ODC, respectively. Residuals in \(\log_{10}(\text{time})\) show no clear heteroscedasticity or trend with \(\log_{10}(a)\); for the two baselines they are tightly concentrated around zero, while for ODC they are somewhat more dispersed but remain roughly symmetric, indicating that deviations from an exact power law are modest.

Notably, all empirically observed slopes are lower than their corresponding theoretical predictions (\(\alpha=1\) for ODC and \(\alpha\in\{1,2\}\) for baseline DC methods). The significantly lower slope of \(0.56\) observed for ODC is particularly noteworthy. This empirical observation can be attributed to ODC's dominant computational workload comprising matrix multiplications---specifically, performing \(c\) independent multiplications of relatively small \(\ell\times a\) and \(a\times \ell\) matrices---which are highly optimized and efficiently executed in modern computational environments, thus performing substantially better in practice than theoretically anticipated.

At the maximum tested anchor size \(a = 20\,000\), median wall-clock runtimes are approximately \(48.5\,\mathrm{s}\) for Imakura--DC, \(50.4\,\mathrm{s}\) for Kawakami--DC, and \textbf{\(0.47\,\mathrm{s}\)} for ODC, corresponding to empirical speed-up factors of about \(104\times\) and \(108\times\) in favor of the proposed ODC method—more than two orders of magnitude in this regime.

\subsection{Scaling with the Number of Users \texorpdfstring{\(c\)}{c}}
\label{subsec:scaling-c}

Fig.~\ref{fig:time_scaling}(c) examines computational scaling with respect to the number of users \(c\), holding the dimensions \((a, \ell) = (1000, 50)\) fixed. Applying OLS regression to the log-log relationship
\(
\log_{10}(\text{time}) = \kappa + \alpha \log_{10}(c),
\)
we obtain the following empirical estimates \(\widehat{\alpha}_{c}\):
\begin{equation}
\widehat{\alpha}_{c} =
\begin{cases}
0.95 & \text{Imakura--DC},\\[2pt]
0.92 & \text{Kawakami--DC},\\[2pt]
0.84 & \text{ODC}.
\end{cases}
\end{equation}

These log--log regressions also exhibit high goodness-of-fit, with \(R^2 \approx 0.99\), \(0.98\), and \(0.94\) for Imakura--DC, Kawakami--DC, and ODC, respectively. The residuals in \(\log_{10}(\text{time})\) are small in magnitude, roughly homoscedastic in \(\log_{10}(c)\), and display only mild asymmetry, suggesting that a linear power-law dependence on \(c\) is an adequate description over the explored range.

For ODC, the theoretical complexity is \(O(a c \ell^2)\), predicting a slope \(\alpha \approx 1\). The slightly lower empirical slope of \(0.84\) is primarily due to efficient parallelization and vectorization in modern CPU architectures. Conversely, the theoretical complexity for Imakura--DC and Kawakami--DC methods is \(O(\min\{a(c\ell)^2,\, a^2c\ell\})\). Under the tested dimensions, the \(a^2 c \ell\) term dominates for \(c > 20\), predicting linear scaling in \(c\). The empirical slopes of \(0.95\) and \(0.92\) closely align with these theoretical predictions.

To connect this scaling to the user-facing notion of \emph{incremental user latency}, we further analyzed the same experiment on a linear scale by regressing wall-clock time \(T(c)\) against \(c\) via
\begin{equation}
T(c) = \beta_0 + \beta_1 c,
\end{equation}
where \(\beta_1\) represents the average increase in computation time per additional user (seconds/user) on our hardware. For \((a,\ell)=(1000,50)\), this yields:
\begin{equation}
\beta_1 \approx
\begin{cases}
8.99\times 10^{-2}~\text{s/user} & \text{Imakura--DC},\\[2pt]
4.76\times 10^{-2}~\text{s/user} & \text{Kawakami--DC},\\[2pt]
8.67\times 10^{-4}~\text{s/user} & \text{ODC},
\end{cases}
\end{equation}
with 95\% confidence intervals \([0.084, 0.096]\), \([0.041, 0.054]\), and \([0.00066, 0.00107]\)~s/user, respectively, and coefficients of determination \(R^2 \approx 0.98\), \(0.94\), and \(0.83\). Residuals from these linear fits are unsystematic in \(c\) and small relative to the fitted trend, so over the investigated range \(T(c)\) is well-approximated by an affine function of \(c\).

In other words, in this setting, ODC incurs only about \(0.00087\) seconds (0.87~ms) of additional computation per extra user, compared to \(0.090\) and \(0.048\) seconds for Imakura--DC and Kawakami--DC. Thus, ODC achieves roughly \(100\times\) lower incremental user latency than Imakura--DC and \(55\times\) lower than Kawakami--DC. Over blocks of 50 additional users, these slopes correspond to an extra \(4.5\)~s (Imakura--DC), \(2.4\)~s (Kawakami--DC), and only \(0.043\)~s (ODC).

Although asymptotic scaling behaviors are similar, ODC exhibits significantly improved absolute performance. Median runtimes at \(c=1000\) report that ODC completes computations in only \textbf{\(1.0\,\mathrm{s}\)}, compared to approximately \(98\,\mathrm{s}\) and \(53\,\mathrm{s}\) required by Imakura--DC and Kawakami--DC, respectively. Thus, ODC achieves empirical speed-ups of at least \(50\times\).

This substantial performance gap arises because ODC employs optimized batched matrix multiplications and small-scale (\(\ell\times \ell\)) singular value decompositions, while baseline methods incrementally extend randomized SVD computations with each additional user. In practice, ODC maintains nearly constant incremental computational overhead per user, rendering the basis-alignment phase computationally negligible in realistic deployments.

\section{Empirics Under Relaxed Assumptions}
\label{sec: assumptions-evaluation}

Our theoretical analysis in \S~\ref{section: ODC} relies on two key assumptions regarding the secret bases \(\matr{F}_i\):

\begin{enumerate}
    \item \textbf{Identical Span}: all $\matr{F}_i$ share an identical column space (\textbf{Assumption}~\ref{assumptions: ODC}-2);
    \item \textbf{Orthonormality}: each $\matr{F}_i$ has orthonormal columns (\textbf{Assumption}~\ref{assumptions: ODC}-3).
\end{enumerate}

In practical scenarios, these ideal conditions may not strictly hold, particularly when bases are generated through computationally inexpensive random projections or derived from heterogeneous local datasets. We thus empirically quantify the impact of deviations from these assumptions on performance.

\subsection{Experimental Design}  
We evaluate four controlled scenarios with varying adherence to these assumptions (Table~\ref{tab:assumptions-description}). Let $\matr{V}_i\in\mathbb{R}^{m\times\ell}$ denote the top $\ell$ right singular vectors of $\matr{X}_i$.

For each client $i$, we instantiate the secret basis $\matr{F}_i$ as
\begin{equation}
\matr{F}_i \;=\;
\begin{cases}
\matr{V}_1\,\matr{E}_i, & \textbf{SameSpan-Orth}:~\matr{E}_i \in \mathcal{O}(\ell)\ \text{(Haar)};\\[2pt]
\matr{V}_1\,\matr{E}_i, & \textbf{SameSpan}:~\matr{E}_i \sim \mathrm{Unif}([0,1))^{\ell\times \ell};\\[2pt]
\matr{V}_i\,\matr{E}_i, & \textbf{DiffSpan-Orth}:~\matr{E}_i \in \mathcal{O}(\ell)\ \text{(Haar)};\\[2pt]
\matr{V}_i\,\matr{E}_i, & \textbf{DiffSpan}:~\matr{E}_i \sim \mathrm{Unif}([0,1))^{\ell\times \ell},
\end{cases}
\end{equation}
where $\mathrm{Unif}([0,1))^{\ell\times\ell}$ denotes an $\ell\times\ell$ matrix with i.i.d.\ entries drawn from $[0,1)$. 
Thus, \textbf{SameSpan} fixes $\operatorname{span}(\matr{F}_i)=\operatorname{span}(\matr{V}_1)$ for all users, whereas \textbf{DiffSpan} allows client-specific spans $\operatorname{span}(\matr{F}_i)=\operatorname{span}(\matr{V}_i)$; \textbf{Orthnormality} is controlled by sampling $\matr{E}_i$ either Haar-uniformly from $\mathcal{O}(\ell)$ or entrywise from $\mathrm{Unif}([0,1))$.

\begin{table}[t]
  \centering
  \caption{Summary of secret-basis conditions evaluated.}
  \begin{tabular}{lcc}
    \toprule
    \textbf{Condition} & \textbf{Identical Span} & \textbf{Orthonormality} \\
    \midrule
    \textbf{SameSpan-Orth} & \checkmark & \checkmark \\
    \textbf{SameSpan}      & \checkmark & $\times$ \\
    \textbf{DiffSpan-Orth} & $\times$   & \checkmark \\
    \textbf{DiffSpan}      & $\times$   & $\times$ \\
    \bottomrule
  \end{tabular}
  \label{tab:assumptions-description}
\end{table}

We group tasks into five domains:

\begin{itemize}[leftmargin=*,itemsep=3pt]

    \item \textbf{Image classification (MNIST, Fashion-MNIST):}
    Images are flattened into $28\times 28$ vectors and normalized to $[0,1]$. We randomly select $10{,}000$ samples for the test set. We report mean classification accuracy for SVM and MLP classifiers, averaged over 100 independent runs. The anchor matrix is drawn uniformly at random with $a=1000$.

    \item \textbf{Biomedical compound classification (TDC~\cite{TDC}):}
    Molecules are featurized as 2048-bit Morgan fingerprints (radius $=2$). Data are split across four users, with each user holding samples from exactly one class (see Fig.~\ref{fig:TDC-splitting}). We use the dataset-provided train/test partitions and evaluate performance using ROC-AUC and PR-AUC (MLP), averaged over 100 runs. The anchor matrix is drawn uniformly at random with $a=3000$.

    \item \textbf{Income classification (Adult~\cite{adult}):}
    We use the Adult income dataset and partition it across 200 users using both horizontal and \emph{vertical} splits. We randomly select $10{,}000$ samples for the test set. Under the vertical split, half of the users possess one subset of features, while the remaining users possess the complementary subset. We report the mean classification accuracy of an MLP classifier over 100 runs. The anchor matrix is drawn uniformly at random with $a=1000$.

    \item \textbf{Facial attribute classification (CelebA~\cite{celeba}):}
    RGB images ($128\times 128$) are flattened and normalized to $[0,1]$. In each iteration, we randomly sample 4000 images from the full dataset and select 1000 for testing. We measure binary gender prediction accuracy using an MLP classifier, averaged over 100 runs, and additionally compare against $(\varepsilon,\delta)$-DP baselines. The anchor matrix is drawn uniformly at random with $a=10000$.

    \item \textbf{Clinical regression (eICU-CRD~\cite{eicu}):}
    We use data from the 50 largest hospitals to predict patient length of stay (days) from 25 routinely collected clinical features. We randomly select 20\% of the data as the test set. Performance is reported as RMSE (lower is better), averaged across hospitals over 100 runs using an MLP model. As a federated learning baseline, we run federated averaging (FedAvg) for 40 rounds with full participation. The anchor matrix is drawn uniformly at random with $a=3000$.

\end{itemize}

We compare a centralized oracle (\textbf{Central}); per‑user local models (\textbf{Local}); two contemporary DC baselines \textbf{Imakura‑DC} (\textbf{Algorithm}~\ref{alg: Imakura-DC}), \textbf{Kawakami‑DC} (\textbf{Algorithm}~\ref{alg: Kawakami-DC}); and the proposed \textbf{ODC} (\textbf{Algorithm}~\ref{alg: ODC}). For CelebA, we additionally include additive Gaussian \textbf{DP}~\cite{analyticalgaussianmechanism}, and for eICU, we include \textbf{FedAvg}~\cite{FL1}, to place ODC in the wider PPML context.

The downstream models are configured as follows. For the SVM baseline, we use an RBF kernel with \(C = 1.0\) and \(\gamma = \text{"scale"}\). The MLP baseline consists of a single hidden layer with 256 ReLU units, trained with Adam (\texttt{solver = "adam"}) using a mini-batch size of 32 and a maximum of 1000 training iterations, with early stopping enabled.

All other experimental settings (e.g., hardware, software, and implementation details) are identical to those described in \S~\ref{sec:efficiency-eval}, unless otherwise stated.

All numerical results reported in this paper, including all baselines, were produced by the authors by executing each method on the \emph{same} dataset partitions described above. We do not copy performance numbers from prior publications. To ensure fair comparisons, we keep the latent dimension $\ell$ and anchor size $a$ fixed across DC/ODC variants, and we use the same downstream-model hyperparameters described above for all compared methods.

\subsection{Results and Discussion}
\label{subsec:results-discussion}

\begin{table*}[htbp]
\centering
\caption{Performance comparison of ODC and baseline DC methods across multiple datasets (MNIST, Fashion-MNIST, and biomedical TDC datasets) using various classifiers and performance metrics. The reported results represent the mean scores ($\pm$ margin of error at 95\% confidence, computed over 100 independent trials) under the four distinct secret-base conditions described in Table~\ref{tab:assumptions-description}.}
\label{tab: assumptions-all}
\resizebox{\textwidth}{!}{
    \begin{tabular}{llcccccccccc}
    \toprule
    \multirow{2}{*}{Dataset} 
      & \multirow{2}{*}{Secret Bases}
      & \multicolumn{5}{c}{SVM Accuracy [\%] $\uparrow$} 
      & \multicolumn{5}{c}{MLP Accuracy [\%] $\uparrow$} \\
    \cmidrule(lr){3-7}\cmidrule(lr){8-12}
    & & \textbf{Central} & \textbf{Local} & \textbf{Imakura‑DC~\cite{DCframework1}} 
        & \textbf{Kawakami‑DC~\cite{KawakamiDC}} & \textbf{ODC}
      & \textbf{Central} & \textbf{Local} & \textbf{Imakura‑DC~\cite{DCframework1}} 
        & \textbf{Kawakami‑DC~\cite{KawakamiDC}} & \textbf{ODC} \\
    \midrule

    \multirow{4}{*}{MNIST~\cite{MNIST}}
      & \textbf{SameSpan‑Orth}
        & \multirow{4}{*}{96.0$\pm$0.1} & \multirow{4}{*}{66.0$\pm$0.9}
        & \textbf{\boldmath 96.1$\pm$0.1} & \textbf{\boldmath 96.1$\pm$0.1} & 95.6$\pm$0.1
        & \multirow{4}{*}{95.1$\pm$0.1} & \multirow{4}{*}{59.7$\pm$1.4}
        & \textbf{\boldmath 96.0$\pm$0.1} & 95.9$\pm$0.1 & 95.9$\pm$0.1 \\

      & \textbf{SameSpan}
        &  & 
        & \textbf{\boldmath 96.1$\pm$0.1} & \textbf{\boldmath 96.1$\pm$0.1} & 88.8$\pm$0.2
        &  & 
        & \textbf{\boldmath 95.9$\pm$0.1} & 95.8$\pm$0.1 & 92.6$\pm$0.2 \\

      & \textbf{DiffSpan‑Orth}
        &  & 
        & \textbf{\boldmath 95.1$\pm$0.1} & 94.7$\pm$0.1 & 94.9$\pm$0.1
        &  & 
        & 94.4$\pm$0.2 & 93.2$\pm$0.2 & \textbf{\boldmath 95.3$\pm$0.1} \\

      & \textbf{DiffSpan}
        &  & 
        & \textbf{\boldmath 94.9$\pm$0.1} & 94.7$\pm$0.1 & 86.5$\pm$0.3
        &  & 
        & \textbf{\boldmath 94.9$\pm$0.1} & 93.2$\pm$0.2 & 91.2$\pm$0.2 \\\midrule

    \multirow{4}{*}{Fashion‑MNIST~\cite{fashionmnist}}
      & \textbf{SameSpan‑Orth}
        & \multirow{4}{*}{86.1$\pm$0.2} & \multirow{4}{*}{62.9$\pm$0.8}
        & \textbf{\boldmath 85.4$\pm$0.2} & \textbf{\boldmath 85.4$\pm$0.2} & 83.7$\pm$0.3
        & \multirow{4}{*}{86.1$\pm$0.2} & \multirow{4}{*}{61.1$\pm$1.0}
        & \textbf{\boldmath 86.1$\pm$0.2} & 86.0$\pm$0.2 & 86.0$\pm$0.2 \\

      & \textbf{SameSpan}
        &  & 
        & \textbf{\boldmath 85.4$\pm$0.2} & \textbf{\boldmath 85.4$\pm$0.2} & 77.7$\pm$0.3
        &  & 
        & \textbf{\boldmath 86.1$\pm$0.2} & 85.9$\pm$0.2 & 80.5$\pm$0.3 \\

      & \textbf{DiffSpan‑Orth}
        &  & 
        & \textbf{\boldmath 83.4$\pm$0.2} & 83.0$\pm$0.3 & 82.5$\pm$0.2
        &  & 
        & 81.8$\pm$0.3 & 81.5$\pm$0.3 & \textbf{\boldmath 84.3$\pm$0.2} \\

      & \textbf{DiffSpan}
        &  & 
        & 82.9$\pm$0.2 & \textbf{\boldmath 83.0$\pm$0.3} & 76.3$\pm$0.3
        &  & 
        & \textbf{\boldmath 82.5$\pm$0.3} & 81.5$\pm$0.3 & 78.8$\pm$0.4 \\

    \bottomrule
  \end{tabular}
  }
  
  \bigskip
  
\resizebox{\textwidth}{!}{
  \begin{tabular}{llcccccccc}
    \toprule
    \multirow{2}{*}{Dataset} & \multirow{2}{*}{Secret Bases}
      & \multicolumn{4}{c}{ROC‑AUC [\%] $\uparrow$} 
      & \multicolumn{4}{c}{PR‑AUC [\%] $\uparrow$} \\
    \cmidrule(lr){3-6}\cmidrule(lr){7-10}
    & & \textbf{Central} & \textbf{Imakura‑DC~\cite{DCframework1}} & \textbf{Kawakami‑DC~\cite{KawakamiDC}} & \textbf{ODC} & \textbf{Central}              & \textbf{Imakura‑DC~\cite{DCframework1}} & \textbf{Kawakami‑DC~\cite{KawakamiDC}} & \textbf{ODC} \\
    \midrule
    \multirow{4}{*}{AMES~\cite{AMES}}
      & \textbf{SameSpan‑Orth}
        & \multirow{4}{*}{87.1$\pm$0.0} & 86.2$\pm$0.4 & 82.4$\pm$0.3 & \textbf{\boldmath 88.6$\pm$0.1} & \multirow{4}{*}{88.7$\pm$0.0} & 87.9$\pm$0.4        & 83.8$\pm$0.4         & \textbf{\boldmath 89.9$\pm$0.1} \\
      & \textbf{SameSpan}
        &                               & \textbf{\boldmath 86.6$\pm$0.3} & 82.2$\pm$0.3 & 59.9$\pm$0.4 &                               & \textbf{\boldmath 88.2$\pm$0.3}        & 83.7$\pm$0.3         & 62.5$\pm$0.4 \\
      & \textbf{DiffSpan‑Orth}
        &                               & 63.1$\pm$0.4 & 65.4$\pm$0.4 & \textbf{\boldmath 67.9$\pm$0.3} &                               & 66.9$\pm$0.4        & 69.1$\pm$0.4         & \textbf{\boldmath 71.5$\pm$0.3} \\
      & \textbf{DiffSpan}
        &                               & 61.9$\pm$0.3 & \textbf{\boldmath 65.2$\pm$0.4} & 58.1$\pm$0.4 &                               & 65.6$\pm$0.3        & \textbf{\boldmath 69.0$\pm$0.4}         & 62.0$\pm$0.4 \\
    \midrule
    \multirow{4}{*}{Tox21\_SR-ARE~\cite{Tox21}}
      & \textbf{SameSpan-Orth}
        & \multirow{4}{*}{75.0$\pm$0.0}
        & 69.4$\pm$1.0   & 64.5$\pm$1.3   & \textbf{\boldmath 75.4$\pm$0.3}
        & \multirow{4}{*}{42.2$\pm$0.0}
        & 30.8$\pm$1.1   & 25.2$\pm$1.3   & \textbf{\boldmath 38.4$\pm$0.6} \\
      & \textbf{SameSpan}
        & 
        & \textbf{\boldmath 69.0$\pm$1.0}   & 64.5$\pm$1.3   & 54.0$\pm$0.3
        & 
        & \textbf{\boldmath 30.4$\pm$1.1}   & 25.2$\pm$1.2   & 16.7$\pm$0.2 \\
      & \textbf{DiffSpan-Orth}
        & 
        & 57.4$\pm$0.3   & 58.3$\pm$0.3   & \textbf{\boldmath 61.0$\pm$0.3}
        & 
        & 19.4$\pm$0.2   & 20.2$\pm$0.2   & \textbf{\boldmath 22.6$\pm$0.3} \\
      & \textbf{DiffSpan}
        & 
        & 56.6$\pm$0.3   & \textbf{\boldmath 58.5$\pm$0.3}   & 53.9$\pm$0.3
        & 
        & 18.9$\pm$0.2   & \textbf{\boldmath 20.3$\pm$0.2}   & 17.3$\pm$0.2 \\

        \midrule
    \multirow{4}{*}{HIV~\cite{HIV}}
      & \textbf{SameSpan-Orth}
        & \multirow{4}{*}{78.8$\pm$0.0}
        & 79.6$\pm$0.4   & 78.1$\pm$0.5   & \textbf{\boldmath 80.8$\pm$0.2}
        & \multirow{4}{*}{42.0$\pm$0.0}
        & 41.6$\pm$0.5   & 39.1$\pm$0.7   & \textbf{\boldmath 41.8$\pm$0.4} \\
      & \textbf{SameSpan}
        & 
        & \textbf{\boldmath 80.1$\pm$0.3}   & 78.0$\pm$0.5   & 58.8$\pm$0.4
        & 
        & \textbf{\boldmath 42.2$\pm$0.4}   & 38.9$\pm$0.8   &  5.3$\pm$0.1 \\
      & \textbf{DiffSpan-Orth}
        & 
        & 61.1$\pm$0.4   & 60.9$\pm$0.4   & \textbf{\boldmath 67.4$\pm$0.3}
        & 
        &  7.8$\pm$0.2   &  8.7$\pm$0.2   & \textbf{\boldmath 13.4$\pm$0.3} \\
      & \textbf{DiffSpan}
        & 
        & 58.1$\pm$0.4   & \textbf{\boldmath 60.8$\pm$0.4}   & 56.3$\pm$0.4
        & 
        &  6.6$\pm$0.2   & \textbf{\boldmath  8.5$\pm$0.3}   &  4.6$\pm$0.1 \\

        \midrule
    \multirow{4}{*}{CYP3A4~\cite{CYP2D6}}
      & \textbf{SameSpan-Orth}
        & \multirow{4}{*}{87.2$\pm$0.0}
        & 84.8$\pm$0.2   & 82.9$\pm$0.2   & \textbf{\boldmath 86.2$\pm$0.1}
        & \multirow{4}{*}{83.5$\pm$0.0}
        & 80.6$\pm$0.3   & 78.1$\pm$0.3   & \textbf{\boldmath 82.1$\pm$0.1} \\
      & \textbf{SameSpan}
        & 
        & \textbf{\boldmath 84.7$\pm$0.2}   & 82.9$\pm$0.2   & 59.2$\pm$0.4
        & 
        & \textbf{\boldmath 80.5$\pm$0.3}   & 78.1$\pm$0.3   & 49.8$\pm$0.5 \\
      & \textbf{DiffSpan-Orth}
        & 
        & 57.8$\pm$0.3   & 59.4$\pm$0.4   & \textbf{\boldmath 65.5$\pm$0.3}
        & 
        & 49.8$\pm$0.3   & 51.5$\pm$0.5   & \textbf{\boldmath 57.2$\pm$0.3} \\
      & \textbf{DiffSpan}
        & 
        & 58.3$\pm$0.3   & \textbf{\boldmath 59.1$\pm$0.4}   & 58.2$\pm$0.3
        & 
        & 50.4$\pm$0.3   & \textbf{\boldmath 51.1$\pm$0.5}   & 48.9$\pm$0.4 \\
    \midrule
    \multirow{4}{*}{CYP2D6~\cite{CYP2D6}}
      & \textbf{SameSpan‑Orth}
        & \multirow{4}{*}{83.5$\pm$0.0}
        & 81.4$\pm$0.3 & 80.4$\pm$0.2 & \textbf{\boldmath 83.3$\pm$0.1}
        & \multirow{4}{*}{63.5$\pm$0.0}
        & 60.2$\pm$0.4 & 58.5$\pm$0.3 & \textbf{\boldmath 62.3$\pm$0.2} \\
      & \textbf{SameSpan}
        & 
        & \textbf{\boldmath 81.8$\pm$0.2} & 80.3$\pm$0.2 & 59.5$\pm$0.4
        & 
        & \textbf{\boldmath 60.6$\pm$0.4} & 58.3$\pm$0.3 & 24.2$\pm$0.3 \\
      & \textbf{DiffSpan‑Orth}
        & 
        & 62.4$\pm$0.4 & 64.6$\pm$0.4 & \textbf{\boldmath 65.8$\pm$0.2}
        & 
        & 27.5$\pm$0.5 & 30.3$\pm$0.5 & \textbf{\boldmath 31.2$\pm$0.3} \\
      & \textbf{DiffSpan}
        & 
        & 62.6$\pm$0.3 & \textbf{\boldmath 64.7$\pm$0.4} & 57.1$\pm$0.4
        & 
        & 27.9$\pm$0.4 & \textbf{\boldmath 30.4$\pm$0.5} & 23.0$\pm$0.3 \\
    \midrule
    \multirow{4}{*}{CYP1A2~\cite{CYP2D6}}
      & \textbf{SameSpan‑Orth}
        & \multirow{4}{*}{91.2$\pm$0.0}
        & 89.7$\pm$0.1 & 89.5$\pm$0.2 & \textbf{\boldmath 90.4$\pm$0.2}
        & \multirow{4}{*}{90.1$\pm$0.0}
        & 88.8$\pm$0.1 & 88.4$\pm$0.3 & \textbf{\boldmath 89.8$\pm$0.2} \\
      & \textbf{SameSpan}
        & 
        & \textbf{\boldmath 90.3$\pm$0.1} & 89.2$\pm$0.2 & 65.1$\pm$0.5
        & 
        & \textbf{\boldmath 89.4$\pm$0.1} & 88.1$\pm$0.3 & 59.4$\pm$0.4 \\
      & \textbf{DiffSpan‑Orth}
        & 
        & 67.3$\pm$0.4 & \textbf{\boldmath 68.6$\pm$0.4} & 67.9$\pm$0.3
        & 
        & 62.8$\pm$0.5 & \textbf{\boldmath 64.9$\pm$0.5} & 63.9$\pm$0.3 \\
      & \textbf{DiffSpan}
        & 
        & 67.8$\pm$0.4 & \textbf{\boldmath 68.7$\pm$0.4} & 60.2$\pm$0.5
        & 
        & 63.3$\pm$0.4 & \textbf{\boldmath 64.9$\pm$0.5} & 55.9$\pm$0.5 \\
    \bottomrule
\end{tabular}
}
\end{table*}

\begin{table*}[htbp]
    \centering
    \caption{Performance comparison of ODC and baseline DC methods on the Adult dataset using an MLP classifier. The reported results represent the mean accuracy ($\pm$ margin of error at 95\% confidence, computed over 100 independent trials, higher is better) under the four distinct secret-base conditions described in Table~\ref{tab:assumptions-description}. }
    \label{tab: adult acc}
    \begin{tabular}{lccccc}
\toprule
\textbf{Secret Bases} & 
\textbf{Central} & 
\textbf{Local} & 
\textbf{Imakura‑DC~\cite{DCframework1}} & 
\textbf{Kawakami‑DC~\cite{KawakamiDC}} & 
\textbf{ODC} \\
\midrule
\textbf{SameSpan-Orth}
  & \multirow{4}{*}{84.9$\pm$0.2}
  & \multirow{4}{*}{74.2$\pm$1.0}
  & 84.9$\pm$0.3
  & 84.8$\pm$0.3
  & \textbf{85.0$\pm$0.2} \\
\textbf{SameSpan}
  &
  &
  & \textbf{84.9$\pm$0.3}
  & 84.8$\pm$0.2
  & 83.0$\pm$0.3 \\
\textbf{DiffSpan-Orth}
  &
  &
  & \textbf{84.5$\pm$0.3}
  & 84.4$\pm$0.3
  & \textbf{84.5$\pm$0.3} \\
\textbf{DiffSpan}
  &
  &
  & 84.3$\pm$0.3
  & \textbf{84.4$\pm$0.3}
  & 82.4$\pm$0.3 \\
\bottomrule
\end{tabular}
\end{table*}

\begin{table*}[htbp]
\centering
\caption{Comparison of classification accuracy on the CelebA dataset using an MLP classifier across different PPML approaches. The reported results represent the mean accuracy ($\pm$ margin of error at 95\% confidence, computed over 100 independent trials) under the four distinct secret-base conditions described in Table~\ref{tab:assumptions-description}. Compared methods include models trained using \textbf{DP}-based additive Gaussian noise at three privacy budget levels ($\varepsilon = 0.5, 2, 8$ and $\delta = 10^{-3}$ fixed).}
\label{tab: CelebA accuracy}
\resizebox{\textwidth}{!}{
\begin{tabular}{lcccccccc}
\toprule
Secret Bases 
  & \textbf{Central} 
  & \textbf{Local} 
  & \textbf{DP~\cite{analyticalgaussianmechanism} ($\varepsilon=0.5$)} 
  & \textbf{DP~\cite{analyticalgaussianmechanism} ($\varepsilon=2$)} 
  & \textbf{DP~\cite{analyticalgaussianmechanism} ($\varepsilon=8$)} 
  & \textbf{Imakura-DC~\cite{DCframework1}} 
  & \textbf{Kawakami-DC~\cite{KawakamiDC}} 
  & \textbf{ODC} \\
\midrule
\textbf{SameSpan-Orth}
  & \multirow{4}{*}{88.4$\pm$0.2} 
  & \multirow{4}{*}{75.7$\pm$0.6}  
  & 69.1$\pm$0.5  
  & 79.5$\pm$0.3  
  & 83.8$\pm$0.3  
  & \textbf{86.5$\pm$0.2} 
  & 86.2$\pm$0.2 
  & 86.2$\pm$0.2 \\

\textbf{SameSpan}
  &                            
  &                             
  & 69.1$\pm$0.5  
  & 79.5$\pm$0.3  
  & 83.8$\pm$0.3  
  & \textbf{86.3$\pm$0.2} 
  & 85.8$\pm$0.3 
  & 79.9$\pm$0.3 \\

\textbf{DiffSpan-Orth}
  &                            
  &                             
  & 69.1$\pm$0.5  
  & 79.5$\pm$0.3  
  & \textbf{83.8$\pm$0.3}  
  & 82.7$\pm$0.2 
  & 82.8$\pm$0.3 
  & 83.6$\pm$0.3 \\

\textbf{DiffSpan}
  &                            
  &                             
  & 69.1$\pm$0.5  
  & 79.5$\pm$0.3  
  & \textbf{83.8$\pm$0.3}  
  & 82.6$\pm$0.2 
  & 82.8$\pm$0.3 
  & 78.6$\pm$0.3 \\
\bottomrule
\end{tabular}
}
\end{table*}

\begin{figure}[htbp]
    \centering
    \includegraphics{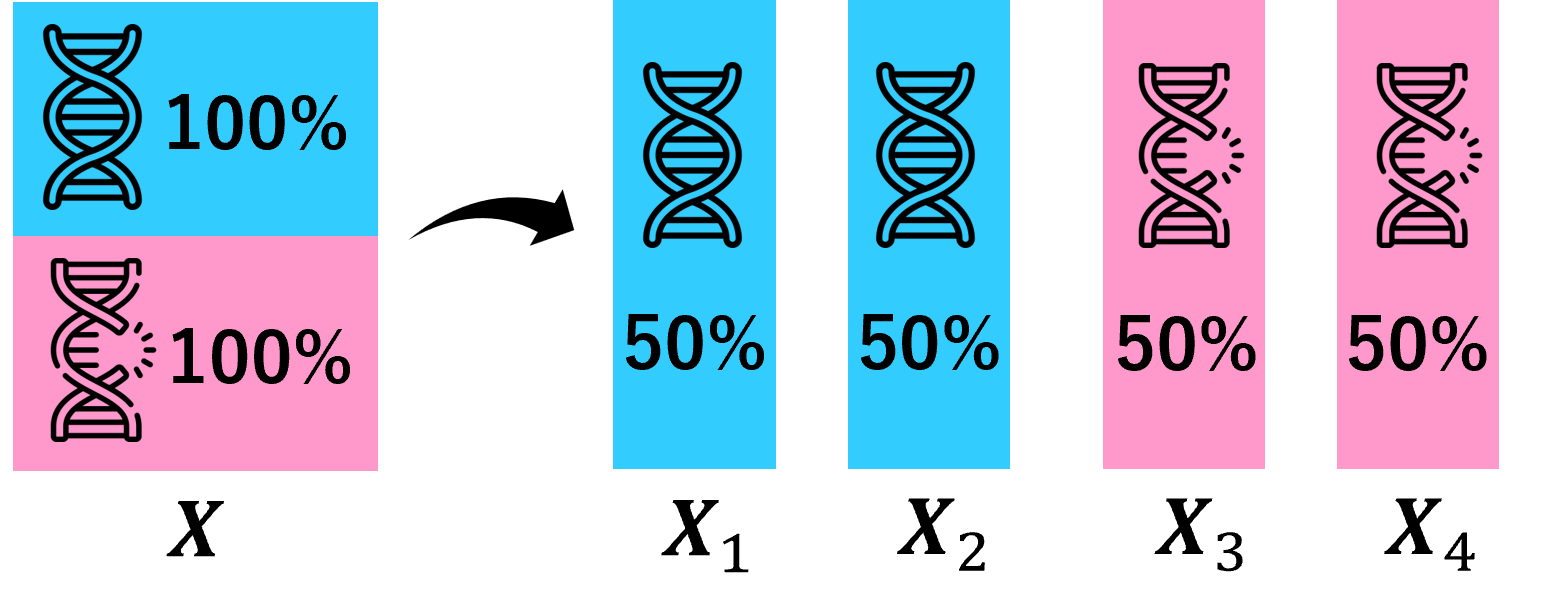}
    \caption{Illustration of extremely heterogeneous splitting applied to TDC datasets. Binary-labeled data are partitioned across four users, each of whom exclusively holds samples of a single label.}
    \label{fig:TDC-splitting}
\end{figure}

\begin{table}[t]
\centering
\caption{Face identifiability on CelebA measured by face-verification ROC-AUC using pretrained FaceNet embeddings (higher $\Rightarrow$ more identifiable; $0.5$ = chance).}
\label{tab: celeba-identifiability}
\begin{tabular}{lc}
\toprule
Representation & Verification AUC $\downarrow$\\
\midrule
Original & 0.977 \\
Noise-added images with \((\varepsilon,\delta)\)-DP~\cite{analyticalgaussianmechanism}, \((\varepsilon,\delta)=(8,10^{-3})\) & 0.671 \\
Noise-added images with \((\varepsilon,\delta)\)-DP~\cite{analyticalgaussianmechanism}, \((\varepsilon,\delta)=(2,10^{-3})\) & 0.618 \\
Noise-added images with \((\varepsilon,\delta)\)-DP~\cite{analyticalgaussianmechanism}, \((\varepsilon,\delta)=(0.5,10^{-3})\)  & 0.522 \\
Orthogonally projected images (\(\ell=300\))   & 0.463 \\
Randomly projected images (\(\ell=300\))       & 0.485 \\
\bottomrule
\end{tabular}
\end{table}

\begin{figure}
    \centering

    \begin{subfigure}[b]{0.9\textwidth}
        \centering
        \includegraphics[width=\textwidth]{Original.png}
        \caption{Original images}
        \label{fig:orig}
    \end{subfigure}

    \begin{subfigure}[b]{0.9\textwidth}
        \centering
        \includegraphics[width=\textwidth]{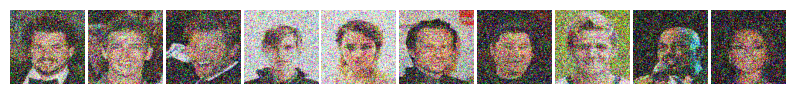}
        \caption{Noise-added images with \((\varepsilon,\delta)\)-DP, \((\varepsilon,\delta)=(8,10^{-3})\)}
        \label{fig:dp8}
    \end{subfigure}

    \begin{subfigure}[b]{0.9\textwidth}
        \centering
        \includegraphics[width=\textwidth]{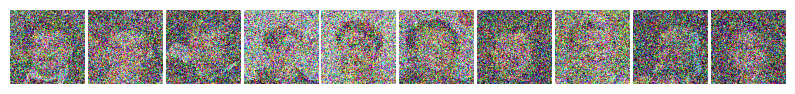}
        \caption{Noise-added images with \((\varepsilon,\delta)\)-DP, \((\varepsilon,\delta)=(2,10^{-3})\)}
        \label{fig:dp2}
    \end{subfigure}

    \begin{subfigure}[b]{0.9\textwidth}
        \centering
        \includegraphics[width=\textwidth]{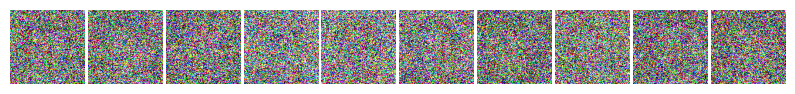}
        \caption{Noise-added images with \((\varepsilon,\delta)\)-DP, \((\varepsilon,\delta)=(0.5,10^{-3})\)}
        \label{fig:dp05}
    \end{subfigure}

    \begin{subfigure}[b]{0.9\textwidth}
        \centering
        \includegraphics[width=\textwidth]{SVD_reduced_dim.png}
        \caption{Orthogonally projected images (\(\ell = 300\))}
        \label{fig:svdproj}
    \end{subfigure}

    \begin{subfigure}[b]{0.9\textwidth}
        \centering
        \includegraphics[width=\textwidth]{Random_reduced_dim.png}
        \caption{Randomly projected images (\(\ell = 300\))}
        \label{fig:randproj}
    \end{subfigure}

    \caption{Visual privacy analysis using CelebA.}
    \label{fig:CelebA_privacy}
\end{figure}

\begin{table*}[t]
\centering
\caption{Comparison of RMSE on the eICU-CRD dataset using an MLP classifier across different PPML approaches. The reported results represent the mean RMSE ($\pm$ margin of error at 95\% confidence, computed over 100 independent, lower is better) under the four distinct secret-base conditions described in Table~\ref{tab:assumptions-description}. Compared methods include: a \textbf{FedAvg} model with full participation.}
\resizebox{\textwidth}{!}{
\begin{tabular}{lcccccc}
\toprule
Secret Bases 
  & \textbf{Central} 
  & \textbf{Local} 
  & \textbf{FedAvg~\cite{FL1}} 
  & \textbf{Imakura-DC~\cite{DCframework1}} 
  & \textbf{Kawakami-DC~\cite{KawakamiDC}} 
  & \textbf{ODC} \\
\midrule
\textbf{SameSpan-Orth}
  & \multirow{4}{*}{3.996$\pm$0.046}
  & \multirow{4}{*}{4.174$\pm$0.048}
  & \multirow{4}{*}{\textbf{4.064$\pm$0.049}}
  & 4.089$\pm$0.049
  & 4.096$\pm$0.049
  & 4.083$\pm$0.048 \\

\textbf{SameSpan}
  &
  &
  &
  & 4.088$\pm$0.049
  & 4.096$\pm$0.049
  & 4.154$\pm$0.049 \\

\textbf{DiffSpan-Orth}
  &
  &
  &
  & 4.085$\pm$0.049
  & 4.090$\pm$0.049
  & 4.070$\pm$0.048 \\

\textbf{DiffSpan}
  &
  &
  &
  & 4.085$\pm$0.049
  & 4.090$\pm$0.049
  & 4.150$\pm$0.049 \\
\bottomrule
\end{tabular}
}
\label{tab: eiCU rmse}
\end{table*}

Tables~\ref{tab: assumptions-all}--\ref{tab: eiCU rmse} comprehensively summarize model performance under four distinct secret-basis scenarios, while Figure~\ref{fig:CelebA_privacy} provides complementary visual evidence of privacy preservation. The following discussion is organized around three comparative perspectives:

\begin{enumerate}[label=\textbf{\Roman*.}, leftmargin=*]

\item \textbf{ODC \emph{vs.} Existing DC Methods Under Various Secret-Basis Conditions (Tables~\ref{tab: assumptions-all} and \ref{tab: adult acc})}

\begin{itemize}[leftmargin=*, itemsep=3pt]

    \item \textbf{SameSpan-Orth (identical span and orthonormality satisfied):} 
    Under ideal conditions, ODC achieves performance equal to or marginally better than the centralized oracle across multiple datasets (e.g., MNIST MLP accuracy \SI{95.9}{\%}, matching the oracle; see Table~\ref{tab: assumptions-all}). Similar performance from Imakura-DC and Kawakami-DC confirms effective utilization of this ideal scenario by all methods.

    \item \textbf{SameSpan (orthonormality violated):} 
    Dropping the orthonormality constraint significantly impacts ODC's performance—MNIST SVM accuracy drops from \SI{95.6}{\%} to \SI{88.8}{\%} and AMES ROC-AUC from \SI{88.6}{\%} to \SI{59.9}{\%}. Imakura-DC and Kawakami-DC remain largely unaffected, underscoring the theoretical dependence of ODC on orthonormal secret bases.

    \item \textbf{DiffSpan-Orth (orthonormality maintained, identical span violated):} 
    When orthonormality is enforced, ODC demonstrates robustness against subspace misalignment, maintaining accuracy in the range \SI{94.9}{\%}--\SI{95.3}{\%} on MNIST, and often surpassing DC baselines in biomedical tasks (e.g., HIV ROC-AUC \SI{67.4}{\%} compared to Imakura-DC \SI{61.1}{\%}; Table~\ref{tab: assumptions-all}). Thus, exact span alignment is advantageous but not critical, provided orthonormality holds.

    \item \textbf{DiffSpan (both conditions violated):} 
    Violating both conditions negatively affects all methods, with ODC showing the greatest degradation (e.g., MNIST SVM accuracy drops to \SI{86.5}{\%}). This observation highlights the practical necessity of orthonormal bases, particularly in heterogeneous, high-variance settings.

\end{itemize}

Regarding Table~\ref{tab: adult acc}, we observe performance trends mirror those observed under purely horizontal splits, despite the additional vertical data splitting.

\item \textbf{ODC \emph{vs.} DP-based Perturbation (Table~\ref{tab: CelebA accuracy})}

Figure~\ref{fig:CelebA_privacy} provides a qualitative visualization, but visual inspection alone can be subjective. We therefore add a quantitative identifiability metric based on an off-the-shelf face-recognition model: we compute 512D FaceNet embeddings~\cite{facenet} (InceptionResnetV1~\cite{resnet} trained on VGGFace2~\cite{vggface}) and evaluate face verification using ROC-AUC from cosine similarity on same-identity vs.\ different-identity pairs. Here, AUC $\approx 1$ indicates high identifiability, while AUC $\approx 0.5$ corresponds to chance-level identifiability. Table~\ref{tab: celeba-identifiability} reports this metric for the same privatized representations shown in Fig.~\ref{fig:CelebA_privacy}.

Table~\ref{tab: CelebA accuracy} summarizes the \emph{utility} side. Under \textbf{DiffSpan-Orth} (heterogeneous users with orthonormal secret bases but without an identical-span constraint), ODC reaches $83.6\%$ accuracy, substantially exceeding the stricter DP settings ($\varepsilon=0.5$ and $\varepsilon=2$) and closely approaching DP at $\varepsilon=8$. In other words, when orthonormality holds, ODC preserves strong predictive performance even outside the idealized identical-span regime.

Taken together, Fig.~\ref{fig:CelebA_privacy} and Table~\ref{tab: celeba-identifiability} clarify \emph{why} the DP baselines can look visually weak at larger privacy budgets: across the tested $\varepsilon$ values, DP perturbation reduces identifiability but still leaves AUC well above chance ($\approx0.52$–$0.67$), whereas DC‑style projections reduce identifiability toward chance (AUC $\approx0.46$–$0.49$). At the same time, Table~\ref{tab: CelebA accuracy} shows the familiar trade-off: increasing noise for stronger DP-style obfuscation is accompanied by a considerable utility drop, while ODC attains strong obfuscation without requiring a calibrated privacy budget.

Finally, the four secret-base conditions in Table~\ref{tab: CelebA accuracy} expose an important practical constraint: ODC is visibly more sensitive to violations of orthonormality (\textbf{SameSpan} and \textbf{DiffSpan}) than the two DC baselines. This aligns with ODC's design goal (orthogonal concordance) and motivates the choice of orthonormal secret bases in practice (e.g., via PCA/SVD/QR), which is typically feasible in DC pipelines.

Table~\ref{tab: CelebA accuracy} shows that under realistic conditions (DiffSpan--Orth), ODC attains \SI{83.6}{\%} accuracy, surpassing DP at lower privacy budgets by substantial margins (\(+14.5\) pp for \(\varepsilon=0.5\) and \(+4.1\) pp for \(\varepsilon=2\)), while nearly matching DP at \(\varepsilon=8\) (only \(0.2\) pp below). Figure~\ref{fig:CelebA_privacy} further demonstrates that DP at high privacy budgets preserves visually identifiable features, whereas ODC fully obfuscates visual identity. Thus, ODC provides superior privacy--utility trade-offs without explicitly calibrating privacy budgets. 

However, the formal privacy mechanisms underpinning ODC and contemporary DC currently rely on a semi-honest assumption; as a result, these mechanisms remain underdeveloped for scenarios involving stronger or malicious adversaries. Suppose the semi-honest privacy assumption cannot be guaranteed or tolerated. In that case, DP-based additive perturbation may still be preferred, despite the advantageous privacy-performance trade-offs offered by DC methods. This preference arises from the rigorous privacy guarantees inherent in DP.

\item \textbf{ODC \emph{vs.} Federated Learning (Table~\ref{tab: eiCU rmse})}

Table~\ref{tab: eiCU rmse} shows FedAvg achieving the lowest RMSE (\SI{4.060}{}). ODC, under DiffSpan--Orth, is statistically indistinguishable (\SI{4.065}{}, \(0.005\) difference), clearly outperforming Imakura-DC and Kawakami-DC (\(\sim\)\SI{4.086}{}). Given FedAvg’s requirement for iterative communication, ODC offers a strong alternative, achieving near-FedAvg performance with significantly reduced communication overhead. Interestingly, enforcing identical span conditions negatively affects performance for eICU-CRD, indicating that arbitrary subspace selection is detrimental for realistic heterogeneous tasks.

\end{enumerate}

ODC achieves \emph{optimal} performance under ideal conditions (\textbf{SameSpan-Orth}), shows \emph{sensitivity} to orthonormality violations, yet remains \emph{robust} against subspace misalignment provided orthonormality is maintained. Across diverse tasks, ODC consistently matches or exceeds DC baselines and offers privacy–utility trade-offs competitive with those of DP and FL approaches. Table~\ref{tab:ppml_compare} provides a qualitative summary comparing these representative PPML frameworks. The observations discussed above strongly advocate enforcing orthonormal secret bases (via PCA, SVD, or QR) while tolerating moderate span misalignment to effectively handle the heterogeneity of realistic data.

\begin{table*}[ht]
    \centering
    \caption{Qualitative comparison of representative PPML frameworks.}
    \label{tab:ppml_compare}
    \resizebox{\textwidth}{!}{
    \begin{tabular}{@{}llll@{}}
        \toprule
        \textbf{Framework} &
        \textbf{Communication rounds for training} &
        \textbf{Typical accuracy\textsuperscript{*}} &
        \textbf{Privacy guarantee} \\
        \midrule
        DP (additive perturbation) &
        One user $\rightarrow$ analyst &
        Low &
        Rigorous $(\varepsilon,\delta)$–DP \\
        Federated Learning &
        Iterative bidirectional rounds until convergence &
        High &
        No intrinsic formalism\textsuperscript{†} \\
        ODC (this work) &
        One user $\rightarrow$ analyst \newline
        $+$ one user-wide anchor broadcast &
        Medium\textsuperscript{\$} &
        Rigorous under the semi-honest model \\
        \bottomrule
    \end{tabular}
    }
    
    \vspace{0.5ex}
    \raggedright
    \textsuperscript{*}\, Relative rankings derived empirically; specific accuracies vary by task.\\
    \textsuperscript{†}\, FL can incorporate DP externally, often reducing accuracy. \\
    \textsuperscript{\$}\, Performance approaches FedAvg under strict orthonormality.
\end{table*}

\section{Empirics on Concordance}\label{sec: concordance evaluation}

Theoretically, orthogonally concordant change-of-basis matrices are expected to preserve downstream performance in distance-based models. However, in practice, violations of assumptions, finite data, and non-distance-based architectures can introduce performance variability. Here, we empirically evaluate whether orthogonal concordance (ODC) offers a practical advantage over weak concordance (Imakura‑DC).

\subsection{Experimental Design}
The methods evaluated are:

\begin{itemize}
    \item \textbf{Imakura-random}: Change-of-basis matrices constructed via \textbf{Algorithm~\ref{alg: Imakura-DC}}, using a uniformly random matrix for \(\matr{R}\).
    \item \textbf{Imakura-identity}: Change-of-basis matrices constructed via \textbf{Algorithm~\ref{alg: Imakura-DC}}, with \(\matr{R} = \matr{I}\).
    \item \textbf{ODC-random}: Change-of-basis matrices constructed via \textbf{Algorithm~\ref{alg: ODC}}, using a uniformly random orthogonal matrix for \(\matr{O}\) (Haar distribution).
    \item \textbf{ODC-identity}: Change-of-basis matrices constructed via \textbf{Algorithm~\ref{alg: ODC}} with \(\matr{O} = \matr{I}\).
\end{itemize}

We follow the canonical DC protocol with \(c=100\) users, each providing \(n_i=100\) samples. The anchor matrix \(\matr{A} \in \mathbb{R}^{1000\times 784}\) is generated as a uniformly random matrix. 

Experiments are conducted on standard image classification datasets (MNIST~\cite{MNIST}, Fashion-MNIST~\cite{fashionmnist}). For each dataset, we train (i) a Support Vector Machine (SVM), representative of distance-based classifiers, and (ii) a single-hidden-layer MLP with 256 ReLU neurons. Images are flattened into $28\times28$ vectors and normalized to the interval $[0,1]$. The reported performance metrics are the mean testing accuracy along with the 95\% confidence interval obtained using SVM and MLP models across 100 independent runs. To assess statistical significance, we perform paired, one-sided \(t\)-tests at the 5\% level, testing the null hypothesis that the \emph{random} variant performs \emph{no worse} than the identity variant. Results are presented in Tables~\ref{tab:span_concordance_same} and ~\ref{tab:span_concordance_comparison}.

All other experimental settings (e.g., hardware, software, and implementation details) are identical to those described in \S~\ref{sec: assumptions-evaluation}, unless otherwise stated.

\subsection{Results and Discussion}

\begin{table*}[htbp]
  \centering
  \footnotesize
  \caption{
    Evaluation of the practical significance of orthogonal concordance (\textbf{ODC}) compared to weak concordance (\textbf{Imakura})
    under the \textbf{SameSpan-Orth} condition on MNIST and Fashion-MNIST
    with SVM and MLP classifiers.
    The table reports mean accuracies (\%) $\pm$ margin of error (95\% confidence interval, 100 runs each)
    for identity and random matrix choices, together with the accuracy difference between random and identity
    conditions (random minus identity) and the paired Cohen's $d$ for the same comparison.
    Negative $\Delta$ and $d$ indicate that the random matrix underperforms the identity.
    Statistical significance is indicated by $^{*}$ for $p < 0.05$, based on one-sided (left-tailed) paired $t$-tests
    testing the null hypothesis that $\mu_{\mathrm{rand}} \ge \mu_{\mathrm{iden}}$; $^{-}$ denotes non-significant differences.
  }
  \resizebox{\textwidth}{!}{
  \begin{tabular}{lllcccccccc}
    \toprule
    \textbf{Condition}
      & \textbf{Dataset}
      & \textbf{Classifier}
      & \textbf{Imakura-Identity}
      & \textbf{Imakura-Random}
      & \textbf{Imakura-$\Delta$ [\%]}
      & \textbf{Imakura-$d$}
      & \textbf{ODC-Identity}
      & \textbf{ODC-Random}
      & \textbf{ODC-$\Delta$ [\%]}
      & \textbf{ODC-$d$} \\
    \midrule
    \multirow{4}{*}{\textbf{SameSpan-Orth}}
      & MNIST         & SVM  & $96.20\pm0.00$ & $92.79\pm0.12$ & $-3.42^{*}$ & $-5.81$ & $96.00\pm0.00$ & $96.00\pm0.00$ & $+0.00^{-}$ & $+0.00$ \\
      & MNIST         & MLP  & $96.17\pm0.05$ & $95.34\pm0.10$ & $-0.82^{*}$ & $-1.50$ & $96.00\pm0.00$ & $96.41\pm0.05$ & $+0.41^{-}$ & $+1.58$ \\
      & Fashion-MNIST & SVM  & $86.21\pm0.01$ & $82.60\pm0.16$ & $-3.61^{*}$ & $-4.41$ & $84.30\pm0.00$ & $84.31\pm0.00$ & $+0.00^{-}$ & $+0.00$ \\
      & Fashion-MNIST & MLP  & $86.62\pm0.09$ & $85.28\pm0.16$ & $-1.35^{*}$ & $-1.50$ & $85.70\pm0.00$ & $86.47\pm0.12$ & $+0.77^{-}$ & $+1.28$ \\
    \bottomrule
  \end{tabular}
  }
  \label{tab:span_concordance_same}
\end{table*}

\begin{table*}[htbp]
  \centering
  \footnotesize
  \caption{
    Evaluation of concordance under different span conditions (\textbf{SameSpan}, \textbf{DiffSpan}, \textbf{DiffSpan-Orth}), where the theoretical assumptions are partially violated, on MNIST and Fashion-MNIST using SVM and MLP classifiers.
    The table reports mean accuracies (\%) $\pm$ margin of error (95\% confidence interval, 100 runs each)
    for identity and random matrix choices, together with the accuracy difference between random and identity
    conditions (random minus identity) and the paired Cohen's $d$ for the same comparison.
    Negative $\Delta$ and $d$ indicate that the random matrix underperforms the identity.
    Statistical significance is indicated by $^{*}$ for $p < 0.05$, based on one-sided (left-tailed) paired $t$-tests
    testing the null hypothesis that $\mu_{\mathrm{rand}} \ge \mu_{\mathrm{iden}}$; $^{-}$ denotes non-significant differences.
  }
  \resizebox{\textwidth}{!}{
  \begin{tabular}{lllcccccccc}
    \toprule
    \textbf{Condition}
      & \textbf{Dataset}
      & \textbf{Classifier}
      & \textbf{Imakura-Identity}
      & \textbf{Imakura-Random}
      & \textbf{Imakura-$\Delta$ [\%]}
      & \textbf{Imakura-$d$}
      & \textbf{ODC-Identity}
      & \textbf{ODC-Random}
      & \textbf{ODC-$\Delta$ [\%]}
      & \textbf{ODC-$d$} \\
    \midrule
    \multirow{4}{*}{\textbf{SameSpan}}
      & MNIST         & SVM  & $96.20\pm0.00$ & $92.99\pm0.11$ & $-3.21^{*}$ & $-5.70$ & $87.00\pm0.00$ & $87.00\pm0.00$ & $+0.00^{-}$ & $+0.00$ \\
      & MNIST         & MLP  & $96.21\pm0.05$ & $95.40\pm0.09$ & $-0.81^{*}$ & $-1.49$ & $89.30\pm0.00$ & $90.83\pm0.17$ & $+1.53^{-}$ & $+1.81$ \\
      & Fashion-MNIST & SVM  & $86.20\pm0.00$ & $82.77\pm0.14$ & $-3.43^{*}$ & $-4.80$ & $77.80\pm0.00$ & $77.80\pm0.00$ & $+0.00^{-}$ & $+0.00$ \\
      & Fashion-MNIST & MLP  & $86.51\pm0.11$ & $85.24\pm0.16$ & $-1.26^{*}$ & $-1.22$ & $79.10\pm0.00$ & $78.45\pm0.20$ & $-0.65^{*}$ & $-0.63$ \\
    \midrule
    \multirow{4}{*}{\textbf{DiffSpan-Orth}}
      & MNIST         & SVM  & $95.40\pm0.00$ & $91.90\pm0.13$ & $-3.50^{*}$ & $-5.38$ & $95.40\pm0.00$ & $95.40\pm0.00$ & $+0.00^{-}$ & $+0.00$ \\
      & MNIST         & MLP  & $95.55\pm0.07$ & $94.69\pm0.11$ & $-0.86^{*}$ & $-1.33$ & $95.90\pm0.00$ & $95.52\pm0.06$ & $-0.38^{*}$ & $-1.20$ \\
      & Fashion-MNIST & SVM  & $84.30\pm0.00$ & $81.36\pm0.16$ & $-2.94^{*}$ & $-3.63$ & $81.30\pm0.00$ & $81.30\pm0.00$ & $+0.00^{-}$ & $+0.00$ \\
      & Fashion-MNIST & MLP  & $84.37\pm0.10$ & $83.45\pm0.16$ & $-0.92^{*}$ & $-0.98$ & $84.70\pm0.00$ & $83.47\pm0.16$ & $-1.23^{*}$ & $-1.53$ \\
    \midrule
    \multirow{4}{*}{\textbf{DiffSpan}}
      & MNIST         & SVM  & $94.38\pm0.01$ & $91.24\pm0.13$ & $-3.13^{*}$ & $-4.65$ & $83.90\pm0.00$ & $83.90\pm0.00$ & $+0.00^{-}$ & $+0.00$ \\
      & MNIST         & MLP  & $94.65\pm0.06$ & $93.76\pm0.13$ & $-0.90^{*}$ & $-1.16$ & $87.50\pm0.00$ & $87.87\pm0.22$ & $+0.37^{-}$ & $+0.32$ \\
      & Fashion-MNIST & SVM  & $83.20\pm0.00$ & $80.62\pm0.15$ & $-2.58^{*}$ & $-3.33$ & $70.50\pm0.00$ & $70.50\pm0.00$ & $+0.00^{-}$ & $+0.00$ \\
      & Fashion-MNIST & MLP  & $83.80\pm0.10$ & $82.61\pm0.19$ & $-1.19^{*}$ & $-1.05$ & $77.80\pm0.00$ & $72.66\pm0.37$ & $-5.15^{*}$ & $-2.69$ \\
    \bottomrule
  \end{tabular}
  }
  \label{tab:span_concordance_comparison}
\end{table*}

Table~\ref{tab:span_concordance_same} reports the results for the \textbf{SameSpan-Orth} setting, in which \textbf{Assumption~\ref{assumptions: DC}} and \textbf{Assumption~\ref{assumptions: ODC}} are simultaneously satisfied, so that both weak and orthogonal concordance hold. In this ideal regime, Imakura's weak-concordance method exhibits a strong dependence on the choice of right factor $\matr{R}$. Replacing the identity ($\matr{R} = \matr{I}$) with a uniformly random invertible matrix consistently reduces accuracy by about $0.8$--$3.6$ percentage points across all four tasks, with large negative effect sizes ($d \le -1.5$) and statistically significant differences (all entries in the Imakura-$\Delta$ column are marked with ${}^{*}$). Thus, even when weak concordance is theoretically guaranteed, the particular target basis has a substantial practical impact on downstream performance.

By contrast, ODC is essentially invariant to the choice of orthogonal matrix~$O$ under the same conditions. For the SVM classifiers, \textbf{ODC-Identity} and \textbf{ODC-Random} yield numerically identical accuracies. For the MLPs, the differences remain within $0.8$ percentage points and are not flagged as significant by the one-sided $t$-tests (all entries carry the $^{-}$ mark). In three out of four cases, the random orthogonal matrix even attains a slightly higher mean accuracy than the identity, although we do not test for the superiority of the random choice. These observations are consistent with \textbf{Theorem~\ref{theorem: orthogonal concordance}}: under \textbf{Assumption~\ref{assumptions: ODC}}, all parties' representations are aligned up to a common orthogonal transform, implying exact invariance for distance-based models, such as SVMs, and near-invariance in practice, even for MLPs.

Table~\ref{tab:span_concordance_comparison} examines how these conclusions change when the secret-basis assumptions are partially violated. For Imakura's method, the qualitative picture does not change: across all three conditions (\textbf{SameSpan}, \textbf{DiffSpan}, \textbf{DiffSpan-Orth}), \textbf{Imakura-Random} underperforms \textbf{Imakura-Identity} by roughly $0.8$--$3.5$ percentage points, with consistently large negative Cohen's $d$ and statistically significant differences. This confirms that the instability of weak concordance with respect to the choice of $\matr{R}$ is intrinsic and persists even when its theoretical assumptions are not met.

For ODC, the behavior depends strongly on which parts of \textbf{Assumption}~\ref{assumptions: ODC} are relaxed. When the bases share an identical span but lose orthonormality (\textbf{SameSpan}), the absolute accuracy of \textbf{ODC-Identity} is substantially degraded compared to the \textbf{SameSpan-Orth} case (drops of about $6$--$7$ percentage points across both datasets and classifiers), highlighting orthonormality as the key structural requirement for ODC. In this regime, the additional effect of choosing a random orthogonal matrix $\matr{O}$ is relatively modest (within about $\pm 1.5$ percentage points) and task-dependent: for MNIST, the random choice slightly improves accuracy, whereas for Fashion-MNIST, it leads to small but statistically significant decreases.

When orthonormality is preserved but the spans differ (\textbf{DiffSpan-Orth}), ODC remains quite stable. For SVMs, \textbf{ODC-Identity} and \textbf{ODC-Random} are equivalent, and for MLPs, the degradation of \textbf{ODC-Random} relative to \textbf{ODC-Identity} is at most about $1.2$ percentage points. This indicates that ODC tolerates moderate subspace misalignment as long as orthonormality is enforced. Finally, when both identical-span and orthonormality assumptions are violated simultaneously (\textbf{DiffSpan}), ODC experiences the largest loss of absolute accuracy and becomes most sensitive to the choice of $\matr{O}$, with differences of up to about $5$ percentage points (Fashion-MNIST, MLP). In this regime, the alignment matrices obtained from the Orthogonal Procrustes step no longer correspond to a common global rotation, so a random orthogonal target can indeed be suboptimal.

Taken together, Tables~\ref{tab:span_concordance_same} and~\ref{tab:span_concordance_comparison} empirically confirm the theoretical claims of orthogonal concordance. When \textbf{Assumption}~\ref{assumptions: ODC} holds, ODC is practically invariant to the choice of the orthogonal target basis, in sharp contrast to the weak-concordance baseline. When the assumptions are relaxed, ODC continues to exhibit smaller and more structured sensitivity to the random matrix than Imakura's method, and its degradation is primarily driven by violations of orthonormality rather than by moderate span mismatches.

\section{Empirics on Anchor Construction}
\label{sec:anchor-construction}

The shared anchor dataset $\matr{A}\in\mathbb{R}^{a\times m}$ is the only object that is common across users, and its intermediate representations $\matr{A}_i=\matr{A}\matr{F}_i$ are the sole inputs used by the analyst to construct the \emph{change-of-basis} matrices $\matr{G}_i$ (\textbf{Algorithm}~\ref{alg: DCframework}). While \S\ref{sec:efficiency-eval} established that ODC's alignment runtime scales favorably with the anchor size $a$, a natural question is whether \emph{how we construct $\matr{A}$} (e.g., synthetic vs.\ public anchors, and the choice of $a$) also affects downstream \emph{utility} and the practical \emph{privacy--efficiency} trade-off.

\subsection{Experimental Design}
Throughout, we keep the secret bases $\matr{F}_i$ and downstream models fixed while varying (i) the anchor size $a$ and (ii) the distribution from which the rows of $\matr{A}$ are sampled. We compare the proposed ODC framework against two existing basis-alignment schemes, Imakura-DC and Kawakami-DC, and report the mean $\pm$95\% confidence interval over repeated runs.

For MNIST and Fashion-MNIST we reuse the setup of \S~\ref{sec: assumptions-evaluation}, draw each row of $\matr{A}$ i.i.d.\ from $\mathrm{Unif}[0,1)$, and vary the anchor size over $a\in\{196,392,784,1568\}$. Table~\ref{tab:anchor-image} reports MLP test accuracy for Imakura-DC, Kawakami-DC, and ODC under the secret-basis conditions \textbf{SameSpan-Orth} and \textbf{DiffSpan-Orth}.

We next investigate how the \emph{source} of the anchor interacts with $a$ on AMES. Following standard DC practice, we consider two constructions of $\matr{A}$: (i) a synthetic anchor whose rows are sampled i.i.d.\ from $\mathrm{Unif}[0,1)$ (``Uniform''), and (ii) a domain-matched anchor~\cite{DCnoniid} obtained by sampling unlabeled molecular fingerprints from PubChem~\cite{PubChem}. For each construction, we vary $a\in\{512,1024,2048,4096\}$ and evaluate ODC, Imakura-DC, and Kawakami-DC under \textbf{SameSpan-Orth} and \textbf{DiffSpan-Orth}. Table~\ref{tab:anchor-ames} reports ROC-AUC and PR-AUC.

All other experimental settings (e.g., hardware, software, and implementation details) are identical to those described in \S~\ref{sec: assumptions-evaluation}, unless otherwise stated.

\subsection{Results and Discussion}

\paragraph{Image classification (MNIST, Fashion-MNIST)}

\begin{table}[t]
  \centering
  \caption{Test accuracy (\%) of DC methods as a function of anchor size $a$ on MNIST and Fashion-MNIST
  using an MLP classifier. We report the mean accuracy $\pm $ 95\% confidence interval over 100 iterations.}
  \label{tab:anchor-image}
  \resizebox{\textwidth}{!}{
  \begin{tabular}{lllcccc}
    \toprule
    Dataset & Condition & Method & $a=196$ & $392$ & $784$ & $1568$ \\
    \midrule
    \multirow{6}{*}{MNIST}
      & \multirow{3}{*}{\textbf{SameSpan-Orth}}
        & Imakura-DC~\cite{DCframework1}   & 95.74$\pm$0.12 & 95.92$\pm$0.14 & 95.65$\pm$0.14 & 95.59$\pm$0.11 \\
      &  & Kawakami-DC~\cite{KawakamiDC} & 94.78$\pm$0.15 & 94.63$\pm$0.21 & 94.11$\pm$0.24 & 93.41$\pm$0.25 \\
      &  & ODC  & \textbf{96.03$\pm$0.13} & \textbf{96.04$\pm$0.14} & \textbf{95.90$\pm$0.12} & \textbf{95.96$\pm$0.12} \\
    \cmidrule(lr){2-7}
      & \multirow{3}{*}{\textbf{DiffSpan-Orth}}
        & Imakura-DC~\cite{DCframework1}   & 93.33$\pm$0.14 & 94.49$\pm$0.17 & 94.61$\pm$0.14 & 94.47$\pm$0.13 \\
      &  & Kawakami-DC~\cite{KawakamiDC} & 91.81$\pm$0.21 & 92.27$\pm$0.27 & 91.22$\pm$0.34 & 90.32$\pm$0.26 \\
      &  & ODC  & \textbf{94.02$\pm$0.15} & \textbf{94.83$\pm$0.15} & \textbf{94.92$\pm$0.14} & \textbf{95.15$\pm$0.12} \\
    \midrule
    \multirow{6}{*}{Fashion-MNIST}
      & \multirow{3}{*}{\textbf{SameSpan-Orth}}
        & Imakura-DC~\cite{DCframework1}   & \textbf{86.08$\pm$0.23} & 86.22$\pm$0.22 & 85.91$\pm$0.22 & 85.70$\pm$0.19 \\
      &  & Kawakami-DC~\cite{KawakamiDC} & 84.32$\pm$0.29 & 83.93$\pm$0.26 & 83.23$\pm$0.31 & 82.69$\pm$0.24 \\
      &  & ODC  & 86.05$\pm$0.23 & \textbf{86.23$\pm$0.20} & \textbf{86.35$\pm$0.23} & \textbf{86.07$\pm$0.18} \\
    \cmidrule(lr){2-7}
      & \multirow{3}{*}{\textbf{DiffSpan-Orth}}
        & Imakura-DC~\cite{DCframework1}   & 80.22$\pm$0.35 & \textbf{82.19$\pm$0.30} & \textbf{82.57$\pm$0.29} & 82.76$\pm$0.25 \\
      &  & Kawakami-DC~\cite{KawakamiDC} & 78.82$\pm$0.37 & 79.71$\pm$0.26 & 79.79$\pm$0.30 & 80.42$\pm$0.25 \\
      &  & ODC  & \textbf{80.36$\pm$0.32} & 81.81$\pm$0.27 & 82.24$\pm$0.26 & \textbf{82.88$\pm$0.22} \\
    \bottomrule
  \end{tabular}
  }
\end{table}

Under \textbf{SameSpan-Orth}, all three DC methods are remarkably stable in $a$. For ODC, the accuracy ranges from $95.90\%$ to $96.04\%$ on MNIST and from $86.05\%$ to $86.35\%$ on Fashion-MNIST as $a$ increases by a factor of eight. Imakura-DC closely tracks ODC (within about $0.3$ percentage points on average), while Kawakami-DC is consistently $1$–$3$ points worse. Thus, once $a$ moderately exceeds the latent dimension~$\ell$, further enlarging $\matr{A}$ has a negligible impact on image-level utility for any of the DC schemes.

Under \textbf{DiffSpan-Orth}, accuracies drop for all methods due to span mismatch between parties, but the dependence on $a$ remains weak. Increasing $a$ from $196$ to $1568$ improves ODC by about $1.1$ percentage points on MNIST ($94.02\%\to95.15\%$) and $2.5$ points on Fashion-MNIST ($80.36\%\to82.88\%$), with Imakura-DC exhibiting similar gains (e.g., $93.33\%\to94.47\%$ on MNIST, $80.22\%\to82.76\%$ on Fashion-MNIST). Kawakami-DC lags further behind—roughly $2$–$4$ points below ODC across all $a$ in this harder regime. Averaged over anchor sizes, ODC and Imakura-DC are essentially tied (differences $\leq 0.5$ points), and both substantially outperform Kawakami-DC. For image classification, anchor size is therefore a second-order effect compared to the choice of basis-alignment method and the validity of the span assumptions.

\paragraph{Molecular classification (AMES) and anchor source}

\begin{table}[t]
  \centering
  \caption{ODC and baseline DC performance on AMES as a function of anchor size $a$ and anchor source.
  We report mean ROC-AUC / PR-AUC (\%) $\pm $ 95\% confidence intervals over 100 iterations.
  ``Uniform'' uses synthetic anchors with rows sampled i.i.d.\ from $\mathrm{Unif}[0,1)$; ``PubChem''
  uses unlabeled compounds from PubChem as rows of $\matr{A}$.}
  \label{tab:anchor-ames}
  \begin{tabular}{lllcccc}
    \toprule
    Condition & Anchor & Method & $a=512$ & $1024$ & $2048$ & $4096$ \\
    \midrule
    \multicolumn{7}{c}{ROC-AUC (\%) $\uparrow$} \\
    \midrule
    \multirow{6}{*}{\textbf{SameSpan-Orth}}
      & \multirow{3}{*}{Uniform}
        & Imakura-DC~\cite{DCframework1}   & 88.23$\pm$0.13 & 87.90$\pm$0.18 & 86.93$\pm$0.33 & 84.66$\pm$0.52 \\
      &  & Kawakami-DC~\cite{KawakamiDC} & 87.38$\pm$0.17 & 85.89$\pm$0.40 & 83.07$\pm$0.42 & 81.63$\pm$0.24 \\
      &  & ODC   & \textbf{88.64$\pm$0.10} & \textbf{88.59$\pm$0.12} & \textbf{88.68$\pm$0.11} & \textbf{88.71$\pm$0.10} \\
    \cmidrule(lr){2-7}
      & \multirow{3}{*}{PubChem}
        & Imakura-DC~\cite{DCframework1}   & 88.10$\pm$0.10 & 88.09$\pm$0.14 & 87.89$\pm$0.14 & 87.28$\pm$0.21 \\
      &  & Kawakami-DC~\cite{KawakamiDC} & 86.91$\pm$0.19 & 86.87$\pm$0.19 & 85.88$\pm$0.33 & 84.06$\pm$0.42 \\
      &  & ODC   & \textbf{88.73$\pm$0.09} & \textbf{88.63$\pm$0.12} & \textbf{88.61$\pm$0.13} & \textbf{88.66$\pm$0.10} \\
    \midrule
    \multirow{6}{*}{\textbf{DiffSpan-Orth}}
      & \multirow{3}{*}{Uniform}
        & Imakura-DC~\cite{DCframework1}   & 59.11$\pm$0.28 & 60.81$\pm$0.30 & 62.22$\pm$0.34 & 63.29$\pm$0.37 \\
      &  & Kawakami-DC~\cite{KawakamiDC} & 59.11$\pm$0.29 & 61.39$\pm$0.34 & 64.10$\pm$0.37 & 65.98$\pm$0.42 \\
      &  & ODC   & \textbf{61.84$\pm$0.30} & \textbf{64.34$\pm$0.31} & \textbf{66.78$\pm$0.28} & \textbf{68.65$\pm$0.30} \\
    \cmidrule(lr){2-7}
      & \multirow{3}{*}{PubChem}
        & Imakura-DC~\cite{DCframework1}   & 77.57$\pm$0.19 & 80.60$\pm$0.17 & 81.94$\pm$0.16 & 82.31$\pm$0.15 \\
      &  & Kawakami-DC~\cite{KawakamiDC} & 74.36$\pm$0.21 & 77.30$\pm$0.22 & 78.28$\pm$0.22 & 78.65$\pm$0.21 \\
      &  & ODC   & \textbf{84.51$\pm$0.13} & \textbf{85.68$\pm$0.11} & \textbf{86.37$\pm$0.11} & \textbf{86.60$\pm$0.11} \\
    \midrule
    \multicolumn{7}{c}{PR-AUC (\%) $\uparrow$} \\
    \midrule
    \multirow{6}{*}{\textbf{SameSpan-Orth}}
      & \multirow{3}{*}{Uniform}
        & Imakura-DC~\cite{DCframework1}   & 89.67$\pm$0.13 & 89.36$\pm$0.17 & 88.56$\pm$0.34 & 86.31$\pm$0.54 \\
      &  & Kawakami-DC~\cite{KawakamiDC} & 88.97$\pm$0.17 & 87.55$\pm$0.42 & 84.69$\pm$0.44 & 83.02$\pm$0.25 \\
      &  & ODC   & \textbf{89.96$\pm$0.10} & \textbf{89.91$\pm$0.12} & \textbf{90.00$\pm$0.11} & \textbf{90.00$\pm$0.10} \\
    \cmidrule(lr){2-7}
      & \multirow{3}{*}{PubChem}
        & Imakura-DC~\cite{DCframework1}   & 89.61$\pm$0.10 & 89.55$\pm$0.13 & 89.41$\pm$0.13 & 88.87$\pm$0.21 \\
      &  & Kawakami-DC~\cite{KawakamiDC} & 88.52$\pm$0.18 & 88.48$\pm$0.19 & 87.47$\pm$0.36 & 85.49$\pm$0.47 \\
      &  & ODC   & \textbf{90.08$\pm$0.10} & \textbf{89.85$\pm$0.11} & \textbf{89.89$\pm$0.12} & \textbf{89.95$\pm$0.09} \\
    \midrule
    \multirow{6}{*}{\textbf{DiffSpan-Orth}}
      & \multirow{3}{*}{Uniform}
        & Imakura-DC~\cite{DCframework1}   & 63.05$\pm$0.29 & 64.87$\pm$0.30 & 66.11$\pm$0.31 & 67.11$\pm$0.34 \\
      &  & Kawakami-DC~\cite{KawakamiDC} & 63.16$\pm$0.29 & 65.36$\pm$0.31 & 67.78$\pm$0.34 & 69.74$\pm$0.40 \\
      &  & ODC   & \textbf{65.51$\pm$0.31} & \textbf{67.87$\pm$0.31} & \textbf{70.33$\pm$0.28} & \textbf{72.19$\pm$0.29} \\
    \cmidrule(lr){2-7}
      & \multirow{3}{*}{PubChem}
        & Imakura-DC~\cite{DCframework1}   & 80.56$\pm$0.19 & 83.16$\pm$0.17 & 84.23$\pm$0.16 & 84.55$\pm$0.14 \\
      &  & Kawakami-DC~\cite{KawakamiDC} & 77.71$\pm$0.21 & 80.27$\pm$0.20 & 80.98$\pm$0.19 & 81.24$\pm$0.19 \\
      &  & ODC   & \textbf{86.51$\pm$0.13} & \textbf{87.53$\pm$0.12} & \textbf{88.15$\pm$0.11} & \textbf{88.32$\pm$0.11} \\
    \bottomrule
  \end{tabular}
\end{table}

Under \textbf{SameSpan-Orth}, all three methods lie close to the centralized oracle, and both the size and source of $\matr{A}$ have only a mild influence. ODC remains the strongest method, with ROC-AUC around $88.6\%$ and PR-AUC around $90.0\%$ across all anchor sizes and both anchor sources, while Imakura-DC trails by roughly
$1$–$2$ points and Kawakami-DC by $3$–$4$ points on average. The variation of ODC’s scores with $a$ is extremely small (at most $0.1$ points in ROC-AUC), confirming that when the span assumptions are satisfied, anchor construction is largely irrelevant for utility.

Under \textbf{DiffSpan-Orth}, anchor design becomes critical. With a synthetic Uniform anchor, all methods degrade substantially relative to the centralized model: for ODC, ROC-AUC ranges from $61.84\%$ to $68.65\%$ and PR-AUC from $65.51\%$ to $72.19\%$ as $a$ grows from $512$ to $4096$, while Imakura-DC and Kawakami-DC are roughly
$2$–$5$ points below ODC at each anchor size. Switching from Uniform to a PubChem anchor dramatically stabilizes all three methods, but especially ODC. For instance, at $a=4096$ ODC attains $86.60\%$ ROC-AUC and $88.32\%$ PR-AUC, compared with $82.31\%$ / $84.55\%$ for Imakura-DC and $78.65\%$ / $81.24\%$ for Kawakami-DC. Across anchor sizes, PubChem anchors improve ODC by roughly $18$–$22$ points over its Uniform counterpart in both ROC-AUC and PR-AUC, while yielding gains of $15$–$20$ points for the baselines. In this challenging span-mismatched regime, the proposed ODC framework consistently achieves the highest scores, outperforming Imakura-DC by approximately $4$–$7$ points and Kawakami-DC by about $8$–$10$ points.

\paragraph{Statistical analysis}
We now quantify how strongly anchor construction affects performance using formal hypothesis tests. For the \emph{anchor size} experiments, we test, for each task, span condition, method, and metric, the null hypothesis
\begin{equation}
H_{0,\text{size}}:\quad
\mathbb{E}[\text{metric} \mid a = a_1]
= \cdots =
\mathbb{E}[\text{metric} \mid a = a_K],
\end{equation}
i.e., that the mean performance is identical across all anchor sizes $a\in\{a_1,\dots,a_K\}$. For the AMES anchor \emph{source} experiments (Uniform vs.\ PubChem), we fit two-way ANOVA models with factors anchor source and anchor size and test, for each span condition, method, and metric, the null
\begin{equation}
H_{0,\text{source}}:\quad
\mathbb{E}[\text{metric} \mid \text{Uniform}]
=
\mathbb{E}[\text{metric} \mid \text{PubChem}],
\end{equation}
conditional on $a$. In all cases we report the $F$-statistic, $p$-value, and Cohen's $f$
($f\approx 0.10$ small, $f\approx 0.25$ medium, $f\approx 0.40$ large).

Tables~\ref{tab:anova-image} and ~\ref{tab:anova-ames} summarize the one-way ANOVA results for anchor size. On MNIST and Fashion-MNIST under \textbf{SameSpan-Orth}, ODC shows no statistically significant dependence on $a$ ($p>0.15$, $f<0.12$ on both datasets), while Imakura-DC exhibits small effects ($p<0.005$, $f\approx 0.18$--$0.19$) and Kawakami-DC moderate ones
($p<10^{-16}$, $f\approx 0.45$--$0.47$). Under \textbf{DiffSpan-Orth}, the null $H_{0,\text{size}}$ is rejected for all three methods, with large effect sizes (e.g., for ODC, $f\approx 0.57$ on MNIST and $0.67$ on Fashion-MNIST), although the corresponding changes in mean accuracy remain modest (on the order of $1$--$3$ percentage points as $a$ increases; Table~\ref{tab:anchor-image}). Thus, in the well-specified image regime, ODC is essentially invariant to $a$, whereas the contemporary methods show mild-to-moderate sensitivity; in the span-mismatched regime, all DC methods benefit from larger anchors, with ODC typically achieving the best accuracies.

On AMES, the one-way ANOVA reveals a similar pattern (Table~\ref{tab:anova-ames}). Under \textbf{SameSpan-Orth} with synthetic Uniform anchors, ODC again appears insensitive to $a$ ($p=0.40$ and $0.66$ for ROC-AUC and PR-AUC, $f\leq 0.09$), while Imakura-DC and Kawakami-DC show strong statistical dependence on $a$ with large effect sizes (e.g., for Uniform ROC-AUC, $f\approx 0.83$ for Imakura-DC and $1.37$ for Kawakami-DC). The same trend holds for PubChem anchors: ODC has at most a small effect of $a$ (e.g., PR-AUC $p=0.017$, $f=0.16$), whereas Imakura-DC and Kawakami-DC exhibit medium-to-large effects ($f\approx 0.39$--$0.76$). Under \textbf{DiffSpan-Orth}, $H_{0,\text{size}}$ is overwhelmingly rejected for all three methods and both anchor sources, with very large Cohen's $f$ values ($f\gtrsim 0.95$ and often $>1.5$), reflecting the substantial improvements in ROC-AUC and PR-AUC as $a$ grows from $512$ to $4096$ (Table~\ref{tab:anchor-ames}). Overall, anchor size is practically negligible for ODC in the well-specified setting, but it can matter substantially for Imakura-DC and Kawakami-DC and for \emph{all} methods under span mismatch.

To isolate the effect of anchor \emph{source} on AMES, Table~\ref{tab:anova-anchor-source} reports two-way ANOVA results for the main effect of anchor source (Uniform vs.\ PubChem), controlling for $a$. Under \textbf{SameSpan-Orth}, ODC does not distinguish between Uniform and PubChem anchors (ROC-AUC: $p=0.87$, $f=0.01$; PR-AUC: $p=0.53$, $f=0.02$), whereas Imakura-DC and Kawakami-DC show statistically significant but small-to-moderate effects ($f\approx 0.33$--$0.46$), corresponding to absolute differences of roughly $1$--$1.5$ percentage points between the two anchor distributions. Under \textbf{DiffSpan-Orth}, the anchor source has an extremely large impact for all three methods: for ROC-AUC and PR-AUC the null $H_{0,\text{source}}$ is rejected with $F$-statistics on the order of $10^4$ and Cohen's $f$ between $4.8$ and $8.9$, consistent with the $\approx 13$--$20$ point gains observed when replacing a synthetic Uniform anchor with a domain-matched PubChem anchor. Among the three DC schemes, ODC consistently achieves the largest absolute improvements with PubChem (around $18$--$20$ points in ROC-AUC and PR-AUC), and is also the most robust to anchor size when the span assumptions are satisfied.

\begin{table}[t]
  \centering
  \caption{One-way ANOVA on the effect of anchor size $a$ on MNIST and Fashion-MNIST performance
  with Uniform anchors. For each span condition and method, we report the
  $F$-statistic, $p$-value, and Cohen's $f$ for $H_{0,\text{size}}$.}
  \label{tab:anova-image}
  \begin{tabular}{lllccc}
    \toprule
    \multirow{2}{*}{Dataset} & \multirow{2}{*}{Condition} & \multirow{2}{*}{Method} & \multicolumn{3}{c}{Effect of anchor size $a$} \\
    \cmidrule(lr){4-6}
    & & & {$F$} & {$p$} & {Cohen's $f$} \\
    \midrule
    \multirow{6}{*}{MNIST}
      & \multirow{3}{*}{\textbf{SameSpan-Orth}}
        & Imakura-DC~\cite{DCframework1}   & 5.04   & 0.002    & 0.19  \\
      &  & Kawakami-DC~\cite{KawakamiDC} & 32.37  & $6.79 \times 10^{-19}$ & 0.47  \\
      &  & ODC  & 1.05   & 0.371    & 0.09  \\
    \cmidrule(lr){2-6}
      & \multirow{3}{*}{\textbf{DiffSpan-Orth}}
        & Imakura-DC~\cite{DCframework1}   & 59.41  & $3.38 \times 10^{-32}$ & 0.64  \\
      &  & Kawakami-DC~\cite{KawakamiDC} & 39.70  & $1.10 \times 10^{-22}$ & 0.52  \\
      &  & ODC  & 47.20  & $2.05 \times 10^{-26}$ & 0.57  \\
    \midrule
    \multirow{6}{*}{Fashion-MNIST}
      & \multirow{3}{*}{\textbf{SameSpan-Orth}}
        & Imakura-DC~\cite{DCframework1}   & 4.44   & 0.004    & 0.18  \\
      &  & Kawakami-DC~\cite{KawakamiDC} & 28.10  & $1.41 \times 10^{-16}$ & 0.45  \\
      &  & ODC  & 1.74   & 0.158    & 0.11  \\
    \cmidrule(lr){2-6}
      & \multirow{3}{*}{\textbf{DiffSpan-Orth}}
        & Imakura-DC~\cite{DCframework1}   & 59.30  & $5.56 \times 10^{-32}$ & 0.65  \\
      &  & Kawakami-DC~\cite{KawakamiDC} & 19.76  & $5.33 \times 10^{-12}$ & 0.38  \\
      &  & ODC  & 63.44  & $7.54 \times 10^{-34}$ & 0.67  \\
    \bottomrule
  \end{tabular}
\end{table}

\begin{table}[t]
  \centering
  \caption{One-way ANOVA on the effect of anchor size $a$ on AMES performance with Uniform and Pubchem anchors. For each span condition,
  metric, and method we report the $F$-statistic, $p$-value, and Cohen's $f$
  for $H_{0,\text{size}}$.}
  \label{tab:anova-ames}
  \begin{tabular}{lllccc}
    \toprule
    \multirow{2}{*}{Condition} & \multirow{2}{*}{Anchor} & \multirow{2}{*}{Method} & \multicolumn{3}{c}{Effect of anchor size $a$} \\
    \cmidrule(lr){4-6}
    & & & {$F$} & {$p$} & {Cohen's $f$} \\
    \midrule
    \multicolumn{6}{c}{ROC-AUC (\%)} \\
    \midrule
    \multirow{6}{*}{\textbf{SameSpan-Orth}}
      & \multirow{3}{*}{Uniform}
        & Imakura-DC~\cite{DCframework1}   & 91.55  & $5.08\times 10^{-45}$ & 0.83  \\
      &  & Kawakami-DC~\cite{KawakamiDC} & 248.96 & $9.34\times 10^{-91}$ & 1.37  \\
      &  & ODC         & 0.98   & 0.404                 & 0.09  \\
    \cmidrule(lr){2-6}
      & \multirow{3}{*}{PubChem}
        & Imakura-DC~\cite{DCframework1}   & 25.54  & $4.00\times 10^{-15}$ & 0.44  \\
      &  & Kawakami-DC~\cite{KawakamiDC} & 77.09  & $2.73\times 10^{-39}$ & 0.76  \\
      &  & ODC         & 0.93   & 0.425                 & 0.08  \\
    \midrule
    \multirow{6}{*}{\textbf{DiffSpan-Orth}}
      & \multirow{3}{*}{Uniform}
        & Imakura-DC~\cite{DCframework1}   & 119.68 & $3.54\times 10^{-55}$ & 0.95   \\
      &  & Kawakami-DC~\cite{KawakamiDC} & 278.68 & $3.30\times 10^{-97}$ & 1.45   \\
      &  & ODC         & 385.15 & $5.19\times 10^{-117}$ & 1.71  \\
    \cmidrule(lr){2-6}
      & \multirow{3}{*}{PubChem}
        & Imakura-DC~\cite{DCframework1}   & 626.14 & $6.98\times 10^{-150}$ & 2.18  \\
      &  & Kawakami-DC~\cite{KawakamiDC} & 318.73 & $3.34\times 10^{-105}$ & 1.55  \\
      &  & ODC         & 249.27 & $7.94\times 10^{-91}$  & 1.37  \\
    \midrule
    \multicolumn{6}{c}{PR-AUC (\%)} \\
    \midrule
    \multirow{6}{*}{\textbf{SameSpan-Orth}}
      & \multirow{3}{*}{Uniform}
        & Imakura-DC~\cite{DCframework1}   & 77.97  & $1.19\times 10^{-39}$ & 0.77  \\
      &  & Kawakami-DC~\cite{KawakamiDC} & 243.30 & $1.80\times 10^{-89}$ & 1.36  \\
      &  & ODC         & 0.54   & 0.656                 & 0.06  \\
    \cmidrule(lr){2-6}
      & \multirow{3}{*}{PubChem}
        & Imakura-DC~\cite{DCframework1}   & 20.33  & $2.83\times 10^{-12}$ & 0.39  \\
      &  & Kawakami-DC~\cite{KawakamiDC} & 74.20  & $4.24\times 10^{-38}$ & 0.75  \\
      &  & ODC         & 3.42   & 0.017                 & 0.16  \\
    \midrule
    \multirow{6}{*}{\textbf{DiffSpan-Orth}}
      & \multirow{3}{*}{Uniform}
        & Imakura-DC~\cite{DCframework1}   & 121.81 & $6.69\times 10^{-56}$ & 0.96   \\
      &  & Kawakami-DC~\cite{KawakamiDC} & 273.49 & $4.08\times 10^{-96}$ & 1.44   \\
      &  & ODC         & 364.78 & $1.47\times 10^{-113}$ & 1.66  \\
    \cmidrule(lr){2-6}
      & \multirow{3}{*}{PubChem}
        & Imakura-DC~\cite{DCframework1}   & 465.59 & $1.96\times 10^{-129}$ & 1.88  \\
      &  & Kawakami-DC~\cite{KawakamiDC} & 255.57 & $3.11\times 10^{-92}$  & 1.39  \\
      &  & ODC         & 191.34 & $1.12\times 10^{-76}$  & 1.20  \\
    \bottomrule
  \end{tabular}
\end{table}

\begin{table}[t]
  \centering
  \small
  \setlength{\tabcolsep}{5pt}
  \caption{Two-way ANOVA on the effect of anchor source (Uniform vs.\ PubChem) on AMES performance,
  controlling for anchor size $a$. For each span condition, method, and metric, we report the main-effect
  $F$-statistic, $p$-value, and Cohen's $f$ for the null hypothesis $H_{0,\text{source}}$ that Uniform and
  PubChem anchors yield identical mean performance.}
  \label{tab:anova-anchor-source}
  \begin{tabular}{
    l
    l
    l
    S[table-format=5.2]   
    c                     
    S[table-format=1.2]   
  }
    \toprule
    \multirow{2}{*}{Condition} &
    \multirow{2}{*}{Metric} &
    \multirow{2}{*}{Method} &
    \multicolumn{3}{c}{Effect of anchor source} \\
    \cmidrule(lr){4-6}
    & & & {$F$} & $p$ & {Cohen's $f$} \\
    \midrule
    \multirow{6}{*}{\textbf{SameSpan-Orth}}
      & \multirow{3}{*}{PR-AUC $\uparrow$}
      & Imakura-DC~\cite{DCframework1}  & 88.18    & $6.20\times 10^{-20}$ & 0.33 \\
      & & Kawakami-DC~\cite{KawakamiDC} & 143.53   & $1.63\times 10^{-30}$ & 0.43 \\
      & & ODC         & 0.40     & 0.526                  & 0.02 \\
      \cmidrule(lr){2-6}
      & \multirow{3}{*}{ROC-AUC $\uparrow$}
      & Imakura-DC~\cite{DCframework1}  & 97.21    & $1.04\times 10^{-21}$ & 0.35 \\
      & & Kawakami-DC~\cite{KawakamiDC} & 163.15   & $4.18\times 10^{-34}$ & 0.45 \\
      & & ODC         & 0.03     & 0.868                  & 0.01 \\
    \midrule
    \multirow{6}{*}{\textbf{DiffSpan-Orth}}
      & \multirow{3}{*}{PR-AUC $\uparrow$}
      & Imakura-DC~\cite{DCframework1}  & 39598.38 & $<10^{-300}$          & 7.07 \\
      & & Kawakami-DC~\cite{KawakamiDC} & 18249.18 & $<10^{-300}$          & 4.80 \\
      & & ODC         & 52065.06 & $<10^{-300}$          & 8.11 \\
      \cmidrule(lr){2-6}
      & \multirow{3}{*}{ROC-AUC $\uparrow$}
      & Imakura-DC~\cite{DCframework1}  & 42639.30 & $<10^{-300}$          & 7.34 \\
      & & Kawakami-DC~\cite{KawakamiDC} & 18904.45 & $<10^{-300}$          & 4.89 \\
      & & ODC         & 63204.41 & $<10^{-300}$          & 8.93 \\
    \bottomrule
  \end{tabular}
\end{table}

\paragraph{Privacy considerations}
Across all experiments, anchor construction does not alter the formal privacy guarantees of DC. The anchor dataset $\matr{A}$ is either synthetic or drawn from public unlabeled data, and only its projections $\matr{A}\matr{F}_i$ and the user-side representations $\tilde{\matr{X}}_i = \matr{X}_i\matr{F}_i$ participate in the collaboration protocol. As a result, the threat surfaces analyzed in \S~\ref{subsection: DCprivacy} (e.g., collusion and reconstruction attacks) remain governed by the secret bases $\matr{F}_i$ and the latent dimension~$\ell$, rather than by the particular sampling distribution or size of $\matr{A}$. The anchor ablations in this section, therefore, show that anchor design is a powerful knob for improving utility in difficult span-mismatched settings—especially for ODC—without introducing additional privacy risks.

\section{Conclusion}
\label{sec:conclusion}

In this paper, we revisited the Data Collaboration (DC) paradigm and identified a key gap between existing theory
and practice. Although weak concordance guarantees that parties can be aligned to a common subspace, downstream
performance can still vary substantially with the analyst’s \emph{choice of target basis}. To resolve this instability,
we proposed \textbf{Orthonormal Data Collaboration (ODC)}, which enforces orthonormality on both secret and
target bases. Under this constraint, basis alignment reduces exactly to the classical Orthogonal Procrustes
Problem~\cite{procrustessolution}, yielding a closed-form solution and improving the alignment complexity from
\begin{equation}
  O\bigl(\min\{a(c\ell)^2,\;a^2c\ell\}\bigr)
  \;\rightarrow\;
  O(ac\ell^2),
\end{equation}
where $a$ is the anchor size, $\ell$ the latent dimension, and $c$ the number of parties.

Our empirical results validate these computational gains. On the controlled efficiency benchmark (Fig.~\ref{fig:time_scaling}), ODC consistently outperformed contemporary DC alignments with speed-ups ranging from roughly $6\times$ to over $100\times$; for example, at $a=20{,}000$ the median runtime dropped from $\approx 50\,\mathrm{s}$ (baselines) to $0.47\,\mathrm{s}$ (ODC), i.e., more than two orders of magnitude. Moreover, the scaling experiments demonstrate that ODC remains practical even as the number of users increases (Fig.~\ref{fig:vary_c_time}), rendering the alignment phase typically negligible in realistic deployments.

Beyond speed, ODC’s central benefit is \emph{stability}. We proved that ODC’s orthogonal change-of-basis matrices satisfy \emph{orthogonal concordance}, aligning all parties’ representations up to a common orthogonal transform and thereby preserving distances and inner products. This theoretical invariance is reflected in the concordance experiments (Tables~\ref{tab:span_concordance_same}--\ref{tab:span_concordance_comparison}): replacing the identity target with a random \emph{invertible} target in Imakura-DC causes consistent and statistically significant accuracy drops (up to $3$--$4$ percentage points), whereas ODC is essentially invariant to the choice of orthogonal target---exactly so for SVMs and nearly so for MLPs under the orthonormal setting.

Across application benchmarks, ODC matches or improves upon existing DC methods when orthonormality holds (Tables~\ref{tab: assumptions-all}--\ref{tab: CelebA accuracy}). Under the realistic \textbf{DiffSpan-Orth} condition (heterogeneous spans but orthonormal bases), ODC remains competitive and often exceeds the DC baselines on biomedical tasks, while the ablations also make the practical requirement clear: violating orthonormality leads to marked degradation for ODC (Tables~\ref{tab: assumptions-all}--\ref{tab: CelebA accuracy}). This directly motivates the use of standard orthonormal basis construction (e.g., PCA/SVD/QR), which is already common in DC pipelines.

Finally, ODC compares favorably to representative PPML baselines while retaining DC’s one-shot communication pattern. On CelebA, DC-style projections provide strong visual obfuscation and reduce FaceNet-based identifiability toward chance (Table~\ref{tab: celeba-identifiability}), while maintaining accuracy competitive with high-$\varepsilon$ DP noise and substantially better than stricter DP settings (Table~\ref{tab: CelebA accuracy}). On eICU regression, ODC under \textbf{DiffSpan-Orth} achieves RMSE close to FedAvg while avoiding iterative communication (Table~\ref{tab: eiCU rmse}). Section~\ref{sec:anchor-construction} further shows that ODC’s behavior with respect to anchor design is well-behaved: when the span assumptions hold, it is essentially insensitive to anchor size and source---image accuracies move by at most about $0.3$ percentage points across an $8\times$ sweep in $a$ (Table~\ref{tab:anchor-image}), and AMES ROC-/PR-AUC under \textbf{SameSpan-Orth} are nearly flat in both $a$ and between synthetic and PubChem anchors (Table~\ref{tab:anchor-ames})---whereas under the more realistic \textbf{DiffSpan-Orth} regime, anchor construction becomes a powerful lever, with domain-matched PubChem anchors improving ODC by roughly $18$--$22$ ROC-/PR-AUC points over synthetic anchors on AMES (Table~\ref{tab:anchor-ames}) without changing DC’s semi-honest privacy model.

Promising future directions include strengthening privacy beyond the semi-honest model (e.g., collusion-robust variants and principled DP integration on released representations), extending orthonormal alignment ideas to nonlinear mappings and partially overlapping feature spaces, and \emph{deepening} deployment guidelines (e.g., automatic strategies for selecting $\ell$ and anchor schemes under task-specific constraints on utility, communication, and attack surfaces).

\subsection{Practical deployment and systems considerations}
\label{sec:deployment}

While ODC is motivated by a mathematical gap in basis alignment, its design targets \emph{deployable} cross-silo workflows where parties cannot centralize raw records and where iterative communication (e.g., FL-style rounds) is expensive or operationally infeasible. A semi-realistic end-to-end deployment pipeline typically consists of three stages: (i) a one-off anchor bootstrapping step in which the consortium agrees on a shared anchor $\matr{A}$ and each site constructs its secret basis $\matr{F}_i$; (ii) a single DC round in which users compute and upload $(\tilde{\matr{X}}_i,\matr{A}_i,\matr{L}_i)$, and the analyst performs ODC alignment and model training; and (iii) a deployment step where each site receives $(\matr{G}_i,h)$ and applies $h(\matr{Y}_i \matr{F}_i \matr{G}_i)$ to local test data. Table~\ref{tab:deployment_checklist} distills the main engineering and governance decisions that a practitioner must finalize. Below, we highlight practical considerations that complement our analyses of complexity, memory, and communication (e.g., \S~\ref{subsec:communication}).

\paragraph{Cold-start anchor generation (one-off)}
A deployment must decide how to instantiate the shared anchor matrix $\matr{A}\in\mathbb{R}^{a\times m}$ at “cold start,” corresponding to the initial stage above. Common options are: \emph{(i) synthetic anchors} (e.g., i.i.d.\ random or dummy feature vectors), \emph{(ii) public/unlabeled anchors} from an external source (domain-dependent), or \emph{(iii) seed-based anchors} where all parties deterministically generate the same $\matr{A}$ from a shared PRNG seed (avoiding repeated cross-silo distribution). Independently of the source, the engineering requirement is to satisfy the rank and conditioning assumptions used by DC/ODC (e.g., $\mathrm{rank}(\matr{A})=m$ and typically $a>m$), while keeping $\matr{A}$ non-sensitive (not derived from private records) to avoid governance complications. These choices and their documentation are captured in the “Anchor plan” row of Table~\ref{tab:deployment_checklist}.

\paragraph{Resource constraints: compute, memory, and network}
ODC’s alignment workload is dominated by per-party $\ell\times\ell$ Procrustes/SVD computations and $\ell\times a$ by $a\times \ell$ products, which are parallelizable and friendly to modern BLAS kernels. Peak memory at the analyst is driven by storing the transformed anchors $\{\matr{A}_i\}_{i=1}^c$ (order $\Theta(ac\ell)$), as indicated in the “Compute/memory” row of Table~\ref{tab:deployment_checklist}. For networking, DC/ODC remains a one-shot protocol: parties upload $(\tilde{\matr{X}}_i,\matr{A}_i,\matr{L}_i)$ once and receive $(\matr{G}_i,h)$ once. The practical traffic budget can be estimated directly from the communication model in \S~\ref{subsec:communication}; the existing healthcare example (100 hospitals with a ResNet-50) illustrates how this translates to concrete GB-scale volumes and highlights when DC/ODC is advantageous under bandwidth and RTT constraints. The “Network budget” and “Numerics” rows in Table~\ref{tab:deployment_checklist} summarize the associated design choices (precision, quantization, batching, compression).

\paragraph{Cross-sector deployment sketches}
The same pipeline applies across sectors: only the local site semantics (data type, governance regime) change. Typical instantiations include:
\begin{itemize}
  \item \textbf{Cross-institution consortia (general):} organizations keep data on-prem and share only projected representations for joint modeling; ODC’s closed-form alignment supports predictable runtime and avoids multi-round coordination overhead.
  \item \textbf{Medical / multi-hospital:} typical targets include imaging, ICU risk, or outcomes prediction across hospitals with strict data governance. One-shot transfer and explicit communication accounting (\S~\ref{subsec:communication}) support feasibility checks under hospital network constraints.
  \item \textbf{Power / utilities:} operators can collaborate on forecasting or anomaly detection without exchanging raw operational traces; one-shot protocols reduce persistent connectivity requirements and simplify segmented network operations.
  \item \textbf{Finance:} banks and insurers can collaborate on fraud or risk models while maintaining data locality; deployment often requires strict auditability and controls on what representations and metadata leave each institution.
\end{itemize}

\paragraph{Compliance and governance considerations (non-exhaustive)}
ODC inherits the same privacy model as the underlying DC protocol (semi-honest analyst/users), so a deployment should explicitly document (i) the assumed threat model (including whether collusion is in scope), (ii) whether additional protections are required (e.g., encryption in transit/at rest, access control, audit logging, retention limits), and (iii) whether a formal privacy mechanism (e.g., DP layered on released representations) is necessary for the application’s regulatory posture. Table~\ref{tab:deployment_checklist} can serve as a concise checklist to ensure that these governance and engineering decisions are explicitly documented before a system goes into production.

\begin{table}[t]
\centering
\caption{Deployment checklist for ODC/DC (engineering and governance).}
\label{tab:deployment_checklist}
\begin{tabular}{p{0.28\linewidth} p{0.67\linewidth}}
\hline
\textbf{Item} & \textbf{What to decide / verify in practice} \\
\hline
Threat model &
Confirm semi-honest assumptions; decide whether collusion is in scope and whether additional protections are needed
(e.g., DP on released representations, encryption in transit/at rest) are required. \\
Anchor plan &
Synthetic/public/seed-based anchor; document cold-start distribution (topology, replication factor, and
whether seeding avoids cross-silo transfer). \\
Basis selection &
Enforce orthonormal $\matr{F}_i$ (PCA/SVD/QR); choose latent dimension $\ell$ to balance utility against
communication volume and collusion surface. \\
Numerics &
Choose precision/quantization (e.g., FP32/FP16/int8) for uplink; validate alignment stability for $\ell\times\ell$
matrices and heterogeneous sites. \\
Network budget &
Estimate uplink/downlink traffic using the DC accounting in \S~\ref{subsec:communication}; plan batching,
compression, and tolerance to intermittent or low-bandwidth links. \\
Compute/memory &
Analyst memory is dominated by storing $\matr{A}_i$ for all $i \in [c]$ (order $\Theta(ac\ell)$); consider
streaming or parallelization across sites; document expected wall-clock runtime for alignment and training. \\
\hline
\end{tabular}
\end{table}

\section*{CRediT authorship contribution statement}
\label{CRediT}

\textbf{Keiyu Nosaka:} Conceptualization, Methodology, Software, Validation, Investigation, Data curation, Writing – original draft, Visualization, Funding acquisition, Project administration. 

\textbf{Yamato Suetake:} Software, Writing – original draft.

\textbf{Yuichi Takano:} Writing – review \& editing, Supervision.

\textbf{Akiko Yoshise:} Writing – review \& editing, Supervision, Funding acquisition.

\section*{Declaration of Competing Interest}
\label{Interest}

The authors declare that they have no known competing financial interests or personal relationships that could have influenced the work reported in this paper.

\section*{Declaration of Generative AI and AI-Assisted Technologies in the Writing Process}
\label{GenerativeAI}

During the preparation of this work, the authors employed ChatGPT and Grammarly to enhance the readability and language of the manuscript. Following the use of these services, the authors thoroughly reviewed and edited the content, assuming full responsibility for the final published article.

\section*{Data Availability}
All datasets used in our experiments are publicly available (see the corresponding citations). The code used to conduct the experiments is available at the following URL:

\url{https://drive.google.com/drive/folders/1PwXVarZ7mCoDZVzOdYDuZ7jSpILBsXBj?usp=sharing}

\section*{Funding Sources}

This work was partially supported by the Japan Society for the Promotion of Science (JSPS) KAKENHI under Grant Numbers JP22K18866, JP23K26327, and JP25KJ0701, as well as the Japan Science and Technology Agency (JST) under Grant Number JPMJSP2124.


\appendix

\end{document}